\newif\iffullver
\newif\ifsubmit
\newtheorem{theorem}{Theorem}
\newtheorem{lemma}{Lemma}
\newtheorem{proposition}{Proposition}
\newtheorem{corollary}{Corollary}
\newcommand{\mathds}[1]{\mathbb{#1}}
\newcommand{\beq}{\begin{equation}}
\newcommand{\eeq}{\end{equation}}
\newcommand{\R}{\mathds{R}}
\newcommand{\cE}{{\mathcal E}}
\newcommand{\cF}{{\mathcal F}}
\newcommand{\cG}{{\mathcal G}}
\newcommand{\cH}{{\mathcal H}}
\newcommand{\cN}{{\mathcal N}}
\newcommand{\cR}{{\mathcal R}}
\newcommand{\cT}{{\mathcal T}}
\newcommand{\cV}{{\mathcal V}}
\newcommand{\cW}{{\mathcal W}}
\newcommand{\cX}{{\mathcal X}}
\newcommand{\cY}{{\mathcal Y}}
\newcommand{\va}{{\mathbf a}}
\newcommand{\vb}{{\mathbf b}}
\newcommand{\vg}{{\mathbf g}}
\newcommand{\vr}{{\mathbf r}}
\newcommand{\vs}{{\mathbf s}}
\newcommand{\vx}{{\mathbf x}}
\newcommand{\vy}{{\mathbf y}}
\newcommand{\vv}{{\mathbf v}}
\newcommand{\vw}{{\mathbf w}}
\newcommand{\vz}{{\mathbf z}}
\newcommand{\vA}{{\mathbf A}}
\newcommand{\vD}{{\mathbf D}}
\newcommand{\vI}{{\mathbf I}}
\newcommand{\vX}{{\mathbf X}}
\newcommand{\vP}{{\mathbf P}}
\newcommand{\vS}{{\mathbf S}}
\newcommand{\vV}{{\mathbf V}}
\newcommand{\vxi}{{\boldsymbol \xi}}
\newcommand{\vlam}{{\boldsymbol \lambda}}
\newcommand{\vtau}{{\boldsymbol \tau}}
\newcommand{\vnu}{{\boldsymbol \nu}}
\newcommand{\Glam}{{\cG}}
\newcommand{\GlamDual}{{\cG^D}}
\newcommand{\dom}{\textnormal{dom}\,}
\newcommand{\wi}[1]{|w|_{(#1)}}
\DeclareMathOperator*{\argmin}{arg\,min}
\DeclareMathOperator*{\argmax}{arg\,max}
\newcommand{\prox}{\textnormal{prox}}
\definecolor{ejc}{RGB}{255,0,0}
\begin{document}

%
\runningtitle{Fast Saddle-Point Algorithm for Generalized Dantzig
  Selector and FDR Control with Ordered l1-Norm}

%

\twocolumn[

\aistatstitle{Fast Saddle-Point Algorithm for Generalized Dantzig
  Selector and FDR Control with the Ordered l1-Norm}

\ifsubmit
\aistatsauthor{ Anonymous Author 1 \And Anonymous Author 2 \And Anonymous Author 3 }
\aistatsaddress{}
\else
\aistatsauthor{ Sangkyun Lee \And Damian Brzyski \And Malgorzata Bogdan }
\aistatsaddress{ Technische Universit\"at Dortmund \And Jagiellonian University \And University of Wroclaw}
\fi

]

\begin{abstract} 
In this paper we propose a primal-dual proximal
extragradient algorithm to solve the generalized Dantzig selector
(GDS) estimation problem, based on a new convex-concave saddle-point
(SP) reformulation. Our new formulation makes it possible to adopt
recent developments in saddle-point optimization, to achieve the
optimal $O(1/k)$ rate of convergence. Compared to the optimal non-SP
algorithms, ours do not require specification of sensitive parameters
that affect algorithm performance or solution quality. We also provide
a new analysis showing a possibility of local acceleration to achieve
the rate of $O(1/k^2)$ in special cases even without strong convexity
or strong smoothness. As an application, we propose a GDS equipped
with the ordered $\ell_1$-norm, showing its false discovery rate
control properties in variable selection.  Algorithm performance is
compared between ours and other alternatives, including the linearized
ADMM, Nesterov's smoothing, Nemirovski's mirror-prox, and the
accelerated hybrid proximal extragradient techniques.
\end{abstract}

\section{INTRODUCTION}

The Dantzig selector~\citep{CanT07} has been proposed as an
alternative approach for penalized regression, mainly in the context
of sparse or group sparse regression in high dimensions. A generalized
Dantzig selector (GDS)~\citep{ChaC14} has been recently proposed
extending the original Dantzig selector, to use any norm $\cR(\cdot)$
for regularization and its dual norm $\cR^D(\cdot)$ for measuring
estimation error. For linear models of the form $\vy = \vX\vw^* +
\vxi$, where $\vy \in \R^{n}$ contains observations, $\vX \in
\R^{n\times p}$ is a design matrix, and $\vxi$ is an i.i.d. standard
Gaussian noise vector, the GDS searches for the best parameter solving
the following problem with a constant $c > 0$:
\begin{equation}\label{eq:gds}
 \min_{\vw \in \R^p} \;\; \cR(\vw) \;\; \text{s.t.} \;\; \cR^D(\vX^T(\vy - \vX\vw)) \le c .
\end{equation}
The original Dantzig selector is attained when
$\cR(\cdot)=\|\cdot\|_1$ and $\cR^D(\cdot) = \|\cdot\|_\infty$.  The
GDS requires to solve a non-separable and non-smooth convex
optimization problem, which does not contain any strongly smooth part
(with Lipschitz continuous gradients) required to apply (accelerated)
proximal gradient methods~\citep{Nes83,BecT09}. Subgradient
methods~\citep{Shor85} can be applied, but their very slow
$O(1/\sqrt{k})$ convergence rate (for an iteration counter $k$) is not
desirable for practical use.

\citet{ChaC14} proposed an algorithm to solve \eqref{eq:gds} based on
a linearized version of alternating direction method of multipliers
(L-ADMM)~\citep{WanB14,WanY12}, of which two subproblems are
simplified to two proximal operations thanks to linearization and fast
projection: regarding the latter, projection was onto the dual ball
defined with $\cR^D(\cdot)$ and therefore can be easily computed via
the proximal operator of $\cR(\cdot)$ and Moreau's
identity~\citep{Roc97}. The algorithm exhibits $O(1/k)$ convergence
rate when its penalty parameter is set to a value at least
$\|\vX\|_2^4$~\citep{ChaC14,WanB14}.  However, its practical
performance tends to be quite sensitive to the parameter, whose best
value is not easy to determine a priori running the algorithm.

Recently, there have been attractive improvements in ADMM, although
they are not applicable to our problem due to their extra
requirements. Local linear convergence has been shown for
ADMM, but for the limited cases of minimizing a quadratic objective
under linear constraints~\citep{Bol13}, or minimizing a sum of
strongly convex smooth functions~\citep{ShiL14}. Accelerated versions
of ADMM recently appeared achieving a better $O(1/k^2)$ rate, however,
with an assumption that the objective is strongly convex in case of
ADMM~\citep{GolO14,KadC15}, or with a smoothness assumption of the
part to be linearized in case of L-ADMM~\citep{OuyC15}.

The GDS problem~\eqref{eq:gds} can also be solved using the smoothing
technique due to \citet{Nes05a}. It is based on creating a smooth
approximation of a non-smooth function by adding a strongly convex
regularizer to the conjugate of the non-smooth function, where the
strong convexity is modulated by a parameter $\mu>0$. It is shown that
the smooth approximation has Lipschitz continuous gradients and
therefore can be optimized via accelerated gradient
methods~\citep{Nes83}.  The smoothing technique achieves $O(1/k)$ rate
of convergence when $\mu = O(\epsilon)$~\citep[Theorem 3]{Nes05a} for
an optimality gap $\epsilon$. However, using small values of $\mu$ to
achieve a near-optimal solution tends to slow down the algorithm quite
significantly in practice. Implementations of Nesterov's smoothing
such as TFOCS~\citep{BecC11} require users to specify this parameter
with only little guidance.

In this paper, we propose a new convex-concave saddle-point (CCSP)
formulation of the GDS, in fact a slightly more generalized version of
it to allow for using any convex function for regularization. Our
reformulation allows us to provide a fast and simple algorithm to find
solutions of GDS instances, achieving the optimal $O(1/k)$ convergence
rate without relying on sensitive parameters affecting convergence or
solution quality. Our algorithm is applied to a new kind of GDS
defined with the ordered $\ell_1$-norm: we prove its false discovery
rate control properties in variable selection, where the norm itself
has been recently studied in other contexts~\citep{BogB13,BogB15,FigN14}.

We show that our proposed algorithm suits better than existing solvers
when high-precision solutions are desired for accurate variable
selection, for example in statistical simulation studies. We denote
the Euclidean norm by $\|\cdot\|$ and inner products by $\langle
\cdot, \cdot \rangle$.

\section{CONVEX-CONCAVE SADDLE-POINT FORMULATION}

\subsection{(More) Generalized Dantzig Selector}

In this paper we consider a slightly more general form of the GDS
problem~\eqref{eq:gds},
\begin{equation}\label{eq:gds.2}
\text{(GDS)} \quad  \min_{\vw \in \R^p} \;\; \cF(\vw) \;\; \text{s.t.} \;\; \GlamDual(\vX^T (\vy - \vX\vw) ) \le 1.
\end{equation}
where $\cF: \R^p \to (-\infty, +\infty]$ is a proper, convex, and
lower-semicontinuous (l.s.c.) function, and $\GlamDual(\cdot)$ is the
dual norm of a norm $\Glam(\cdot)$, possibly parametrized by a vector
$\vlam$. Unlike \eqref{eq:gds}, $\Glam$ is not necessarily the same as
$\cF$, and also $\cF$ does not have to be a norm. Neither $\cF$ nor
$\Glam$ is assumed to be differentiable.

\subsection{Reformulation}

Denoting by $C_{\GlamDual}$ the constraint set of residuals in
\eqref{eq:gds.2},
$$
C_{\GlamDual} := \{ \vr \in \R^p \;:\; \GlamDual(\vr) \le 1 \},
$$
and using an indicator function $\vartheta_{C_{\GlamDual}}(\vr)$, which
returns $0$ if $\vr \in C_{\GlamDual}$ or $+\infty$ otherwise, it is
trivial to see the GDS problem~\eqref{eq:gds.2} can be restated as,
\begin{equation}\label{eq:gds.reg}
 \min_{\vw \in \R^p} \;\; \cF(\vw) + \vartheta_{C_{\GlamDual}} (\vX^T (\vy - \vX\vw)) .
\end{equation}
Now, we invoke a simple lemma to replace the indicator function with
its adjoint form.

\begin{lemma}\label{lem:conj.ind}
For any $\vw \in\R^p$, we have
$$
 \vartheta_{C_{\GlamDual}} (\vX^T (\vy - \vX\vw)) 
= \max_{\vv \in \R^p} \, \left\langle \vA  \begin{bmatrix} \vy \\ \vw \end{bmatrix}, \vv \right\rangle - \Glam(\vv) ,
$$
where $\vA:= \vX^T\begin{bmatrix} \vI_{n} & -\vX\end{bmatrix} \in
\R^{p\times (n+p)}$ and $\vI_{n}$ is the $n\times n$ identity matrix.
\end{lemma}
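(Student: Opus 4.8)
The plan is to recognize the right-hand side as the Fenchel conjugate of the norm $\Glam$ evaluated at the residual vector, and then to invoke the classical identification of the conjugate of a norm with the indicator function of the dual-norm unit ball.

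First I would carry out the block multiplication defining $\vA$: since $\vA = \vX^T\begin{bmatrix}\vI_n & -\vX\end{bmatrix}$, one has $\vA\begin{bmatrix}\vy\\\vw\end{bmatrix} = \vX^T(\vI_n\vy - \vX\vw) = \vX^T(\vy-\vX\vw)$, so the claimed identity collapses to showing, with $\vr := \vX^T(\vy-\vX\vw)$, that $\vartheta_{C_{\GlamDual}}(\vr) = \sup_{\vv\in\R^p}\{\langle\vr,\vv\rangle - \Glam(\vv)\}$, the right side being by definition $\Glam^*(\vr)$, the convex conjugate of $\Glam$.

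Then I would split into two cases according to whether $\vr\in C_{\GlamDual}$. If $\GlamDual(\vr)\le 1$, the generalized Cauchy--Schwarz inequality $\langle\vr,\vv\rangle\le\GlamDual(\vr)\,\Glam(\vv)\le\Glam(\vv)$ shows the expression under the supremum is $\le 0$ for all $\vv$ and $=0$ at $\vv=0$; hence the value is $0=\vartheta_{C_{\GlamDual}}(\vr)$ and the supremum is attained, which licenses writing $\max$. If $\GlamDual(\vr) > 1$, then using $\GlamDual(\vr) = \sup_{\Glam(\vv)\le 1}\langle\vr,\vv\rangle$ I would pick $\vv_0$ with $\Glam(\vv_0)\le 1$ and $\langle\vr,\vv_0\rangle > 1$, and let $t\to+\infty$ along $\vv=t\vv_0$: then $\langle\vr,t\vv_0\rangle - \Glam(t\vv_0)\ge t(\langle\vr,\vv_0\rangle-1)\to+\infty$, so the value is $+\infty = \vartheta_{C_{\GlamDual}}(\vr)$. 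Combining the two cases gives the lemma.

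I do not anticipate a genuine obstacle: this is a standard conjugacy computation, and the only subtlety worth a sentence is writing $\max$ rather than $\sup$ (the supremum is attained at $\vv=0$ in the finite case, and equals $+\infty$ by convention otherwise). Alternatively, one can simply cite the textbook fact $\Glam^* = \vartheta_{\{\vv\,:\,\GlamDual(\vv)\le 1\}}$ once the matrix identity $\vA\begin{bmatrix}\vy\\\vw\end{bmatrix} = \vX^T(\vy-\vX\vw)$ has been recorded.
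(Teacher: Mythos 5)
Your proof is correct, and it takes a route that is recognizably different from the paper's. The paper starts from the indicator side: it invokes the Fenchel--Moreau biconjugation $\vartheta_{C_{\GlamDual}} = \vartheta^{\star\star}_{C_{\GlamDual}}$ (valid because the indicator of a closed convex set is proper, convex, l.s.c.), computes $\vartheta^{\star}_{C_{\GlamDual}} = \max_{\GlamDual(\vw')\le 1}\langle \vw',\cdot\rangle$, identifies this support function as the dual of the dual norm, i.e.\ $\Glam$, and then reads off the claimed identity from the biconjugate formula. You instead start from the norm side: after recording the same block-multiplication identity $\vA[\vy;\vw] = \vX^T(\vy-\vX\vw)$, you verify by hand that $\Glam^{\star} = \vartheta_{C_{\GlamDual}}$ via the two-case analysis ($\GlamDual(\vr)\le 1$ versus $\GlamDual(\vr)>1$). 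The two arguments encode the same conjugate pair, but yours is more elementary and self-contained: it needs only the generalized Cauchy--Schwarz inequality and the definition of the dual norm, avoiding both the biconjugation theorem and the ``double dual norm equals the norm'' fact. It also explicitly settles a point the paper leaves implicit, namely that the supremum in the lemma is legitimately written as a $\max$ (attained at $\vv=\mathbf{0}$ in the finite case, $+\infty$ by convention otherwise). The paper's route is shorter if one is willing to cite the standard convex-analysis facts, and its biconjugation template is reused later for the loss function $\cE$ in the related-works discussion, which is presumably why the authors chose it.
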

\begin{proof}
  Since $\vartheta_{C_{\GlamDual}}$ is an indicator function on a
  closed set, we have $\vartheta_{C_{\GlamDual}} (\cdot) =
  \vartheta^{\star\star}_{C_{\GlamDual}} (\cdot)$ with the biconjugation
$$
  \vartheta^{\star\star}_{C_{\GlamDual}} (\vr)  = \sup_{\vv \in \R^p} \{ \langle \vr, \vv \rangle - \vartheta^\star_{C_\GlamDual}(\vv) \} .
$$
Also, from conjugacy, $\vartheta^\star_{C_\GlamDual}(\cdot) = \sup_{\vw'
  \in \R^p} \; \langle \vw', \cdot \rangle - \vartheta_{C_\GlamDual} (\vw') =
\max_{\vw': \GlamDual(\vw') \le 1} \langle \vw', \cdot \rangle $, 
which is by definition the dual norm of $\GlamDual(\cdot)$,
i.e., $\Glam(\cdot)$. The result follows when we set $\vr = \vX^T (\vy
- \vX\vw)$.
\end{proof}

The following convex-concave saddle-point reformulation of the
GDS~\eqref{eq:gds.2} follows when we apply the above lemma to
\eqref{eq:gds.reg},
\begin{equation}\label{eq:gds.sp}
\text{(GDS-SP)} \;\; \min_{\vw\in\R^p}\max_{\vv \in \R^p} \,
\left\langle \vA \begin{bmatrix} \vy \\ \vw \end{bmatrix}, \vv \right\rangle 
+ \cF(\vw) - \Glam(\vv) .
\end{equation}



This reformulation allows us to benefit from recent developments in
saddle-point optimization, including our algorithm discussed
later. Hereafter, we assume that both $\cF$ and $\Glam$ are simple, so
that their {\em proximal operator}, defined below for $\cF$, can be
computed efficiently:
\begin{align*}
      \prox_{\cF} (\vz) 
     := \argmin_{\vw'} \; \left\{ \frac{1}{2}\|\vw' - \vz\|^2 + \cF(\vw') \right\} .
\end{align*}
Note that it suffices to meet this requirement for either $\cF$ or its
conjugate $\cF^\star$ (similarly for $\Glam$ or $\Glam^\star$), since
the prox operation for one can be computed by that of the other by
Moreau's identity~\citep{Roc97}, i.e., $ \vz = \prox_{\cF} (\vz) +
\prox_{\cF^\star} (\vz) $.

\subsection{Related Works}

It is worthwhile to note that the Tikhonov-type formulation of the
GDS~\eqref{eq:gds.reg} is closely related to the popular regularized
estimation problems in machine learning and statistics,
$$
 \min_{\vw \in \cW} \;\; \cF(\vw) + \cE(\vD\vw) ,
$$
where $\vD$ is a data matrix and $\cE$ is a proper convex
l.s.c. loss function. Using biconjugation of $\cE$ similarly to the proof
of Lemma~\ref{lem:conj.ind}, this can be reformulated as the following
convex-concave saddle-point problem,
$$
 \min_{\vw \in \cW}\max_{\vv \in \cV} \;\; \phi(\vw,\vv) := \langle \vD\vw, \vv \rangle + \cF(\vw)  - \cE^\star(\vv) ,
$$
given that a maximizer in $\cV$ can be attained (in our case it is
true as $\cV = \{\vw': \GlamDual (\vw') \le 1\}$ is compact).

This type of reformulation has been studied quite recently in machine
learning to design new algorithms. For example, \citet{ZhaX15}
proposed a stochastic primal-dual coordinate descent (SPDC) algorithm
based on a saddle-point reformulation for the case when $\cF$ is
strongly convex and $\cE$ is a sum of smooth loss functions with
Lipschitz continuous gradients, in which case the conjugate
$\cE^\star$ becomes strongly convex~\citep[Proposition 12.60]{Roc04}:
both do not hold in case of the GDS. Although SPDC can be extended for
nonsmooth cases by augmenting strongly convex terms, then it shares
similar issues to Nesterov's smoothing that a parameter needs
to be specified depending on an unknown quantity $\|\vw^*\|$ when
$(\vw^*,\vv^*)$ is a saddle point.

Another example is \citet{TasLJ06} who considered a saddle-point
reformulation of max-margin estimation for structured output
prediction and proposed an algorithm more memory efficient than its
quadratic program alternative, based on the dual extragradient
technique of \citet{Nes07d}. The dual extragradient method itself is
closely related to our method, but it additionally requires that both
$\cF$ and $\cE$ are smooth with Lipschitz continuous gradients to
achieve an ergodic $O(1/k)$ convergence rate, or both $\partial \cF$
and $\partial \cE$ are bounded in which case the algorithm exhibits a
slower $O(1/\sqrt{k})$ rate.


Extragradient techniques to handle the CCSP problems are of our
particular interest. The mirror-prox method~\citep{Nem04} has extended
one of the earliest extragradient algorithm of \citet{Kor76},
establishing the $O(1/k)$ ergodic (in terms of averaged iterates) rate
of convergence with two proximal operations per iteration. This method
however requires to choose stepsizes carefully with the knowledge of
$L = \|\vA\|$. \citet{Tse08} suggested a line search procedure to find
better estimates of $L$, which requires to compute two extra proximal
operations per line search step.

The hybrid proximal extragradient (HPE)
algorithm~\citep{SolS99a,SolS99b} belongs to another family of
extragradient methods that can be seen as a generalization of
Korpelevich's method and some extensions~\citep{MonS11}, and can solve
CCSP problems with the same $O(1/k)$ ergodic convergence rate. In each
iteration of the HPE framework, an extragradient is computed by
solving a subproblem with controlled inaccuracy. The subproblem itself
can be solved using an accelerated method similar to Nesterov's
smoothing~\citep{HeM14} using three proximal operations in each inner
iteration. A pitfall however is that the accuracy of solving the
subproblem tends to affect the overall runtime.

Recently, \citet{ChaP11} proposed a simple extragradient technique
with $O(1/k)$ ergodic convergence rate, which is quite different in
its nature to the aforementioned extragradient methods, although it
may look similar to Nesterov's dual extragradient
technique~\citep{Nes07d}. In \citet{ChaP11}, proximal steps are taken
in each of the primal and the dual spaces, then a linear gradient
extrapolation is considered either in the primal or in the dual. We
base our algorithm on this technique, since it has been the fastest
with the smallest variations in runtime to solve the problem of our
interest in its saddle-point reformulation~\eqref{eq:gds.sp}. Both
properties were desired in particular for studying statistical
properties of the GDS based on random simulations.

\section{ALGORITHM}

Solving the GDS-SP problem~\eqref{eq:gds.sp}, we assume that there
exists a saddle point $(\vw^*,\vv^*)$ satisfying the conditions
\begin{equation}\label{eq:sp}
\begin{aligned}
\vA \begin{bmatrix} \vy \\ \vw^* \end{bmatrix} = \vX^T\vy - \vX^T\vX \vw^* &\in \partial \cG(\vv^*), \\
- (\vA_{[\cdot, (n+1):(n+p)]})^T \vv^* = \vX^T\vX \vv^*  &\in \partial \cF(\vw^*) 
\end{aligned}
\end{equation}
where $\partial \cF$ and $\partial \cG$ are the subdifferentials of
$\cF$ and $\cG$, respectively. Denoting the objective by $\phi$, i.e.,
$$
 \phi(\vw,\vv) := \left\langle \vA \begin{bmatrix} \vy \\ \vw \end{bmatrix}, \vv \right\rangle + \cF(\vw) - \cG(\vv),
$$
the above conditions~\eqref{eq:sp} imply that the following saddle-point inequality
holds for any $(\vw,\vv)$,
$$
\phi(\vw^*,\vv) \le \phi(\vw^*,\vv^*) \le \phi(\vw,\vv^*) .
$$


We present our primal-dual saddle-point (PDSP) algorithm in
Algorithm~\ref{alg:1}, which solves the CCSP formulation of the GDS
problem~\eqref{eq:gds.sp}.

We define the primal-dual gap, following \citet{ChaP11}, restricted to
the set $\cX\times \cY$,
\begin{align*}
 \cT_{\cX\times \cY}(\vw,\vv) := \max_{\vv' \in \cY}\; \left\{ \langle \vA \begin{bmatrix} \vy \\ \vw \end{bmatrix}, \vv' \rangle + \cF(\vw) - \cG(\vv') \right\} \\
 - \min_{\vw' \in \cX} \; \left\{ \langle \vA \begin{bmatrix} \vy \\ \vw' \end{bmatrix}, \vv \rangle + \cF(\vw') - \cG(\vv) \right\} .
\end{align*}
When $\cX\times \cY$ contains a saddle-point $(\vw^*,\vv^*)$
satisfying \eqref{eq:sp}, then it is easy to check that
\begin{align*}
 \cT_{\cX\times \cY}(\vw,\vv) \ge \left\{ \langle \vA \begin{bmatrix} \vy \\ \vw \end{bmatrix}, \vv^* \rangle + \cF(\vw) - \cG(\vv^*) \right\} \\
 - \left\{ \langle \vA \begin{bmatrix} \vy \\ \vw^* \end{bmatrix}, \vv \rangle + \cF(\vw^*) - \cG(\vv) \right\} \ge 0.
\end{align*}

\begin{algorithm}[!t]
\SetKwComment{tpa}{(}{)}
\SetKwInOut{Input}{input}
\SetKwInOut{Output}{output}
\SetKwInOut{Data}{Data}
\SetKwInOut{Init}{Initialize}
\SetKwInOut{Params}{Params}
\caption{Primal-Dual Saddle-Point (PDSP)}\label{alg:1}
\Data{$\vX \in \R^{n\times p}$, $\vy \in \R^n$, $L = \|\vX^T\begin{bmatrix} \vI_{n} & -\vX\end{bmatrix}\|$\;}
\Init{$(\vw_0, \vv_0) \in \R^p \times \R^p$, $\vw'_0 = \vw_0$\;}
\Params{$\tau_0>0$, $\sigma_0>0$ satisfying $\tau_0\sigma_0 L^2 \le 1$, $\gamma \ge 0$ : strong convexity modulus of $\cG$\;} 
\smallskip
\For {$k=0,1,2,\dots$} {
$$
\begin{aligned}
 \vv_{k+1} &= \prox_{\sigma_k \Glam} \left(\vv_k + \sigma_k (\vX^T\vy - \vX^T\vX\vw_k') \right), \nonumber\\
 \vv'_{k+1} &= \vv_{k+1} \text{ (or $2\vv_{k+1}$, see Section~\ref{sec:strong.conv})}, \nonumber \\
 \vw_{k+1} &= \prox_{\tau_k \cF} \left(\vw_k + \tau_k \vX^T\vX \vv'_{k+1} \right), \nonumber \\
 \theta_k &= 1/\sqrt{1+2\gamma \tau_k}, \nonumber \\
 \tau_{k+1} &= \theta_k \tau_k, \;\; \sigma_{k+1} = \sigma_k / \theta_k, \nonumber\\
 \vw'_{k+1} &= \vw_{k+1} + \theta_k (\vw_{k+1} - \vw_k) . \nonumber 
\end{aligned}
$$
 Check (both if $\gamma=0$, only pointwise if $\gamma>0$):
\begin{itemize}[itemsep=0px,topsep=0pt]
 \item[$-$] Pointwise convergence of $(\vw_{k+1},\vv_{k+1})$\;
 \item[$-$] Ergodic convergence of $(\overline \vw_{k+1},\overline \vv_{k+1}) = \frac{1}{k+1} \sum_{i=1}^{k+1} (\vw_i, \vv_i)$\;
\end{itemize}
 }
\end{algorithm}

\begin{theorem}\label{thm:1}
  Suppose that $(\vw^*,\vv^*)$ is a saddle-point of the GDS-SP
  problem~\eqref{eq:gds.sp}. Then the iterates $(\vw_k,\vv_k)$
  generated by Algorithm~\ref{alg:1} with $\gamma=0$ and $\theta_k=1$
  for all $k$ (therefore $\tau_k=\tau_0$ and $\sigma_k = \sigma_0$)
  satisfy the following properties:
\begin{enumerate}
 \item[(a)]  $(\vw_k,\vv_k)$ is bounded for any $k$, i.e.,
\begin{align*}
&\frac{\|\vw_k - \vw^*\|^2}{\tau_0} + \frac{\|\vv_k - \vv^*\|^2}{\sigma_0} \\
& \quad \le C \left( \frac{\|\vw_0-\vw^*\|^2}{\tau_0} + \frac{\|\vv_0 - \vv^*\|^2}{\sigma_0} \right) 
\end{align*}
for a constant $C \le 1/(1-\tau_0\sigma_0 L^2)$.

\item[(b)] For averaged iterates $\overline \vw_k =
  \frac{1}{k}\sum_{i=1}^k \vw_i$ and $\overline \vv_k =
  \frac{1}{k}\sum_{i=1}^k \vv_i$, we have
$$
 \cT(\overline \vw_k, \overline \vv_k) \le \frac{1+C}{k}
 \left( \frac{\|\vw^*-\vw_0\|^2}{2\tau_0} + \frac{\|\vv^*-\vv_0\|^2}{2\sigma_0} \right).
$$
Moreover, limit points of $(\overline \vw_k, \overline \vv_k)$ are
saddle-points of \eqref{eq:gds.sp}.

\item[(c)] There exists a saddle-point $(\hat\vw,\hat\vv)$ of
  \eqref{eq:gds.sp} such that $(\vw_k, \vv_k) \to (\hat\vw, \hat \vv)$
  as $k\to \infty$.
\end{enumerate}
\end{theorem}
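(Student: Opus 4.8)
The strategy is to recognize \eqref{eq:gds.sp} as an instance of the convex--concave saddle-point template of \citet{ChaP11} and to specialize their convergence proof. Writing $\langle\vA[\vy;\vw],\vv\rangle=\langle\vX^T\vy,\vv\rangle-\langle\vX^T\vX\vw,\vv\rangle$, the bilinear part is, up to the constant $\vv\mapsto\langle\vX^T\vy,\vv\rangle$, linear in $\vw$ through the symmetric operator $\vX^T\vX$ with $\|\vX^T\vX\|\le\|\vA\|=L$; and with $\gamma=0$, $\theta_k\equiv1$ the updates of Algorithm~\ref{alg:1} are exactly the constant-stepsize \citet{ChaP11} iteration under $\tau_0\sigma_0L^2\le1$. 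So no new ingredient is required; the work is to carry their estimates through for this particular $(\cF,\cG,\vA)$.

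First I would record the variational inequalities from the two proximal updates. Each $\prox$ subproblem minimizes a $\tfrac{1}{\sigma_0}$-strongly convex function (resp.\ $\tfrac{1}{\tau_0}$-strongly convex), so optimality of $\vv_{k+1}$ and $\vw_{k+1}$ bounds $\cG(\vv_{k+1})-\cG(\vv)$ and $\cF(\vw_{k+1})-\cF(\vw)$ by the corresponding linear terms plus telescoping squared-distance terms. Adding these, cancelling the $\vX^T\vy$ contributions, using symmetry of $\vX^T\vX$, and inserting $\vw_k'=2\vw_k-\vw_{k-1}$ gives the one-step inequality, valid for every $(\vw,\vv)$:
\begin{align*}
\phi(\vw_{k+1},\vv)-\phi(\vw,\vv_{k+1})&+\tfrac{\|\vw-\vw_{k+1}\|^2}{2\tau_0}+\tfrac{\|\vv-\vv_{k+1}\|^2}{2\sigma_0}\le\tfrac{\|\vw-\vw_k\|^2}{2\tau_0}+\tfrac{\|\vv-\vv_k\|^2}{2\sigma_0}-\tfrac{\|\vw_{k+1}-\vw_k\|^2}{2\tau_0}\\
&-\tfrac{\|\vv_{k+1}-\vv_k\|^2}{2\sigma_0}+\big\langle\vX^T\vX\big((\vw_{k+1}-\vw_k)-(\vw_k-\vw_{k-1})\big),\,\vv_{k+1}-\vv\big\rangle.
\end{align*}

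The one step I expect to be delicate is handling the last (extrapolation) term. I would split it as $\langle\vX^T\vX(\vw_{k+1}-\vw_k),\vv_{k+1}-\vv\rangle-\langle\vX^T\vX(\vw_k-\vw_{k-1}),\vv_k-\vv\rangle-\langle\vX^T\vX(\vw_k-\vw_{k-1}),\vv_{k+1}-\vv_k\rangle$: summing over $k$, the first two pieces telescope (the initial one vanishing because $\vw_0'=\vw_0$), and the third is controlled by Cauchy--Schwarz, $\|\vX^T\vX\|\le L$, and Young's inequality with weight $\sqrt{\sigma_0/\tau_0}$, becoming $\tfrac{\sqrt{\tau_0\sigma_0}\,L}{2}\big(\tfrac{\|\vw_k-\vw_{k-1}\|^2}{\tau_0}+\tfrac{\|\vv_{k+1}-\vv_k\|^2}{\sigma_0}\big)$; since $\sqrt{\tau_0\sigma_0}\,L\le1$ these are absorbed by the negative increment terms, the single boundary term left at the last index being absorbed into $\tfrac{\|\vv-\vv_N\|^2}{2\sigma_0}$ on the left — which is where the factor $1-\tau_0\sigma_0L^2$ enters (and why the claim is vacuous when $\tau_0\sigma_0L^2=1$). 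The careful bookkeeping of this absorption is the main obstacle; the outcome is, for every $(\vw,\vv)$,
\begin{equation*}
\sum_{i=1}^{N}\big[\phi(\vw_i,\vv)-\phi(\vw,\vv_i)\big]+\big(1-\tau_0\sigma_0L^2\big)\!\left(\tfrac{\|\vw-\vw_N\|^2}{2\tau_0}+\tfrac{\|\vv-\vv_N\|^2}{2\sigma_0}\right)\le\tfrac{\|\vw-\vw_0\|^2}{2\tau_0}+\tfrac{\|\vv-\vv_0\|^2}{2\sigma_0},
\end{equation*}
up to a nonnegative remainder on the left that I discard.

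From here the three claims follow. For (a): put $(\vw,\vv)=(\vw^*,\vv^*)$; each summand $\phi(\vw_i,\vv^*)-\phi(\vw^*,\vv_i)\ge0$ by the saddle-point inequality, so dropping the sum gives exactly the stated bound with $C\le1/(1-\tau_0\sigma_0L^2)$. For (b): $\phi(\cdot,\vv)$ is convex and $\phi(\vw,\cdot)$ concave, so Jensen gives $\sum_{i=1}^N[\phi(\vw_i,\vv)-\phi(\vw,\vv_i)]\ge N[\phi(\overline\vw_N,\vv)-\phi(\vw,\overline\vv_N)]$; choosing $\cX\times\cY$ a ball containing $(\vw^*,\vv^*)$ and, by (a), all averages $(\overline\vw_N,\overline\vv_N)$ in its interior, then taking the supremum over $\vv\in\cY$ and infimum over $\vw\in\cX$ turns the left side into $N\,\cT_{\cX\times\cY}(\overline\vw_N,\overline\vv_N)$ and the right side into $\sup_{(\vw,\vv)\in\cX\times\cY}\{\|\vw-\vw_0\|^2/(2\tau_0)+\|\vv-\vv_0\|^2/(2\sigma_0)\}$, which is $\le(1+C)\big(\|\vw^*-\vw_0\|^2/(2\tau_0)+\|\vv^*-\vv_0\|^2/(2\sigma_0)\big)$ by the triangle inequality together with (a); since $\cT_{\cX\times\cY}$ is nonnegative, jointly lower semicontinuous, and zero exactly at genuine saddle points interior to $\cX\times\cY$, any limit point of $(\overline\vw_N,\overline\vv_N)$ is a saddle point. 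For (c): applying the one-step inequality at an arbitrary saddle point $(\vw^\dagger,\vv^\dagger)$ (same cross-term splitting) shows that the Lyapunov functional $\|\vw_k-\vw^\dagger\|^2/\tau_0+\|\vv_k-\vv^\dagger\|^2/\sigma_0+\epsilon\|\vw_k-\vw_{k-1}\|^2/\tau_0$ is nonincreasing, hence convergent, forcing $\|\vw_{k+1}-\vw_k\|\to0$ and $\|\vv_{k+1}-\vv_k\|\to0$; by (a) some subsequence $(\vw_{k_j},\vv_{k_j})\to(\hat\vw,\hat\vv)$, and since also $\vw_{k_j}'=2\vw_{k_j}-\vw_{k_j-1}\to\hat\vw$, passing to the limit in the resolvent identities for the two $\prox$ steps (closedness of $\partial\cF$, $\partial\cG$) shows $(\hat\vw,\hat\vv)$ satisfies \eqref{eq:sp}; invoking the Lyapunov monotonicity with this $(\hat\vw,\hat\vv)$ and that the functional tends to $0$ along $(k_j)$ upgrades this to $(\vw_k,\vv_k)\to(\hat\vw,\hat\vv)$.
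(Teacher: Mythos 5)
Your proposal is correct and follows essentially the same route as the paper: both specialize Theorem~1 of \citet{ChaP11}, with your direct reduction to the operator $\vX^T\vX$ (using $\|\vX^T\vX\|\le\|\vA\|=L$) being only cosmetically different from the paper's augmentation $\vz=[\vy;\vw]$, $\cH(\vz)=\cF(\vw)$, and the same Young-inequality absorption with weight $\sqrt{\sigma_0/\tau_0}$ producing the same summed estimate and constant $C\le 1/(1-\tau_0\sigma_0L^2)$. Your treatment of part~(c) — Lyapunov monotonicity, a convergent subsequence, passage to the limit in the resolvent identities, then upgrading to full convergence — is in fact more complete than the paper's terse claim that vanishing successive differences already yield a limit, and it is the argument the paper implicitly imports from \citet{ChaP11}.
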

\begin{proof}
  Define augmentations of $\vw$'s with $\vy$, e.g. $\vz_k :=
  [\vy; \vw_k] \in \R^{n+p}$, and define $\cH(\vz) = \cH(\vy,\vw) :=
  \cF(\vw)$. Using these, the GDS-SP problem~\eqref{eq:gds.sp} can be written
  equivalently as
$$
 \min_{\vz \in \R^{p+n}} \max_{\vv\in\R^p} \;\; \langle \vA \vz, \vv \rangle + \cH(\vz) - \cG(\vv).
$$
Then the result essentially follows from Theorem 1 of
\citet{ChaP11}. For completeness, we provide the full proof in the
supplementary material, part of which will be used to show
Theorem~\ref{thm:2} as well.
\end{proof}

The {\em ergodic convergence} in Theorem~\ref{thm:1} part (b) indicates that
the primal-dual gap converges with $O(1/k)$ rate for {\em averaged
  iterates}, which is known to be the best rate in general
convex-concave saddle-point
solvers~\citep{Nem04,Tse08,SolS99a,SolS99b,HeM14}. 

The part (c) states {\em pointwise} convergence without averaging,
where its rate is unknown: one can conjecture from related
extragradient methods, e.g. \citet[Theorem 3.4]{HeM14}, that the
convergence might be at a slower rate of $O(1/\sqrt{k})$, but it is
only an educated guess since the methods are not exactly the same. In
fact, in our experiments the iterates tend to converge faster than
averaged iterates, which we will discuss further in detail later.

The part (a) of the above theorem is indeed crucial for our discussion
in the sequel. (We note that similar boundedness results are available
for some related methods, e.g. \citet{Nem04,Tse08}, but not for
all). In particular, in many sparse regression scenarios in
high-dimensions, we expect that $\|\vw^*\|$ may not be very large due
to the small support size (the number of nonzero components) of a true
signal.
As our algorithm naturally starts from the zero vector ($\vw_0 =
\mathbf{0}$), it is therefore likely from Theorem~\ref{thm:1}~(a),
with some proper values of $\tau_0$ and $\sigma_0$, that $\|\vw_k -
\vw^*\|$ (or even $\|\vw_k\|$) would be small as well, although we need
more information about $\|\vv_0 - \vv^*\|$ to say it definitely.

\subsection{Local Strong Convexity and Acceleration}\label{sec:strong.conv}

When $\cF$ or $\cG$ is strongly convex, it can be shown that
Algorithm~\ref{alg:1} exhibits a faster $O(1/k^2)$ {\em pointwise}
convergence rate due to \citet{ChaP11}, using the same trick as in the
proof of Theorem~\ref{thm:1}.

Here, we claim that such an acceleration is also possible, at least
locally, without strong convexity.  Let us focus on $\cF$, since our
arguments here can be equally applied for $\cG$. When $\cF$ is
strongly convex, it satisfies
$$
 \cF(\vw') \ge \cF(\vw) + \langle \vg, \vw'-\vw \rangle + \frac{\gamma}{2} \|\vw'-\vw\|^2, \;\; \vg \in \partial \cF(\vw),
$$
for some modulus $\gamma>0$ and for any $\vw', \vw \in \dom \cF$.

Suppose that $\cF$ is {\em not} strongly convex (i.e., $\gamma=0$), as
in the general GDS cases~\eqref{eq:gds.sp}. Also, suppose that $\cF$ is
indeed a norm, so that it satisfies the (reverse) triangle inequality,
$ \cF(\vw^*) - \cF(\vw_k) \le \cF(\vw^*-\vw_k)$, for a solution
$\vw^*$ and an iterate $\vw_k$ of Algorithm~\ref{alg:1}. If
$\cF(\vw^*-\vw_k)$ is bounded so that $\cF(\vw^* - \vw_k) \le c
\|\vw^*-\vw_k\|$ holds for some $c>0$, where the right-hand side is
bounded due to Theorem~\ref{thm:1} (a), then we can find constants
$\bar c, \delta >0$ such that
\begin{equation}\label{eq:rev.tri}
 \cF(\vw^*) - \cF(\vw_k) \ge \bar c \cF(\vw^*-\vw_k) \ge \delta \|\vw^*-\vw_k\|^2,
\end{equation}
for all $k \ge k_0$, with some $k_0 > 0$ (note that $\vw_k \to \vw^*$
due to Theorem~\ref{thm:1} (c)). Together with the inequality from the
convexity of $\cF$, i.e., $\cF(\vw^*) \ge \cF(\vw_k) + \langle \vg,
\vw^* - \vw_k \rangle$ with $\vg \in \partial \cF(\vw_k)$, it follows
that
\begin{equation}\label{eq:strong.conv}
 \cF(\vw^*) \ge \cF(\vw_k) + \frac{1}{2} \langle  \vg, \vw^* - \vw_k \rangle + \frac{\delta}{2} \|\vw^*-\vw_k\|^2 .
\end{equation}
Comparing to the above inequality of strong convexity, this provides
us a weaker notion of strong convexity in the region where
\eqref{eq:rev.tri} holds. We show that this is enough to establish a
local accelerated pointwise convergence rate even in non-strongly
convex cases:

\begin{theorem}\label{thm:2}
  Let the iterates $(\vw_k,\vv_k)$ be generated by
  Algorithm~\ref{alg:1} with the choices of $\tau_0$ and $\sigma_0$
  such that $2\tau_0\sigma_0L^2 = 1$, and $\vv'_{k+1} =
  2\vv_{k+1}$. Suppose that the local strong
  convexity~\eqref{eq:strong.conv} holds for $\cF$ with a constant
  $\delta>0$ about $\vw_k$, $\forall k\ge k_0$ with some $k_0>0$. Then
  for a saddle-point $(\vw^*,\vv^*)$ of the GDS-SP
  problem~\eqref{eq:gds.sp}, there exists $k_1 \ge k_0$ depending on
  $\epsilon\ge 1$ and $\delta\tau_0$ such that for all $k\ge k_1$,
$$
 \| \vw^* -\vw_k \|^2 \le \frac{4\epsilon}{k^2} \left( \frac{\|\vw^*-\vw_0\|^2}{4\delta^2 \tau_0^2} + \frac{L^2}{\delta^2} \|\vv^*-\vv_0\|^2 \right).
$$
\end{theorem}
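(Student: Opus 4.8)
The plan is to re-run the Chambolle--Pock accelerated analysis (their Algorithm~2 / Theorem~2, the argument already reproduced in the supplement for Theorem~\ref{thm:1}), but to feed in the weakened local inequality~\eqref{eq:strong.conv} in place of genuine strong convexity of $\cF$, and to do so only from index $k_0$ on; for $k<k_0$ I keep plain convexity of $\cF$, which is exactly the $\gamma=0$ regime of Theorem~\ref{thm:1}. As in that proof I would first pass to the augmented variables $\vz_k=[\vy;\vw_k]$, $\cH(\vz)=\cF(\vw)$, so the iteration is the standard primal--dual proximal scheme for $\min_\vz\max_\vv\,\langle\vA\vz,\vv\rangle+\cH(\vz)-\cG(\vv)$ with schedule $\theta_k=(1+2\gamma\tau_k)^{-1/2}$, $\tau_{k+1}=\theta_k\tau_k$, $\sigma_{k+1}=\sigma_k/\theta_k$ (hence $\tau_k\sigma_k\equiv\tau_0\sigma_0$ and $2L^2\tau_k\sigma_k\equiv1$) and dual over-relaxation $\vv'_{k+1}=2\vv_{k+1}$. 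Here $\gamma$ is the modulus we get to exploit; the accounting below forces $\gamma=2\delta$.

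Next comes the one-step estimate. From the two prox steps, $\vg_{k+1}:=\tfrac1{\tau_k}(\vw_k-\vw_{k+1})+\vX^T\vX\vv'_{k+1}\in\partial\cF(\vw_{k+1})$, and similarly a subgradient of $\cG$ at $\vv_{k+1}$. For $k\ge k_0$ (so $k+1>k_0$ and~\eqref{eq:strong.conv} holds about $\vw_{k+1}$ with this very $\vg_{k+1}$) I substitute~\eqref{eq:strong.conv}: its $\tfrac12\langle\vg_{k+1},\vw^*-\vw_{k+1}\rangle$ term splits, via the three-point identity, into squared-distance differences with coefficient $\tfrac1{4\tau_k}$ and a coupling piece $\tfrac12\langle\vX^T\vX\vv'_{k+1},\vw^*-\vw_{k+1}\rangle=\langle\vX^T\vX\vv_{k+1},\vw^*-\vw_{k+1}\rangle$ — which is precisely why $\vv'_{k+1}=2\vv_{k+1}$ is used, so the coupling keeps its full coefficient and still cancels against the dual side — plus a surplus $\tfrac\delta2\|\vw^*-\vw_{k+1}\|^2$. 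Absorbing the surplus through $\tfrac1{4\tau_k}+\tfrac\delta2\ge\tfrac1{4\tau_{k+1}}$, i.e. $(1+2\delta\tau_k)^2\ge1+2\gamma\tau_k$, which fixes $\gamma=2\delta$; then combining with the dual inequality and the saddle conditions~\eqref{eq:sp}, bounding the residual bilinear term by Cauchy--Schwarz with $2L^2\tau_k\sigma_k=1$, and telescoping as in \citet[Theorem~2]{ChaP11} (whose weighted telescoping promotes the estimate to the scale $\|\vw_k-\vw^*\|^2/\tau_k^2$), one gets $\|\vw_k-\vw^*\|^2/\tau_k^2$ bounded by an absolute constant times $\|\vw_{k_0}-\vw^*\|^2/\tau_0^2+L^2\|\vv_{k_0}-\vv^*\|^2$. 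For the step sizes, writing $u_k=1/\tau_k$ the schedule reads $u_{k+1}^2=u_k^2+2\gamma u_k$, so $u_k$ is nondecreasing and an elementary induction gives $u_k\ge\gamma(k-k_0)/\sqrt\epsilon$ for every $\epsilon\ge1$ once $k\ge k_1$, where $k_1\ge k_0$ depends only on $\epsilon$ and $\gamma\tau_0=2\delta\tau_0$ — the slack $\epsilon\ge1$ absorbs the $O(1)$ transient before the asymptotic regime $u_k\sim\gamma k$ — whence $\tau_k^2\le4\epsilon/(\delta^2k^2)$ for $k\ge k_1$ (enlarge $k_1$ so $k\ge2k_0$). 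Finally, since the iterates up to $k_0$ are generated with $\gamma=0$ and $\tau_0\sigma_0L^2=\tfrac12<1$, Theorem~\ref{thm:1}(a) converts the data at $k_0$ into the data at $0$; inserting this and the step-size bound into the estimate above yields the claim.

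The main obstacle is getting the weakened inequality~\eqref{eq:strong.conv}, with its $\tfrac12$-weighted gradient term and its validity only for $k\ge k_0$, to slot into the Chambolle--Pock accelerated recursion so that the distance terms still telescope with a strictly growing weight and no residual turns negative; this is exactly what simultaneously pins down the over-relaxation $\vv'_{k+1}=2\vv_{k+1}$ and the modulus $\gamma=2\delta$, and also what forces the equality $2\tau_0\sigma_0L^2=1$ (so the Cauchy--Schwarz split is tight). The step-size induction, with its explicit dependence on $\epsilon$ and $\delta\tau_0$, and the final bookkeeping plus the hand-off to Theorem~\ref{thm:1}(a) are routine but demand care with constants.
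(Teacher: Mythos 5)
Your overall route is the paper's: pass to the augmented variable $\vz=[\vy;\vw]$, feed the weakened inequality~\eqref{eq:strong.conv} into the one-step primal--dual estimate, observe that the $\tfrac12$ in front of the subgradient effectively doubles the primal step (the paper works with $\tau'_k=2\tau_k$) and is compensated by the over-relaxation $\vv'_{k+1}=2\vv_{k+1}$, then run the accelerated weighted telescoping and the $\tau_k\sim 1/k$ asymptotics from Lemma~1/Corollary~1 of Chambolle and Pock. Your handling of the transient $k<k_0$ (hand-off via Theorem~\ref{thm:1}(a)) is actually more explicit than the paper's, which simply telescopes from $k=0$ and buries the transient in $\epsilon$ and $k_1$.

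There is, however, a concrete error at the step you yourself identify as the crux: the modulus. You derive the admissibility condition $\tfrac{1}{4\tau_k}+\tfrac{\delta}{2}\ge\tfrac{1}{4\tau_{k+1}}$, equivalently $1+2\gamma\tau_k\le(1+2\delta\tau_k)^2$, and conclude $\gamma=2\delta$. That is the right condition for the \emph{unweighted} telescoping, which only yields $\|\vw^*-\vw_t\|^2=O(\tau_t)=O(1/t)$. To obtain the $O(1/k^2)$ rate one must divide the one-step inequality by $\tau'_k$ and telescope $\Delta_k/\tau'_k$ with $\Delta_k$ containing $\|\vz^*-\vz_k\|^2/\tau'_k$; there the required condition is $(1+\delta\tau'_k)/(\tau'_k)^2\ge 1/(\tau'_{k+1})^2$, i.e.\ $(\tau_k/\tau_{k+1})^2=1+2\gamma\tau_k\le 1+2\delta\tau_k$, which forces $\gamma\le\delta$. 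With your $\gamma=2\delta$ the weighted telescoping leaves a per-step deficit of order $\frac{\delta}{2\tau_k}\|\vz^*-\vz_{k+1}\|^2$, which (even granting the conclusion $\|\vz^*-\vz_k\|^2=O(\tau_k^2)$) sums like $\sum_k \delta\tau_k\sim\sum_k 1/k$ and diverges, so the argument does not close. The paper's schedule is exactly the equality version $(1+\delta\tau'_k)\,\tau'_{k+1}/\tau'_k=\tau'_k/\tau'_{k+1}$, i.e.\ Algorithm~\ref{alg:1} run with $\gamma=\delta$; with that single correction (and $\lim_{t\to\infty} t\delta\tau_t=1$ from Corollary~1 of Chambolle and Pock) your bookkeeping delivers the stated bound.
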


The proof is provided in the supplementary material due to its length.
In reality, the constant $\delta>0$ can be very small, probably enough
to make the rate similar to $O(1/k)$. Also the
condition~\eqref{eq:rev.tri} is not easily verifiable without knowing
$\cF(\vw^*)$ a priori. Further, \eqref{eq:rev.tri} implies $\cF(\vw^*)
\ge \cF(\vw_k)$ for $k\ge k_0$, which is not enforced by our
algorithm. Nonetheless, our new result shows that local pointwise
convergence with an accelerated rate $O(1/k^2)$ is possible without
strong convexity, under some special conditions. In our experience,
Algorithm~\ref{alg:1} seemed to exhibit pointwise convergence rate as
fast as, or even faster than, $O(1/k)$, in surprisingly many cases,
even if we chose $\vv'_{k+1} = \vv_{k+1}$ and $\delta=0$: this
motivated us to check both pointwise and ergodic convergence in
Algorithm~\ref{alg:1} for non-strongly convex cases.

\section{DANTZIG SELECTOR WITH THE ORDERED $\ell_1$-NORM}

Here we introduce a new kind of GDS, defined with the ordered
$\ell_1$-norm: for given $p$ parameters $\lambda_1 \ge \cdots \ge
\lambda_p\ge 0$, the Ordered Dantzig Selector (ODS) performs penalized
estimation by solving the problem
\begin{equation}\label{eq:ods}
\text{(ODS)} \quad 
\begin{aligned} \min_{\vw \in\R^p} &\;\; J_\vlam (\vw) := \sum_{i=1}^p \lambda_i \wi{i} \\
\text{s.t.} & \;\;  J_{\vlam}^D (\vX^T(\vy - \vX\vw)) \le 1 
\end{aligned}
\end{equation}
where $\vlam := (\lambda_1,\dots,\lambda_p)$, $\wi{i}$ denotes the
$i$th largest absolute value of the components of the vector
$\vw=(w_1,\dots,w_p)$, and $J_{\vlam}^D$ is the dual norm of
$J_\vlam$. It has been shown that $J_\vlam(\cdot)$ is indeed a
norm~\citep[Proposition 1.2]{BogB15}. Its dual norm has a rather
complicated expression,
$$
 J_{\vlam}^D(\vw) = \max\left\{ \frac{\wi{1}}{\lambda_1}, \cdots, \frac{\sum_{i=1}^p \wi{i}}{\sum_{i=1}^p \lambda_i} \right\} .
$$
Although the ODS~\eqref{eq:ods} can be formulated as a linear program,
it requires exponentially many constraints to express the constraint
set. Our algorithm can avoid handling this thanks to the fact that in
our saddle-point reformulation the dual norm appears in forms of the
double dual norm, i.e., $J_\vlam(\cdot)$:
\begin{align*}
\min_{\vw\in\R^p}\max_{\vv \in \R^p} \, \left\langle \vX^T \begin{bmatrix}\vI & -\vX \end{bmatrix}  \begin{bmatrix} \vy \\ \vw \end{bmatrix}, \vv \right\rangle 
+ J_\vlam(\vw) - J_\vlam(\vv) .
\end{align*}
The proximal operator for $J_\vlam(\cdot)$ can be computed
in 
$O(p\log p)$ time using the stack-based FastProxSL1
algorithm~\citep[Algorithm 4]{BogB15}.

\subsection{False Discovery Rate Control}

In high-dimensional variable selection, some types of statistical
confidence about selection is desired since otherwise the power of
detection of true regressors might be very low or, on the contrary,
the number false discoveries can be too large.

In the popular LASSO approach, variable selection is performed based
on an $\ell_1$-penalized regression,
\begin{equation}\label{eq:lasso}
 \min_{\vw \in \R^p} \;\; \frac{1}{2} \| \vy - \vX \vw \|^2 + \lambda \|\vw\|_1 .
\end{equation}
When observations follow the model $\vy = \vX \vw^* + \vxi$ with
orthogonal design ($\vX^{T}\vX=\vI_p$) and noise $\vxi \sim \cN(0,
\sigma^2 \vI_n)$, one can choose $\lambda \approx \sigma \sqrt{2 \log
  p}$ to control the {\em family-wise error rate} (FWER), the
probability of at least one false rejection. However, this choice is
non-adaptive to data as it does not depend on the sparsity and
magnitude of the true signal, being likely to result in a loss of
power~\citep{BogB15}.

In contrast, in an alternative strategy called the SLOPE which
replaces the $\ell_1$-term in \eqref{eq:lasso} with the ordered
$\ell_1$-norm $J_\vlam(\cdot)$, it has been shown that data-adaptive
{\em false discovery rate (FDR)} control is
possible~\citep{BogB15}. The SLOPE follows the spirit of the
Benjamini-Hochberg correction~\citep{BenH95} in multiple hypothesis
testing, which can adapt to unknown signal sparsity with improved
asymptotic optimality~\citep{AbrF06,BogC11,FroB13,WuZ13}.

Our new proposal, the ODS, shares the same motive as the SLOPE to use
the ordered $\ell_1$-norm, yet in a different context of the Dantzig
Selector. Our next theorem shows that ODS can control FDR, in
orthogonal design cases.

\begin{theorem}\label{thm:ods}
  Under the linear data model $\vy = \vX\vw^* + \vxi$ with $\vX \in
  \R^{n\times p}$, $\vX^T\vX=\vI_p$, and $\vxi \sim \cN(0, \sigma^2
  \vI_n)$, suppose that we choose $\vlam =
  (\lambda_1,\cdots,\lambda_p)$ according to
  $$
     \lambda_i := \sigma \Phi^{-1}\left (1-i \frac{q}{2p}\right )
  $$
  where $\Phi(\cdot)$ denotes the cdf of the standard normal
  distribution.  Then the ODS
  problem~\eqref{eq:ods} has a unique solution $\hat\vw$ with its FDR
  controlled at the level
  $$
    \text{FDR}=\mathbb{E} \bigg [ \frac{V}{\max\{R,1\}} \bigg] \leq q \cdot \frac{p_0}{p}\leq q,
  $$
 $$ \begin{cases}
   p_0 &:= \big| \{ i: w^*_i = 0 \} \big | \;\; \text{(\# true null hypotheses)}\\ 
   V &:=\big |\{i: w^*_i=0, \hat w_i \neq 0\} \big | \;\; \text{(\# false rejections)} \\ 
   R &:=\big |\{i: \hat w_i \neq 0\}\big | \;\; \text{(\# all rejections)}
    \end{cases}$$
\end{theorem}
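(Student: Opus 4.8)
The plan is to reduce the ODS problem under orthogonal design to a coordinatewise (one-dimensional) structure and then invoke the FDR-control machinery already developed for SLOPE in \citet{BogB15}. First I would rewrite the constraint using $\vX^T\vX = \vI_p$: with $\vz := \vX^T\vy = \vw^* + \vX^T\vxi$, the residual map becomes $\vX^T(\vy - \vX\vw) = \vz - \vw$, so the ODS feasible set is $\{\vw : J_\vlam^D(\vz - \vw) \le 1\}$ and we are minimizing $J_\vlam(\vw)$ over it. The key observation is that $\tilde\vxi := \vX^T\vxi \sim \cN(0, \sigma^2\vI_p)$ (since $\vX^T\vX = \vI_p$), so $\vz$ is exactly the ``observation vector'' in the orthogonal sequence model that SLOPE analyzes. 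I would then argue that the ODS solution $\hat\vw$ at this $\vz$ coincides with — or is characterized by the same thresholding/rejection rule as — the SLOPE estimator, or at least produces the same \emph{support} $\{i : \hat w_i \ne 0\}$. Concretely, one shows that the dual-norm constraint $J_\vlam^D(\vz - \vw) \le 1$ is, by the definition of $J_\vlam^D$ in terms of the partial sums $\sum_{i=1}^k |{\cdot}|_{(i)} \le \sum_{i=1}^k \lambda_i$, precisely the optimality (KKT) condition region for the SLOPE penalized problem; the minimal-$J_\vlam$-norm feasible point is the SLOPE prox of $\vz$ with parameter $\vlam$. Uniqueness of $\hat\vw$ follows from strict convexity arguments on the prox, or from \citet[Proposition 1.2 and the prox analysis]{BogB15}.

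Next I would transfer the FDR statement. Once the support of $\hat\vw$ is shown to equal the set of ``rejections'' produced by the SLOPE/Benjamini–Hochberg thresholding applied to the order statistics of $|\vz|$, the quantities $V$, $R$, $p_0$ in the theorem are literally the same random variables as in the SLOPE FDR theorem. The choice $\lambda_i = \sigma\,\Phi^{-1}(1 - iq/(2p))$ is exactly the SLOPE/BH sequence of critical values for two-sided $z$-tests at nominal level $q$, so the bound $\mathrm{FDR} \le q\,p_0/p \le q$ is then inherited verbatim from \citet[Theorem on FDR control]{BogB15}. I would spell out the reduction carefully: the sorted statistics $|z|_{(1)} \ge \cdots \ge |z|_{(p)}$ are compared against $\lambda_1 \ge \cdots \ge \lambda_p$, a coordinate is ``rejected'' when its standardized value exceeds the appropriate $\lambda$, and because the noise coordinates $\tilde\xi_i$ are i.i.d.\ Gaussian and independent across the $p_0$ true nulls, the classical BH-type argument bounding $\E[V/\max\{R,1\}]$ goes through.

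The main obstacle is the first step: establishing rigorously that the \emph{Dantzig-type} feasibility formulation (minimize $J_\vlam(\vw)$ subject to $J_\vlam^D(\vz-\vw) \le 1$) yields the same selected support as the \emph{Lagrangian} SLOPE formulation (minimize $\tfrac12\|\vz - \vw\|^2 + J_\vlam(\vw)$). In the plain LASSO/Dantzig case with orthogonal design these coincide because both reduce to the same soft-thresholding of each coordinate; here one must verify the analogous statement for the ordered-$\ell_1$ geometry, where the prox is not separable. I would handle this by writing the KKT conditions for both problems — for ODS, using that $J_\vlam^D(\vr)\le 1$ is the polar constraint and hence $\vr$ lies in the subdifferential cone of $J_\vlam$ at the minimizer — and checking they define the same $\hat\vw$ (equivalently, that $\hat\vw$ solves $\min \tfrac12\|\vz-\vw\|^2$ over the same constraint, which is the SLOPE prox). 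A secondary technical point is uniqueness: since $J_\vlam$ is a norm (not strictly convex), uniqueness of $\hat\vw$ relies on the strong convexity of the squared-distance term in the equivalent prox formulation, which I would invoke to close the argument. Once the support identity is in hand, everything else is a citation to the SLOPE FDR analysis with the noise vector $\vX^T\vxi$.
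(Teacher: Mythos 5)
Your top-level strategy is exactly the paper's: pass to the orthogonal sequence model $\vz=\vX^T\vy=\vw^*+\vX^T\vxi$ with $\vX^T\vxi\sim\cN(0,\sigma^2\vI_p)$, prove that the ODS estimate coincides with the SLOPE estimate, and then inherit the FDR bound from the SLOPE theorem for the Benjamini--Hochberg-type $\lambda$ sequence. The gap is in how you propose to carry out the equivalence. Your claim that ``the minimal-$J_\vlam$-norm feasible point is the SLOPE prox of $\vz$'' is the entire content of the hard step, and ``write the KKT conditions for both problems and check they define the same $\hat\vw$'' does not close it: the ODS constraint $J_\vlam^D(\vz-\vw)\le 1$ only asserts $\vz-\vw\in\partial J_\vlam(\mathbf{0})$, whereas SLOPE optimality requires $\vz-\vw\in\partial J_\vlam(\vw)$, a particular face of that ball determined by the sign and ordering pattern of $\vw$; identifying which face the $J_\vlam$-minimal feasible point selects is precisely where the non-separability of the ordered $\ell_1$ geometry bites, and it is not resolved by a formal KKT comparison. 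The paper gets there through a chain of structural lemmas (monotonicity of $J_\vlam$ and $J_\vlam^D$ under ``pulling to zero'' and ``pulling to the mean,'' then showing any ODS solution satisfies $w_1\ge\cdots\ge w_p\ge 0$ and $y_1-w_1\ge\cdots\ge y_p-w_p\ge 0$), which turns ODS into an explicit LP with constraints $\sum_{i\le k}(y_i-\lambda_i)\le\sum_{i\le k}w_i$; it then perturbs the objective by $\tfrac12\mu\|\vw\|^2$, shows the perturbed solution is constant on increasing segments of $y_i-\lambda_i$ (mirroring the SLOPE averaging algorithm), and verifies the KKT system of the perturbed problem explicitly at $(\vy-\vlam)_+$.

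Your uniqueness argument is also circular as stated: you invoke ``strong convexity of the squared-distance term in the equivalent prox formulation,'' but the ODS objective is $J_\vlam(\vw)$ with no quadratic term, and you may only import that strong convexity after the equivalence is established, which is what is being proved. The paper instead proves uniqueness of the LP directly: every solution satisfies, for each $j$, either $\sum_{i\le j}(y_i-\lambda_i)=\sum_{i\le j}w_i$ or $w_j=w_{j+1}$, which confines the solution set to finitely many points, and a feasible LP has either one or infinitely many solutions. Finally, your fallback of proving only ``the same support'' would not suffice: the theorem asserts uniqueness of $\hat\vw$ itself, and both that claim and the clean transfer of $V$, $R$, $p_0$ to the SLOPE setting rely on full equality of the estimators, not merely of their supports.
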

\begin{proof}
  Our proof is based on showing the equivalence between the ODS and
  the SLOPE estimates under the given conditions, and thereby both
  share the same FDR control. Our full proof is quite technical, and
  is provided in the supplementary.
\end{proof}

For non-orthogonal design, we may need to use a different sequence of
$\lambda_i$'s. For instance, we can consider an adjustment for
Gaussian design cases,
$$
\begin{cases}
  \lambda'_1 &= \lambda_1\\
  \lambda'_i &= \lambda_i \sqrt{ 1+ \frac{\sum_{j < i} (\lambda'_j)^2}{n - i}}, \;\; i \ge 2 ,
\end{cases}
$$
and then for $t = \argmin_{i} \{\lambda'_i\}$, take
\begin{equation}\label{eq:lam.adj}
\begin{aligned}
&  \lambda^G_i = 
\begin{cases} \lambda'_i, & i \le t, \\
  \lambda_{t} & i > t .
\end{cases}
\end{aligned}
\end{equation}
The second step is required to make the sequence $\{\lambda^G_i\}$ to
be non-increasing since otherwise $J_{\vlam^G}(\cdot)$ may not be a
convex function. For details about the adjustment, we refer to
\citep[Section 3.2.2]{BogB15}.

\section{EXPERIMENTS}

We demonstrate our algorithm on the ODS instances with randomly
generated data in various settings. Since the ordered $\ell_1$-norm is
not strongly convex, we run Algorithm~\ref{alg:1} with $\gamma=0$ and
$\vv'_{k+1} = \vv_{k+1}$ unless otherwise specified.

Under the data model $\vy = \vX\vw + \vxi$, we sampled each entry of
the Gaussian design matrix $\vX \in \R^{n\times p}$ and the noise
vector $\vxi$ independently from the normal distribution
$\cN(0,1)$. The true signal $\vw \in \R^p$ was generated to be an
$s$-sparse vector, where the signal strength was set to $w_i = \sqrt{2
  \log p}$ for all nonzero elements $i$. The $\lambda_i$ values were
chosen according to Theorem~\ref{thm:ods} and the
adjustment~\eqref{eq:lam.adj}, with the target FDR level of $q=0.1$.

The performance of Algorithm~\ref{alg:1} (PDSP) has been compared to
the following alternatives:
\begin{itemize}[itemsep=1px,topsep=1pt,leftmargin=2pt]
\item[] {\bf SP Algorithms}:
\begin{itemize}[itemsep=1px,topsep=0pt,leftmargin=1.7em]
\item[$-$] HPE: accelerated hybrid proximal extragradient method~\citep{HeM14}.
\item[$-$] MPL: a variant of the mirror-prox~\citep{Nem04} with
  linesearch~\citep{Tse08}.
\end{itemize}
\item[] {\bf Non-SP Algorithms}:
\begin{itemize}[itemsep=1px,topsep=0pt,leftmargin=1.7em]
\item[$-$] LADMM: linearized ADMM customized for the GDS~\citep{ChaC14}.
\item[$-$] TFOCS: an implementation of Nesterov's
smoothing technique~\citep{BecC11}.
\end{itemize}
\end{itemize}

Unlike the SP algorithms, the non-SP algorithms require to specify
extra parameters difficult to determine: in particular, the penalty
parameter $\rho \ge \|\vX\|^4$ for LADMM and the smoothing parameter
$\mu \approx O(\epsilon)$ for TFOCS. Whenever needed, the values of
$\|\vX\|$ and $\|\vA\| = \|\vX^T\begin{bmatrix} \vI_n &
  -\vX\end{bmatrix}\|$ were estimated by taking inner products of the
matrices with random unit vectors.
For TFOCS, we fixed $\mu = 10^{-2} \gg \epsilon$, since a larger value
than the target optimality $\epsilon$ is usually recommended for better
performance.


All algorithms were stopped if the following condition was satisfied
with an optimality threshold of $\epsilon = 10^{-7}$,
\begin{align*}
\|\vz_k - \vz_{k-1}\| / \max\{1, \|\vz_k\|\} \le \epsilon,
\end{align*}
for either $\vz_k = (\vw_k, \vv_k)$ (pointwise) or $\vz_k = (\overline
\vw_k, \overline \vv_k)$ (ergodic convergence). A tight optimality
threshold is typically required for accurate variable selection.

All experiments were performed on a Linux machine with a quadcore
3.20~GHz Intel Xeon CPU and 24~GB of memory, using MATLAB R2015a.

\subsection{Algorithm Performance}

\begin{table*}[!ht]
\centering
\caption{Algorithm Runtimes (Suboptimality $\epsilon \le 10^{-7}$). Mean (Std) in Seconds over 50 Random ODS Instances.\label{tab:perf}}\medskip
\begin{tabular}{c|cc||cc|cc|cc||cc|cc}\hline
 \multirow{2}{*}{$s$}   & \multirow{2}{*}{$p$}   & \multirow{2}{*}{$n$}
      &  \multicolumn{6}{|c||}{{\bf Saddle-Point Algorithm}} & \multicolumn{4}{|c}{{\bf Non Saddle-Point}} \\ \cline{4-13}
 & &  &  \multicolumn{2}{|c|}{PDSP}  & \multicolumn{2}{|c|}{HPE} & \multicolumn{2}{|c||}{MPL} & \multicolumn{2}{|c|}{LADMM} & \multicolumn{2}{|c}{TFOCS} \\ \hline
\multirow{3}{*}{5} 
&100 & 1000 & 0.04 & (0.03) & 0.05 & (0.03) & 0.20 & (0.12) & 0.10 & (0.22) & 2.92 & (4.88)\\ 
&1000 & 1000 & 1.35 & (3.38) & 48.96 & (339.70) & 3.98 & (6.56) & 15.47 & (29.39) & 54.43 & (291.03)\\ 
&1000 & 100 & 2.79 & (1.63) & 2.28 & (1.57) & 8.15 & (4.05) & 31.87 & (19.99) & 20.02 & (48.73)\\ \hline
\multirow{3}{*}{10} 
&100 & 1000 & 0.19 & (0.40) & 54.22 & (382.27) & 0.74 & (1.30) & 0.41 & (0.53) & 14.33 & (45.28)\\ 
&1000 & 1000 & 2.47 & (6.07) & 1.97 & (3.97) & 6.31 & (11.74) & 29.82 & (31.77) & 37.73 & (85.57)\\ 
&1000 & 100 & 4.99 & (5.61) & 30.05 & (188.19) & 12.93 & (11.34) & 46.78 & (24.39) & 57.27 & (101.36)\\ \hline
\multirow{3}{*}{15} 
&100 & 1000 & 0.33 & (0.68) & 13.95 & (67.75) & 1.07 & (1.49) & 1.32 & (1.70) & 27.56 & (50.32)\\ 
&1000 & 1000 & 3.99 & (8.35) & 2.69 & (5.18) & 9.76 & (15.43) & 39.52 & (32.66) & 38.95 & (103.08)\\ 
&1000 & 100 & 9.88 & (10.70) & 6.93 & (8.00) & 23.86 & (20.82) & 91.52 & (33.56) & 85.77 & (124.23)\\ \hline
\end{tabular}
\end{table*}

The primary advantage of our method (PDSP) is its fast speed with
small runtime variations while being simple to implement. Table~\ref{tab:perf}
compares the runtime of the algorithms over 50 randomly generated ODS
instances in different scenarios, i.e., the combinations of problem
dimensions ($p<n$, $p=n$, $p>n$) and signal sparsity ($s=5, 10, 15$).

Our method has been the most favorable over all cases, except for few
where HPE performed slightly better. However, the HPE algorithm
is far more complicated than ours (see Algorithm 3 and 4 in the
supplementary), having an iterative subproblem solver which requires
to specify extra parameters to control subproblem accuracy.

The advantage of SP methods over non-SP counterparts also looks
apparent. In particular, LADMM, previously proposed for the GDS,
performed well for $p<n$, but quite poorly for the other situations.
Overall, TFOCS has been slower than LADMM. Note that both LADMM and
TFOCS may have performed better if their parameters were tuned for individual
cases: which is exactly what we tried to avoid.

\subsection{FDR Control}

\begin{figure}[!t]
\centering
\includegraphics[height=.18\textheight]{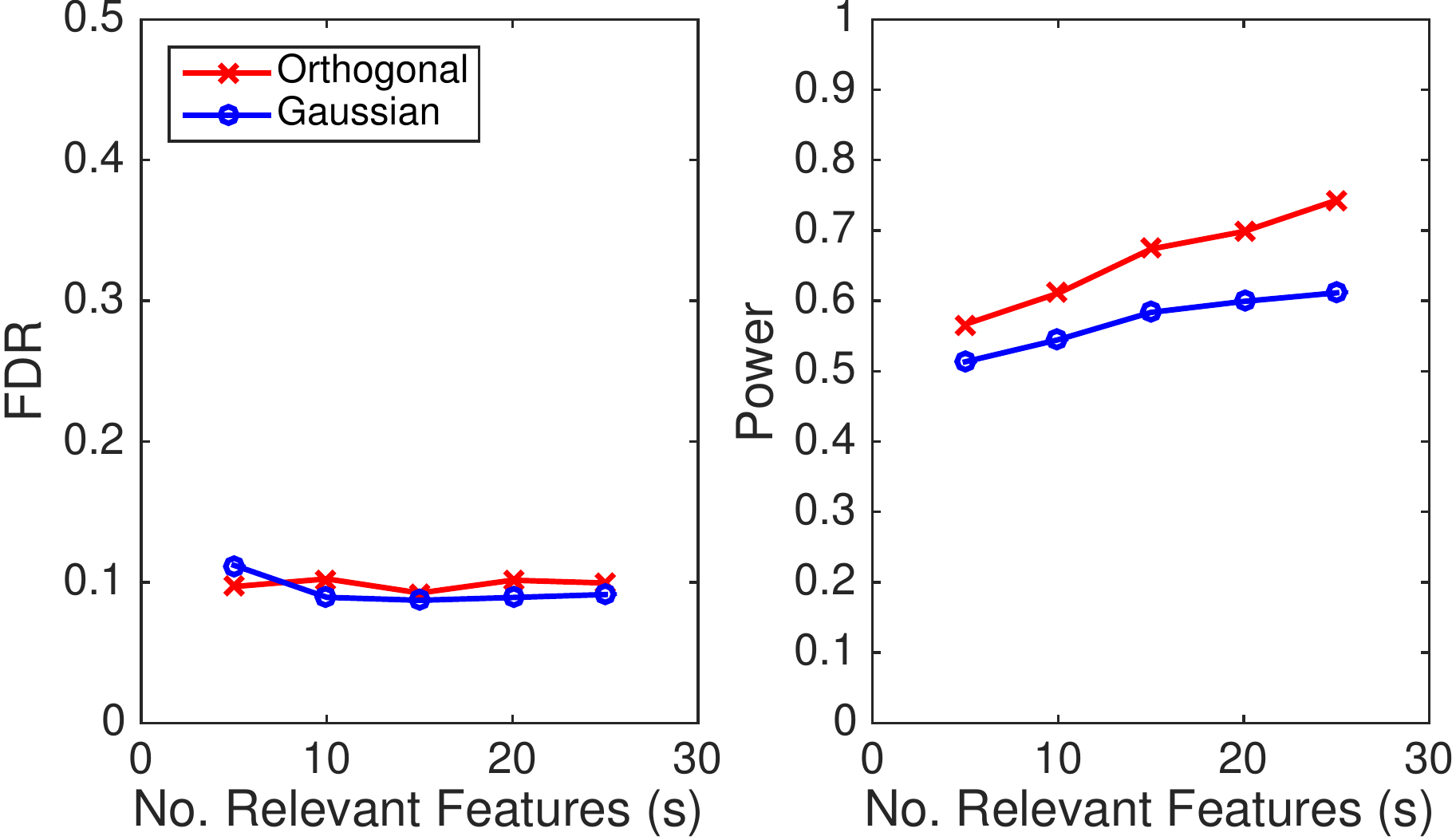}
\caption{
  Mean FDR and Power Detecting Signals of Different Sparsity
  ($s=5,10,15,20,25$). \label{fig:fdr}}
\end{figure}

To show the FDR control property of the ODS (solved with our
algorithm), 
we generated random ODS instances with orthogonal and Gaussian design
matrices of the dimension $n=2000$ and $p=1000$, and compared the FDR
and the power of the two cases for the target FDR level of $q=0.1$.

Figure~\ref{fig:fdr} shows the mean values of these quality criteria
over $300$ repetitions, for increasing numbers of relevant features
($s$) in the true signal (also referred to as signal sparsity). FDR
was indeed controlled at the desired level of $10\%$ in both
orthogonal and Gaussian cases, as we claimed. We observed slightly
improved power with orthogonal design compared to the Gaussian cases:
it is natural since $\lambda$ values were adjusted to control FDR
resulting in larger penalty for the latter.

Comparing to SLOPE using the same $\lambda$ values under Gaussian
design, ODS appeared to be slightly more conservative, improving FDR
and the average number of false discoveries at the cost of a small
decrease in power (data not shown). So ODS would be appealing for
applications like finding biomarkers from high-dimensional genomic
data where false positive discoveries can cost much for follow-up
validation. We leave more precise comparison to SLOPE as future work.


\subsection{Convergence Rate}

\begin{figure}[!t]
\centering
\includegraphics[height=.18\textheight]{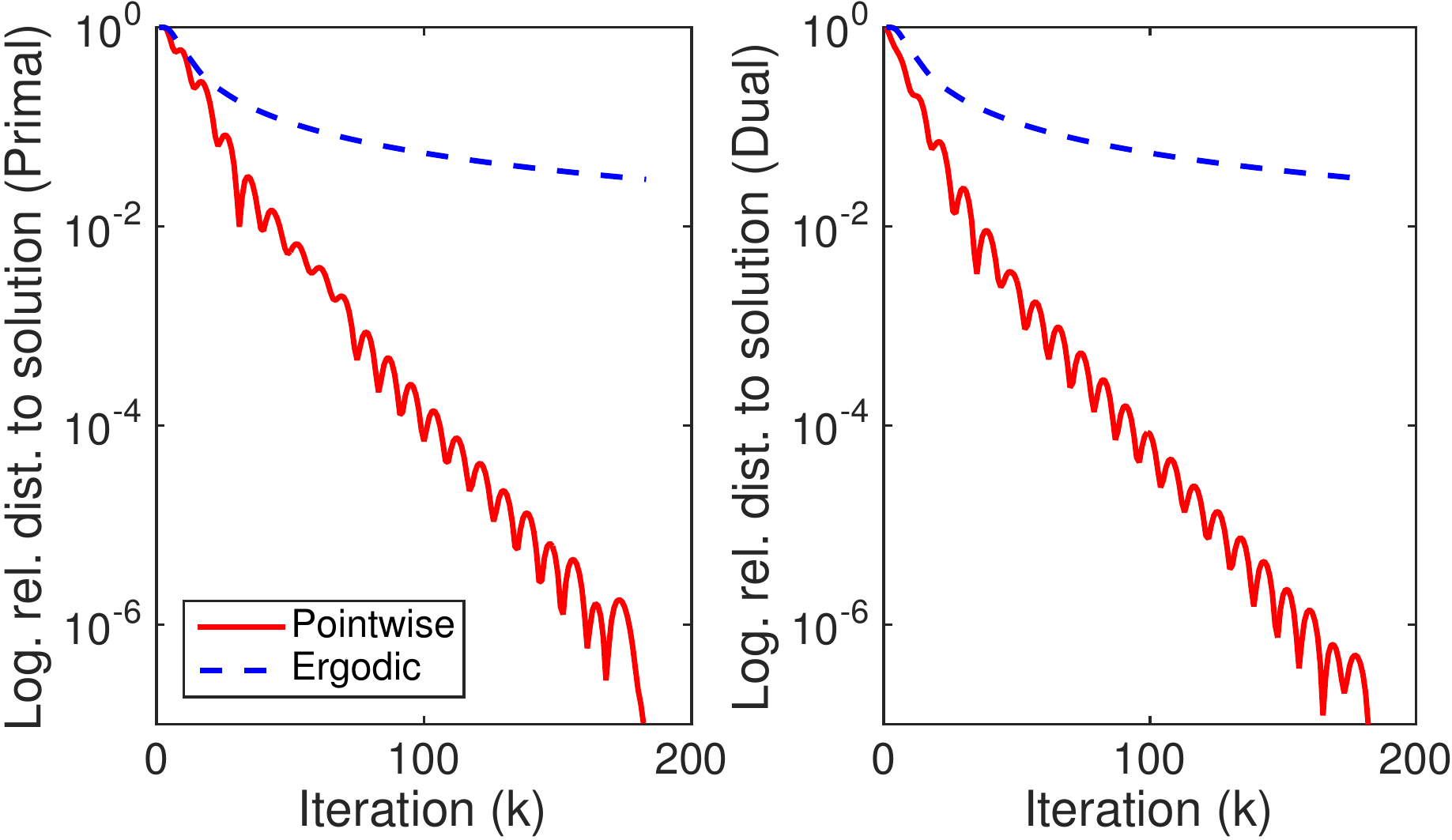}
\caption{Pointwise and Ergodic Convergence. (Left) Primal; (Right)
  Dual.\label{fig:conv}}
\end{figure}

Using our algorithm in experiments, we observed fast pointwise
convergence in almost every case. This was quite surprising, since
pointwise convergence rate is not explained by the existing analysis
in Theorem~\ref{thm:1}, and also expected to be slow, as we discussed
earlier.

Figure~\ref{fig:conv} shows one instance of the randomly generated
Gaussian design cases with $p=n=1000$, $s=15$, and $q=0.1$ (behavior
was quite similar in other settings). We ran our algorithm twice for
the same data, 1) to obtain the primal and the dual solutions, then 2)
to obtain the relative distances of iterates to their corresponding
solution, such as $\|\vw_k - \vw^*\| / \|\vw^*\|$.

As we can see, the averaged iterate (denoted by ``Ergodic'') showed
the expected $O(1/k)$ convergence rate. In contrast, the non-averaged
iterates (``pointwise'') converged much faster, even exhibiting
typical fluctuation patterns of accelerated gradient method. We
believe that this behavior is closely related to the local strong
convexity and acceleration we discussed.

In fact, in Figure~\ref{fig:conv}, neither any information of local
strong convexity nor the alternative choice of $\vv'_{k+1} =
2\vv_{k+1}$ was used. When the latter option was used, our algorithm
showed even faster pointwise convergence, but there were some cases
the algorithm did not converge, which would be when the required local
strong convexity assumption was not satisfied.

\subsection{Local Strong Convexity}

\begin{figure}[!t]
\centering
\includegraphics[width=.95\linewidth]{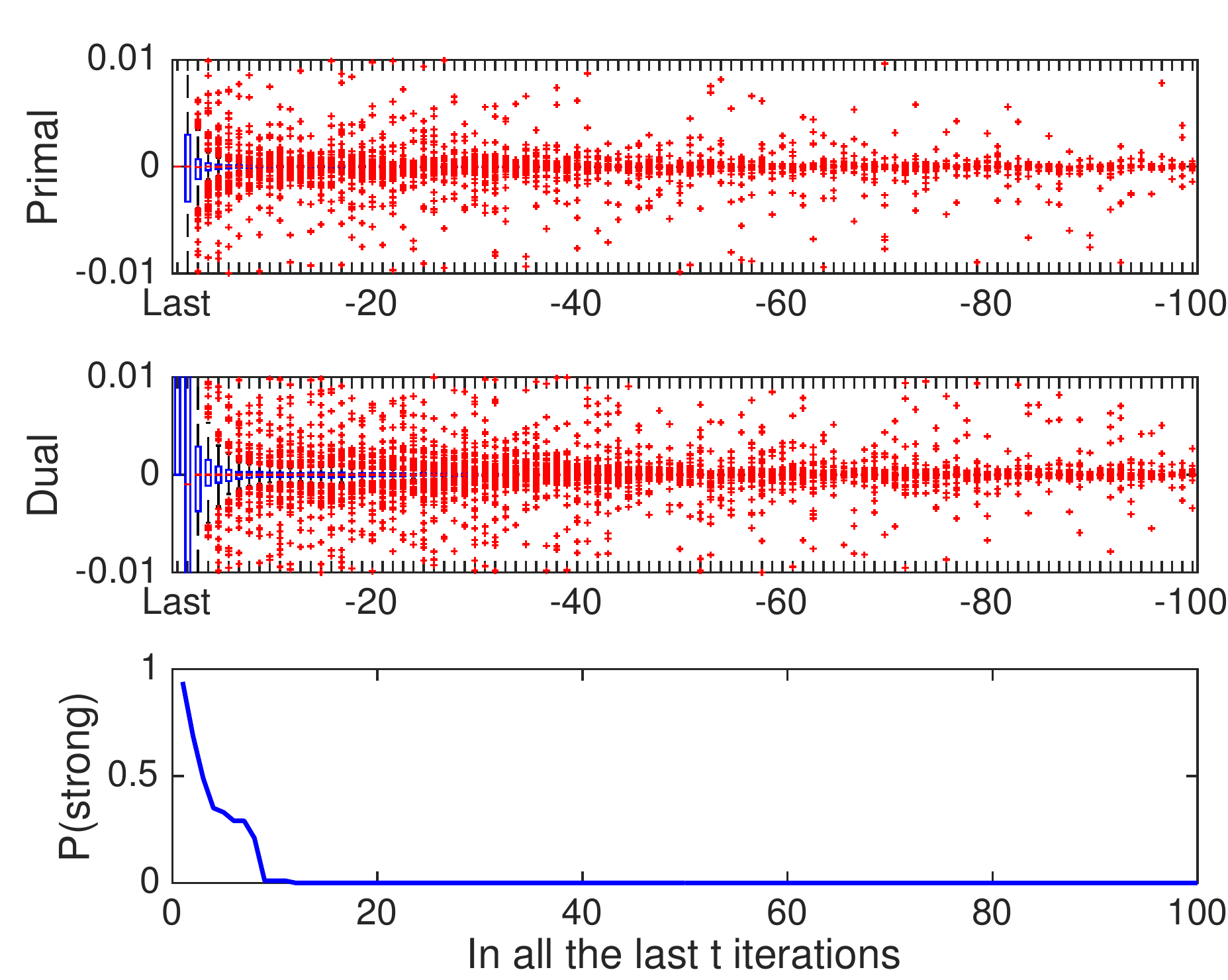}
\caption{(Top; Middle) Local Strong Convexity Estimates for
  Primal:~$\cF$ and Dual:~$\cG$ in the last 100 iterations.  (Bottom)
  Probability of Local Strong Convexity in Primal or in Dual, for all
  of the last $t$ iterations.\label{fig:strconv}}
\end{figure}

We again generated 300 random ODS instances with the same settings as
in the previous experiment, to simulate how often the local strong
convexity condition~\eqref{eq:strong.conv} would be fulfilled, and to
what degree.

Figure~\ref{fig:strconv} (top and middle) reports the box-plots of the
local strong convexity estimates:
$$
  \frac{\cF(\vw^*)-\cF(\vw_k)-\frac 12 \langle \vg, \vw^*-\vw_k\rangle}{\|\vw^*-\vw_k\|^2}, \;\; \vg \in \partial \cF(\vw_k),
$$
in the primal, and equivalent quantities regarding $\cG$ in the dual,
evaluated for the last 100 iterations of each run. As we approached
the last iteration, these values varied more away from zero, where the
chance of being positive was nearly $50\%$ in the primal and dual,
resp. In fact, for acceleration to happen, it is very likely from
Theorem~\ref{thm:2} that the values need to be positive in either
primal or dual: Figure~\ref{fig:strconv} (bottom) shows the chance of
such events to happen, in {\em all} of the last $t$ iterations: the
probability seemed to approach one as $t\to 1$. This indicates that
local acceleration near an optimal solution would be highly likely.

\section{CONCLUSION}

We proposed PDSP, a fast and simple primal-dual 
algorithm to solve the saddle-point formulation of the generalized
Dantzig selector. While achieving the known optimal convergence rate,
we showed that our algorithm can exhibit a faster rate, taking the
advantage of local acceleration. We also introduced the ordered
Dantzig selector with FDR control, a new instance of the GDS, which we
hope will foster further research in variable selection and signal
recovery.

\subsubsection*{Acknowledgements}
SL was supported by Deutsche Forschungsgemeinschaft (DFG) within the
Collaborative Research Center SFB 876, project C1. 
D.B. and M.B. were supported by European Union's 7th Framework
Programme under Grant Agreement no 602552 and by the Polish Ministry
of Science and Higher Education under grant agreement
2932/7.PR/2013/2. 

\clearpage
\bibliography{sp_dantzig}
\bibliographystyle{abbrvnat}


\iffullver

\clearpage

\section*{Appendix}

\subsection*{Proof of Theorem~\ref{thm:1}}

This proof essentially follows that of \citet[Theorem 1]{ChaP11}, with
a small trick to reformulate the GDS-SP to the formulation discussed
in the original proof. 

From the definition of the proximal steps in Algorithm~\ref{alg:1},
\begin{align*}
 \vv_{k+1} &= (I + \sigma \partial \cG)^{-1} (\vv_k + \sigma (\vX^T\vy - \vX^T\vX \vw'_k) ) \\
 \vw_{k+1} &= (I + \tau \partial \cF)^{-1} (\vw_k + \tau \vX^T\vX \vv'_{k+1}) ,
\end{align*}
it is implied that
\begin{equation}\label{eq:subg}
\begin{aligned}
 \partial \cG(\vv_{k+1}) &\ni \frac{\vv_k - \vv_{k+1}}{\sigma} + \vX^T\vy - \vX^T\vX \vw'_k\\
 \partial \cF(\vw_{k+1}) &\ni \frac{\vw_k - \vw_{k+1}}{\tau} + \vX^T\vX \vv'_{k+1} .
\end{aligned}
\end{equation}
Since $\cG$ and $\cH$ are convex functions, it follows for any $(\vw,\vv)$,
\begin{align*}
 \cG(\vv) &\ge \cG(\vv_{k+1}) + \sigma^{-1} \langle \vv_k - \vv_{k+1}, \vv - \vv_{k+1} \rangle \\
 & \quad + \langle \vX^T\vy - \vX^T\vX\vw'_k, \vv - \vv_{k+1} \rangle \\
 \cF(\vw) &\ge \cF(\vw_{k+1}) + \tau^{-1} \langle \vw_k - \vw_{k+1}, \vw - \vw_{k+1} \rangle \\
 &\quad + \langle \vX^T\vX(\vw-\vw_{k+1}), \vv'_{k+1} \rangle .
\end{align*}
We define augmentations of $\vw$'s with $\vy$, e.g. $\vz_k := [\vy;
\vw_k] \in \R^{n+p}$, and $\cH(\vz) := \cH(\vy,\vw) = \cF(\vw)$. Then
the preceding inequalities can be rewritten as follows,
\begin{equation}\label{eq:pf.0}
\begin{aligned}
 \cG(\vv) &\ge \cG(\vv_{k+1}) + \sigma^{-1} \langle \vv_k - \vv_{k+1}, \vv - \vv_{k+1} \rangle \\
 & \quad + \langle \vA\vz'_k, \vv - \vv_{k+1} \rangle \\
 \cH(\vz) &\ge \cH(\vz_{k+1}) + \tau^{-1} \langle \vz_k - \vz_{k+1}, \vz - \vz_{k+1} \rangle \\
 &\quad - \langle \vA(\vz-\vz_{k+1}), \vv'_{k+1} \rangle .
\end{aligned}
\end{equation}
Summing both inequalities and using an elementary result $\langle
a-c,b-c\rangle = \|a-c\|^2/2 + \|b-c\|^2/2 - \|a-b\|^2/2$, we have
\begin{equation}\label{eq:pf.1}
\begin{aligned}
& \frac{\|\vv-\vv_k\|^2}{2\sigma} + \frac{\|\vz-\vz_k\|^2}{2\tau} \\
& \; \ge [ \langle \vA\vz_{k+1}, \vv \rangle - \cG(\vv) + \cH(\vz_{k+1})]\\
& \quad - [\langle \vA\vz, \vv_{k+1} \rangle - \cG(\vv_{k+1}) + \cH(\vz) ]\\
& \quad + \frac{\|\vv-\vv_{k+1}\|^2}{2\sigma} + \frac{\|\vz-\vz_{k+1}\|^2}{2\tau} \\
& \quad + \frac{\|\vv_k - \vv_{k+1}\|^2}{2\sigma} + \frac{\|\vz_k - \vz_{k+1} \|^2}{2\tau} \\
& \quad + \langle \vA(\vz_{k+1} - \vz'_k), \vv_{k+1} - \vv \rangle\\
& \quad - \langle \vA(\vz_{k+1} - \vz), \vv_{k+1} - \vv'_{k+1} \rangle .
\end{aligned}
\end{equation}
Replacing the extragradient steps,
$$
\begin{cases}
 \vz'_k &= \vz_k + \theta (\vz_k - \vz_{k-1}) \\
 \vv'_{k+1} &= \vv_{k+1} ,
\end{cases}
$$
the last two lines of \eqref{eq:pf.1} can be bounded as follows,
\begin{align*}
  & \langle \vA(\vz_{k+1} - \vz'_{k+1}, \vv_{k+1} - \vv \rangle \\
  & \; \ge \langle \vA(\vz_{k+1} - \vz_k), \vv_{k+1}- \vv \rangle \\
  & \quad - \theta \langle \vA(\vz_k - \vz_{k-1})), \vv_k- \vv \rangle \\
  & \quad - \theta L \|\vz_k - \vz_{k-1}\| \|\vv_{k+1}- \vv_k \| \\
  & \; \ge \langle \vA(\vz_{k+1} - \vz_k), \vv_{k+1}- \vv \rangle \\
  & \quad - \theta \langle \vA(\vz_k - \vz_{k-1})), \vv_k- \vv \rangle \\
  & \quad - \theta^2 \sqrt{\sigma\tau} L \frac{\|\vz_k -
    \vz_{k-1}\|^2}{2\tau} - \sqrt{\sigma\tau} L \frac{\|\vv_{k+1}-
    \vv_k \|^2}{2\sigma}\end{align*} where the first inequality is due
to Cauchy-Schwarz and $L := \|\vA\|$, and the second is using $|ab|
\le (\alpha/2) a^2 + 1/(2\alpha) b^2$ for any $\alpha>0$ (in this case
$\alpha = \sqrt{\sigma/ \tau}$).

Combining this with the full inequality~\eqref{eq:pf.1}, we have
\begin{align*}
& \frac{\|\vv-\vv_k\|^2}{2\sigma} + \frac{\|\vz-\vz_k\|^2}{2\tau} \\
& \; \ge [ \langle \vA\vz_{k+1}, \vv \rangle - \cG(\vv) + \cH(\vz_{k+1})]\\
& \quad - [\langle \vA\vz, \vv_{k+1} \rangle - \cG(\vv_{k+1}) + \cH(\vz) ]\\
& \quad + \frac{\|\vv-\vv_{k+1}\|^2}{2\sigma} 
 + \frac{\|\vz-\vz_{k+1}\|^2}{2\tau} \\
& \quad \left( 1 - \sqrt{\sigma\tau} L \right) \frac{\|\vv_{k+1} - \vv_k\|^2}{2\sigma} \\
& \quad + \frac{\|\vz_k - \vz_{k+1} \|^2}{2\tau} 
 - \theta^2\sqrt{\sigma\tau} L \frac{\|\vz_k - \vz_{k-1}\|^2}{2\tau} \\
& \quad +  \langle \vA(\vz_{k+1} - \vz_k), \vv_{k+1}- \vv \rangle \\
& \quad - \theta \langle \vA(\vz_k - \vz_{k-1})), \vv_k- \vv \rangle \\
\end{align*}

If we choose parameters so that $\theta = 1$ and $\sigma\tau L^2 \le 1$, 
and define 
$$
\Delta_k := \frac{\|\vv-\vv_k\|^2}{2\sigma} + \frac{\|\vz-\vz_k\|^2}{2\tau},
$$
it follows that
\begin{align*}
 \Delta_k
& \; \ge [ \langle \vA\vz_{k+1}, \vv \rangle - \cG(\vv) + \cH(\vz_{k+1})]\\
& \quad - [ \langle \vA\vz, \vv_{k+1} \rangle - \cG(\vv_{k+1}) + \cH(\vz)] \\
& \quad + \Delta_{k+1} \\
& \quad + \left( 1 - \sqrt{\sigma\tau} L \right) \frac{\|\vv_{k+1} - \vv_k\|^2}{2\sigma} \\
& \quad + \frac{\|\vz_k - \vz_{k+1} \|^2}{2\tau} - \sqrt{\sigma\tau} L \frac{\|\vz_k - \vz_{k-1}\|^2}{2\tau} \\
& \quad +  \langle \vA(\vz_{k+1} - \vz_k), \vv_{k+1}- \vv \rangle \\
& \quad -  \langle \vA(\vz_k - \vz_{k-1})), \vv_k- \vv \rangle .
\end{align*}

Summing up the above inequality for $k=0$ to $t-1$, with
$\vz_{-1}=\vz_0$ and $\vv_{-1} = \vv_0$, we get
\begin{align*}
 \Delta_0
 \ge & \sum_{k=1}^t \Big\{ [\langle \vA\vz_k, \vv \rangle - \cG(\vv) + \cH(\vz_k) ]\\
& \quad - [\langle \vA\vz, \vv_k \rangle - \cG(\vv_k) + \cH(\vz) ] \Big\}\\
& \;\; + \Delta_t \\
& \quad + \left( 1 - \sqrt{\sigma\tau} L \right) \sum_{k=1}^t \frac{\|\vv_k - \vv_{k-1}\|^2}{2\sigma} \\
& \quad + \left( 1 - \sqrt{\sigma\tau} L \right) \sum_{k=1}^{t-1} \frac{\|\vz_k - \vz_{k-1} \|^2}{2\tau} + \frac{\|\vz_t - \vz_{t-1}\|^2}{2\tau} \\
& \;\; + \langle \vA (\vz_t - \vz_{t-1}), \vv_t - \vv \rangle \\
 \ge & \sum_{k=1}^t \Big\{ [\langle \vA\vz_k, \vv \rangle - \cG(\vv) + \cH(\vz_k) ]\\
& \quad - [\langle \vA\vz, \vv_k \rangle - \cG(\vv_k) + \cH(\vz) ] \Big\}\\
& \;\; + \Delta_t \\
& \quad + \left( 1 - \sqrt{\sigma\tau} L \right) \sum_{k=1}^t \frac{\|\vv_k - \vv_{k-1}\|^2}{2\sigma} \\
& \quad + \left( 1 - \sqrt{\sigma\tau} L \right) \sum_{k=1}^{t-1} \frac{\|\vz_k - \vz_{k-1} \|^2}{2\tau} + \frac{\|\vz_t - \vz_{t-1}\|^2}{2\tau} \\
& \;\; - \frac{\| \vz_t - \vz_{t-1} \|^2}{2\tau} - \tau L^2 \frac{\|\vv_t - \vv\|^2}{2} .
\end{align*}

That is, for any $(\vz,\vv)$,
\begin{equation}\label{eq:pf.3}
\begin{aligned}
&  (1 - \tau\sigma L^2) \frac{\|\vv-\vv_t\|^2}{\sigma} + \frac{\|\vz-\vz_t\|^2}{\tau}  \\
& \quad + 2 \sum_{k=1}^t \Big\{ [ \langle \vA\vz_k, \vv \rangle - \cG(\vv) + \cH(\vz_k)]\\
& \quad - [\langle \vA\vz, \vv_k \rangle - \cG(\vv_k) + \cH(\vz) ] \Big\}\\
& \quad + \left( 1 - \sqrt{\sigma\tau} L \right) \sum_{k=1}^t \frac{\|\vv_k - \vv_{k-1}\|^2}{\sigma} \\
& \quad + \left( 1 - \sqrt{\sigma\tau} L \right) \sum_{k=1}^{t-1} \frac{\|\vz_k - \vz_{k-1} \|^2}{\tau} \\
& \le \frac{\|\vv-\vv_0\|^2}{\sigma} + \frac{\|\vz-\vz_0\|^2}{\tau}  .
\end{aligned}
\end{equation}

For any saddle point $(\vz^*,\vv^*)$ satisfying the conditions in
\eqref{eq:sp}, we observe that
\begin{equation}\label{eq:pf.saddle}
\begin{aligned}
& [\langle \vA \vz_k, \vv^* \rangle - \cG(\vv^*) + \cH(\vz_k)] \\
& \;\; \ge [\langle \vA \vz^*, \vv^* \rangle - \cG(\vv^*) + \cH(\vz^*)] \\
& \;\; \ge [\langle \vA \vz^*, \vv_k \rangle - \cG(\vv_k) + \cH(\vz^*)] .
\end{aligned}
\end{equation}
That is, for $(\vz,\vv) = (\vz^*,\vv^*)$ the first summation in
\eqref{eq:pf.3} is non-negative, and therefore $(\vz_k,\vv_k)$ is
bounded, showing the property (a).

Also, from \eqref{eq:pf.3} due the convexity of $\cG$ and $\cH$, we
have the following result for $\overline \vz_t = \frac{1}{t}
\sum_{k=1}^t \vz_k$ and $\overline \vv_t = \frac{1}{t}\sum_{k=1}^t
\vv_k$,
\begin{equation}\label{eq:pf.4}
\begin{aligned}
& [ \langle \vA \overline \vz_t, \vv \rangle - \cG(\vv) + \cH(\overline \vz_t) ] - [\langle \vA\vz, \overline \vv_t \rangle - \cG(\overline \vv_t) + \cH(\vz) ] \\
& \;\; \le \frac{1}{t} \left( \frac{\|\vv - \vv_0\|^2}{2\sigma} + \frac{\|\vz - \vz_0\|^2}{2\tau} \right) \\
& \;\; \le \frac{1}{t} \left( \frac{\|\vv - \vv^* \|^2}{2\sigma} + \frac{\|\vv^* - \vv_0\|^2}{2\sigma} \right .\\ 
& \quad  + \left. \frac{\|\vz - \vz^*\|^2}{2\tau} + \frac{\|\vz^* - \vz_0\|^2}{2\tau} \right) \\
& \;\; \le \frac{1+C}{t} \left(  \frac{\|\vv^* - \vv_0\|^2}{2\sigma} +  \frac{\|\vz^* - \vz_0\|^2}{2\tau} \right) ,
\end{aligned}
\end{equation}
where $C = 1/(1-\tau\sigma L^2)$ and the last inequality is due to
the previous result (a). This shows the first part of (b).

Furthermore, with $t \to \infty$, \eqref{eq:pf.4} implies for any
limit point $(\hat \vz, \hat \vv)$ of $(\overline \vz_t, \overline
\vv_t)$ that
$$
  [ \langle \vA \hat \vz, \vv \rangle - \cG(\vv) + \cH(\hat \vz) ] - [\langle \vA\vz, \hat \vv \rangle - \cG(\hat \vv) + \cH(\vz) ] \le 0
$$
since $\cH$ and $\cG$ are l.s.c., i.e.
$$
 \liminf_{t \to \infty} \cH (\overline \vz_t) \ge \cH (\hat \vz), \;\; \liminf_{t\to \infty} \cG(\overline \vv_t) \ge \cG(\hat \vv) .
$$
This implies that $(\hat \vz, \hat \vv)$ is a saddle-point of \eqref{eq:gds.sp}, the second claim in (b).

Finally, since $(1-\sqrt{\sigma\tau}L) \ge 0$, \eqref{eq:pf.3} also
implies that $\|\vv_k-\vv_{k-1}\| \to 0$ and $\|\vz_k-\vz_{k-1}\| \to
0$ as $k\to\infty$, and therefore $(\vz_k,\vv_k)$ should have a limit,
say $(\tilde \vz, \tilde \vv)$. Then again from \eqref{eq:pf.3}, we have at the limit
$$
 [ \langle \vA \tilde z, \vv^* \rangle - \cG(\vv^*) + \cH(\tilde z)]
 - [\langle \vA\vz^*, \tilde \vv \rangle - \cG(\tilde \vv) + \cH(\vz^*) ] = 0 ,
$$
which tells that $(\tilde \vz, \tilde \vv)$ is also a saddle-point, showing the last claim (c). \qed

\subsection*{Proof of Theorem~\ref{thm:2}}

Hence $\cF$ satisfies the local strong
convexity~\eqref{eq:strong.conv}, so does the function $\cH(\vz) =
\cH(\vy,\vw) = \cF(\vw)$ on augmented vectors $\vz = (\vy,\vw)$, and
from the expression of subgradients~\eqref{eq:subg} we have (with
$\tau'_k = 2\tau_k$)
\begin{align*}
 \cH(\vz^*) &\ge \cH(\vz_{k+1}) + \left\langle \frac{\vz_k - \vz_{k+1}}{\tau'_k}, \vz^* - \vz_{k+1} \right\rangle \\
& - \left\langle \vA (\vz^*-\vz_{k+1}), \frac{\vv'_{k+1}}{2} \right\rangle + \frac{\delta}{2} \|\vz^* - \vz_{k+1}\|^2.
\end{align*}
Also, from the convexity of $\cG$~\eqref{eq:pf.0},
\begin{align*}
  \cG(\vv^*) &\ge \cG(\vv_{k+1}) + \left\langle \frac{\vv_k - \vv_{k+1}}{\sigma_k}, \vv^* - \vv_{k+1} \right\rangle \\
 & \quad + \langle \vA\vz'_k, \vv^* - \vv_{k+1} \rangle .
\end{align*}
With these, the previous inequality~\eqref{eq:pf.1} modifies as
follows,
\begin{align*}
& \frac{\|\vv^*-\vv_k\|^2}{2\sigma_k} + \frac{\|\vz^*-\vz_k\|^2}{2\tau'_k} \\
& \; \ge [ \langle \vA\vz_{k+1}, \vv^* \rangle - \cG(\vv^*) + \cH(\vz_{k+1})]\\
& \quad - [\langle \vA\vz^*, \vv_{k+1} \rangle - \cG(\vv_{k+1}) + \cH(\vz^*) ]\\
& \quad + \frac{\delta}{2} \|\vz^* - \vz_{k+1}\|^2 + \frac{\|\vv^*-\vv_{k+1}\|^2}{2\sigma_k} + \frac{\|\vz^*-\vz_{k+1}\|^2}{2\tau'_k} \\
& \quad + \frac{\|\vv_k - \vv_{k+1}\|^2}{2\sigma_k} + \frac{\|\vz_k - \vz_{k+1} \|^2}{2\tau'_k} \\
& \quad + \langle \vA(\vz_{k+1} - \vz'_k), \vv_{k+1} - \vv^* \rangle\\
& \quad - \langle \vA(\vz_{k+1} - \vz^*), \vv_{k+1} - \frac{\vv'_{k+1}}{2} \rangle .
\end{align*}
From \eqref{eq:pf.saddle}, the expression in the first two terms of
the right-hand side is bounded below by zero.

Choosing the extragradient steps as follows,
$$
\begin{cases}
 \vz'_k &= \vz_k + \theta_{k-1} (\vz_k - \vz_{k-1}) \\
 \vv'_{k+1} &= 2 \vv_{k+1} ,
\end{cases}
$$
it follows that
\begin{align*}
& \frac{\|\vv^*-\vv_k\|^2}{2\sigma_k} + \frac{\|\vz^*-\vz_k\|^2}{2\tau'_k} \\
& \; \ge \frac{\delta}{2} \|\vz^* - \vz_{k+1}\|^2 + \frac{\|\vv^*-\vv_{k+1}\|^2}{2\sigma_k} + \frac{\|\vz^*-\vz_{k+1}\|^2}{2\tau'_k} \\
& \quad + \frac{\|\vv_k - \vv_{k+1}\|^2}{2\sigma_k} + \frac{\|\vz_k - \vz_{k+1} \|^2}{2\tau'_k} \\
& \quad + \langle \vA(\vz_{k+1} - \vz_k), \vv_{k+1} - \vv^* \rangle\\
& \quad - \theta_{k-1} \langle \vA(\vz_{k} - \vz_{k-1}), \vv_k - \vv^* \rangle\\
& \quad - \theta_{k-1} L \| \vz_k - \vz_{k-1} \| \| \vv_{k+1} - \vv_k\| .
\end{align*}

The rest of the proof is very similar to that of Section 5 of
\citet{ChaP11}, from eq. (39) to (42) therein. From the above, we
have
\begin{align*}
& \frac{\|\vv^*-\vv_k\|^2}{\sigma_k} + \frac{\|\vz^*-\vz_k\|^2}{\tau'_k} \\
& \; \ge (1+\delta\tau'_k)\frac{\tau'_{k+1}}{\tau'_k} \frac{\|\vz^* - \vz_{k+1}\|^2}{\tau'_{k+1}} + \frac{\sigma_{k+1}}{\sigma_k} \frac{\|\vv^*-\vv_{k+1}\|^2}{\sigma_{k+1}} \\
& \quad + \frac{\|\vv_k - \vv_{k+1}\|^2}{\sigma_k} + \frac{\|\vz_k - \vz_{k+1} \|^2}{\tau'_k} - \frac{\|\vv_{k+1}-\vv_k\|^2}{\sigma_k}\\
& \quad + 2\langle \vA(\vz_{k+1} - \vz_k), \vv_{k+1} - \vv^* \rangle\\
& \quad - 2\theta_{k-1} \langle \vA(\vz_{k} - \vz_{k-1}), \vv_k - \vv^* \rangle\\
& \quad - \theta_{k-1}^2 L^2 \sigma_k \tau'_{k-1}\frac{\| \vz_k - \vz_{k-1} \|^2}{\tau'_{k-1}} .
\end{align*}
We choose sequences $\tau'_k$ and $\sigma_k$ such that
$$
 (1+\delta\tau'_k) \frac{\tau'_{k+1}}{\tau'_k} = \frac{\sigma_{k+1}}{\sigma_k} = \frac{1}{\theta_k} = \frac{\tau'_k}{\tau'_{k+1}} > 1 .
$$
Denoting 
$$
 \Delta_k := \frac{\|\vv^* - \vv_k\|^2}{\sigma_k} + \frac{\|\vz^*-\vz_k\|^2}{\tau'_k} ,
$$
dividing the both sides of the above inequality by $\tau'_k$, and
using $L^2 \sigma_k \tau'_k = L^2 \sigma_0 \tau'_0 \le 1$, we get
\begin{align*}
& \frac{\Delta_k}{\tau'_k} 
 \ge \frac{\Delta_{k+1}}{\tau'_{k+1}} 
 + \frac{\|\vz_k - \vz_{k+1} \|^2}{(\tau'_k)^2} - \frac{\| \vz_k - \vz_{k-1} \|^2}{(\tau'_{k-1})^2}  \\
& \quad + \frac{2}{\tau'_k} \langle \vA(\vz_{k+1} - \vz_k), \vv_{k+1} - \vv^* \rangle\\
& \quad - \frac{2}{\tau'_{k-1}} \langle \vA(\vz_{k} - \vz_{k-1}), \vv_k - \vv^* \rangle .
\end{align*}
Applying this inequality for $k=0,\cdots,(t-1)$, and using $\vz_{-1} = \vz_0$, it leads to
\begin{align*}
 \frac{\Delta_0}{\tau'_0} 
& \ge \frac{\Delta_t}{\tau'_t}  + \frac{\|\vz_{t-1} - \vz_t \|^2}{(\tau'_{t-1})^2} \\
& \quad + \frac{2}{\tau'_{t-1}} \langle \vA(\vz_t - \vz_{t-1}), \vv_t - \vv^* \rangle\\
& \ge \frac{\Delta_t}{\tau'_t}  + \frac{\|\vz_{t-1} - \vz_t \|^2}{(\tau'_{t-1})^2} \\
& \quad - \frac{\|\vz_t - \vz_{t-1}\|^2}{(\tau'_{t-1})^2} - L^2 \|\vv_t - \vv^* \|^2 .
\end{align*}
Rearranging terms, using $L^2 \sigma_k \tau'_k = L^2 \sigma_0 \tau'_0
$, and replacing $\tau'_k = 2\tau_k$, we finally obtain
\begin{align*}
& 4 \tau_t^2 \frac{1-2 L^2\sigma_0\tau_0}{2\sigma_0\tau_0} \|\vv^* - \vv_t\|^2 + \|\vz^* - \vz_t\|^2\\
& \le 4 \tau_t^2 \left( \frac{\|\vz^*-\vz_0\|^2}{ 4\tau_0^2} + \frac{\|\vv^*-\vv_0\|^2}{2\sigma_0\tau_0} \right) .
\end{align*}
If we choose $\sigma_0, \tau_0$ so that $2 \sigma_0\tau_0 L^2 =1$, then
\begin{align*}
& \|\vz^* - \vz_t\|^2 \le 4 \tau_t^2 \left( \frac{\|\vz^*-\vz_0\|^2}{ 4\tau_0^2} + L^2 \|\vv^*-\vv_0\|^2 \right) .
\end{align*}
The proof completes if we show $\tau_t \sim t^{-1}$ for all $t \ge
t_0$. Note that our choice of $\tau_k$ satisfies (with $\tau'_k = 2\tau_k$),
$$
 (1+2 \delta \tau_k) \frac{\tau_{k+1}}{\tau_k} = \frac{\tau_k}{\tau_{k+1}},
$$
which is an identical choice to that of Lemma 1 of \citet{ChaP11},
replacing $\gamma = \delta$ therein. Therefore Corollary 1 of
\citet{ChaP11} applies as it is, which shows that $\lim_{t\to\infty} t
\delta \tau_t = 1$. \qed

\subsection*{FDR Control of the ODS}

\subsubsection*{Properties of $J_\vlam$ and $J_\vlam^D$}



\begin{proposition} \label{Props1} If $\va$, $\vb \in \mathbb{R}^p$
  are such that $|\va| \preceq |\vb|$, then the vectors sorted in
  decreasing component magnitudes satisfy $|\va|_{(\cdot)} \preceq
  |\vb|_{(\cdot)}$.
\end{proposition}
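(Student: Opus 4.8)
The plan is to use the variational (max–min) characterization of order statistics. For any $\vc \in \R^p$ and any index $i \in \{1,\dots,p\}$ one has
$$
 |c|_{(i)} = \max_{\substack{S \subseteq \{1,\dots,p\} \\ |S| = i}} \; \min_{j \in S} \, |c_j| ,
$$
the maximizing set $S$ being precisely (a choice of) the indices of the $i$ largest-magnitude entries. First I would record this identity, which is a one-line consequence of the definition of $|c|_{(i)}$ as the $i$th largest absolute value among $c_1,\dots,c_p$.

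Next, fix $i$. For every $i$-element set $S$, the hypothesis $|\va| \preceq |\vb|$ gives $|a_j| \le |b_j|$ for each $j \in S$, hence $\min_{j \in S} |a_j| \le \min_{j \in S} |b_j| \le |b|_{(i)}$. Taking the maximum over all such $S$ on the left-hand side and invoking the displayed identity yields $|a|_{(i)} \le |b|_{(i)}$. Since $i$ was arbitrary, $|\va|_{(\cdot)} \preceq |\vb|_{(\cdot)}$, which is the claim.

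As an alternative that sidesteps the max–min formula, I could argue by contradiction with a counting step: if $|a|_{(i)} > |b|_{(i)}$ for some $i$, then the set $T := \{\, j : |a_j| \ge |a|_{(i)} \,\}$ satisfies $|T| \ge i$, while for each $j \in T$ we have $|b_j| \ge |a_j| \ge |a|_{(i)} > |b|_{(i)}$; thus strictly more than $i-1$ coordinates of $\vb$ exceed $|b|_{(i)}$, contradicting the definition of the $i$th largest magnitude of $\vb$.

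The only real obstacle here is bookkeeping: one must be careful about ties among magnitudes and about strict versus non-strict inequalities in the definition of the order statistics, so that the counting step (or the choice of maximizing $S$) is airtight. The underlying monotonicity idea is elementary.
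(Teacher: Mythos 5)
Your proposal is correct. Both of your routes work, and the second one is essentially the paper's own argument in contrapositive form: the paper sorts $\va$ without loss of generality and then observes directly that the $k$ indices $j=1,\dots,k$ already give $k$ coordinates of $\vb$ with $|b_j| \ge |a_j| \ge |a|_{(k)}$, so the $k$th largest magnitude of $\vb$ is at least $|a|_{(k)}$ --- the same counting step you run by contradiction, just stated affirmatively and with the threshold set $S_k = \{b_i : b_i \ge a_k\}$. Your primary route via the identity $|c|_{(i)} = \max_{|S|=i}\min_{j\in S}|c_j|$ is a genuinely different (if mildly heavier) packaging: it buys you a clean one-line monotonicity argument with no reordering and no case analysis about ties, at the cost of first verifying the max--min characterization, whose proof is itself the same counting fact in disguise. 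Your worry about ties is handled automatically in either formulation: at most $i-1$ entries can strictly exceed the $i$th largest magnitude, and at least $i$ entries are greater than or equal to it, which is all either argument needs. So there is no gap; the paper simply takes the most elementary of your two options.
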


\begin{proof}
  Without loss of generality assume that $\va$ and $\vb$ are
  nonnegative and that $a_1\geq \ldots\geq a_p$. We will show that
  $a_k \leq b_{(k)}$ for $k\in \{1,\ldots,p\}$. Fix such $k$ and
  consider the set $S_k:=\{b_i:\ b_i \geq a_k\}$. It is enough to show
  that $|S_k|\geq k$, where $|S_k|$ is the number of elements in
  $S_k$. For each $j\in \{1,\ldots,k\}$ we have
\begin{equation*}
b_j \geq a_j \geq a_k\ \Longrightarrow\ b_j\in S_k,
\end{equation*} 
what proves the last statement.
\end{proof}

\begin{corollary}
\label{17021046}
Let $\va, \vb \in \mathbb{R}^p$ and $|\va| \preceq |\vb|$. Then
${J_\vlam (\va) \leq J_\vlam (\vb)}$ since $J_\vlam
(\va)=\vlam^T|\va|_{(\cdot)} \leq \vlam^T|\vb|_{(\cdot)}= J_\vlam
(\vb)$.
\end{corollary}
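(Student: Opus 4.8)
The plan is to reduce everything to the monotonicity statement already proved in Proposition~\ref{Props1}. First I would recall the definition $J_\vlam(\vx) = \sum_{i=1}^p \lambda_i \,|x|_{(i)} = \langle \vlam, |\vx|_{(\cdot)} \rangle$, where $|\vx|_{(\cdot)}$ is the vector obtained from $\vx$ by taking absolute values componentwise and then sorting the entries in nonincreasing order. The hypothesis $|\va| \preceq |\vb|$ is exactly the hypothesis of Proposition~\ref{Props1}, so I may invoke it directly to obtain $|\va|_{(\cdot)} \preceq |\vb|_{(\cdot)}$, i.e.\ $|a|_{(i)} \le |b|_{(i)}$ for every $i \in \{1,\dots,p\}$ (where, importantly, the two vectors are sorted \emph{independently}).

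Next I would use that the weights are nonnegative, $\lambda_1 \ge \cdots \ge \lambda_p \ge 0$. Multiplying each inequality $|a|_{(i)} \le |b|_{(i)}$ by $\lambda_i \ge 0$ preserves its direction, and summing over $i$ yields $\sum_{i=1}^p \lambda_i |a|_{(i)} \le \sum_{i=1}^p \lambda_i |b|_{(i)}$, which is precisely $J_\vlam(\va) \le J_\vlam(\vb)$. Put differently, the linear functional $\vx \mapsto \langle \vlam, \vx \rangle$ is monotone with respect to the componentwise partial order on the nonnegative orthant because $\vlam$ itself lies in that orthant, and both $|\va|_{(\cdot)}$ and $|\vb|_{(\cdot)}$ are nonnegative vectors related by $|\va|_{(\cdot)} \preceq |\vb|_{(\cdot)}$.

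There is essentially no obstacle here: the entire combinatorial content — that independently sorting the absolute-value vectors into decreasing magnitude preserves componentwise domination — is handled by Proposition~\ref{Props1}, and the remaining step is only the trivial monotonicity of a nonnegatively weighted sum. The single point worth a moment's care is the bookkeeping of the two sorting maps, namely that the index $i$ on either side refers to the $i$th largest magnitude of that particular vector and not to a common permutation; since Proposition~\ref{Props1} is stated in exactly this form, the matching is immediate. I would therefore present this as a one-line corollary, citing only the two ingredients Proposition~\ref{Props1} and $\vlam \succeq 0$, as is already done inline in the statement.
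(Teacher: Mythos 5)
Your proposal is correct and follows exactly the route the paper takes: the corollary's justification is given inline as $J_\vlam(\va)=\vlam^T|\va|_{(\cdot)} \leq \vlam^T|\vb|_{(\cdot)}= J_\vlam(\vb)$, which is precisely your combination of Proposition~\ref{Props1} with the nonnegativity of $\vlam$. Nothing is missing.
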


\begin{proposition}[Pulling to zero] \label{Props4} For fixed sequence
  $\lambda_1\geq\ldots\geq\lambda_p\geq0$, let $\vw\in \mathbb{R}^p$ be
  such that $w_j>0$ for some ${j\in \{1,\ldots,p\}}$. For
  $\varepsilon\in (0,w_j]$ define:
$$ (w_\varepsilon)_i=\left\{\begin{array}{ll} w_j-\varepsilon, & i=j\\w_i,& \textrm{otherwise} \end{array} \right.. $$
Then:
\newline
(i) $J_\lambda(\vw_\varepsilon) \leq J_\lambda(\vw)$,
\newline
(ii) if $\vlam \succ  0$, then $J_\lambda(\vw_\varepsilon) < J_\lambda(\vw).$
\end{proposition}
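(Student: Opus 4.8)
The plan is to prove both parts by writing the ordered $\ell_1$-norm as a sum over the sorted coordinates and tracking what happens to the sorted vector when one positive coordinate is decreased by $\varepsilon$. Fix $j$ with $w_j>0$ and let $\vw_\varepsilon$ be as in the statement. The key observation is that $|\vw_\varepsilon| \preceq |\vw|$ componentwise (only the $j$-th entry changes, from $w_j$ to $w_j-\varepsilon \geq 0$, and $|w_j-\varepsilon| \le |w_j|$ since $\varepsilon \in (0,w_j]$). Part (i) is then immediate from Corollary~\ref{17021046}: componentwise domination of absolute values implies $J_\vlam(\vw_\varepsilon) \le J_\vlam(\vw)$. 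So the only real content is the strict inequality in part (ii).

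For part (ii), assume $\vlam \succ 0$. First I would reduce to the nonnegative case by noting $J_\vlam(\vw) = J_\vlam(|\vw|)$, so we may assume all entries are nonnegative and $w_j > 0$. Let $\pi$ be a permutation sorting the entries of $\vw$ in decreasing order, and say $w_j$ sits in sorted position $r$ (if there are ties, pick $r$ to be the \emph{last} position holding the value $w_j$, so that $w_{(r)} = w_j$ and $w_{(r+1)} < w_j$, or $r = p$). Decreasing $w_j$ by $\varepsilon$ either leaves it at position $r$ (if $w_j - \varepsilon \ge w_{(r+1)}$) or moves it to some later position $r' > r$. In either case, writing out $J_\vlam(\vw) - J_\vlam(\vw_\varepsilon) = \sum_i \lambda_i\big(w_{(i)} - (w_\varepsilon)_{(i)}\big)$, every summand is $\ge 0$ by Proposition~\ref{Props1}, and the coordinate that was decreased contributes a strictly positive term. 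Concretely, in the no-reordering case the $r$-th summand is $\lambda_r \varepsilon > 0$ since $\lambda_r > 0$; in the reordering case one checks that the sorted vector strictly decreases in at least one position with a strictly positive $\lambda$ weight (the value $w_j$ is replaced by something strictly smaller in the block of positions it occupied). Summing, $J_\vlam(\vw) - J_\vlam(\vw_\varepsilon) > 0$.

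The main obstacle is handling ties cleanly: when several coordinates of $\vw$ equal $w_j$, the sorted vector $|\vw|_{(\cdot)}$ is unchanged under permuting those coordinates, so one must be careful to compare $|\vw|_{(\cdot)}$ and $|\vw_\varepsilon|_{(\cdot)}$ position-by-position rather than argue about which physical coordinate moved where. The clean way around this is to avoid tracking the permutation entirely: observe that $\sum_{i=1}^p (w_\varepsilon)_{(i)} = \sum_{i=1}^p w_i - \varepsilon < \sum_{i=1}^p w_i = \sum_{i=1}^p w_{(i)}$, while Proposition~\ref{Props1} gives $(w_\varepsilon)_{(i)} \le w_{(i)}$ for every $i$; hence there must exist at least one index $i^*$ with $(w_\varepsilon)_{(i^*)} < w_{(i^*)}$, and then
\[
J_\vlam(\vw) - J_\vlam(\vw_\varepsilon) \;=\; \sum_{i=1}^p \lambda_i\big(w_{(i)} - (w_\varepsilon)_{(i)}\big) \;\ge\; \lambda_{i^*}\big(w_{(i^*)} - (w_\varepsilon)_{(i^*)}\big) \;>\; 0
\]
using $\lambda_{i^*} > 0$ (here $\vlam \succ 0$ is exactly what is needed, since if $\lambda_p = 0$ the decrease could occur only in the last sorted slot and be invisible to $J_\vlam$). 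This sidesteps the tie bookkeeping completely and also makes part (i) fall out as the same computation without the strict step.
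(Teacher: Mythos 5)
Your proof is correct, but it takes a genuinely different route from the paper's for the strict inequality. The paper picks the permutation $\pi$ that sorts $\vw_\varepsilon$, so that $J_\vlam(\vw_\varepsilon)=\sum_i\lambda_{\pi(i)}(w_\varepsilon)_i$, and then applies the rearrangement inequality to bound $J_\vlam(\vw)=\sum_i\lambda_i w_{(i)}\ge\sum_i\lambda_{\pi(i)}w_i$; the difference then telescopes to exactly $\varepsilon\lambda_{\pi(j)}$, which is $\ge 0$ and strictly positive when $\vlam\succ 0$. Your final argument instead combines two facts: Proposition~\ref{Props1} gives componentwise domination $(w_\varepsilon)_{(i)}\le w_{(i)}$ for all $i$, while conservation of $\ell_1$ mass under sorting gives $\sum_i(w_\varepsilon)_{(i)}=\sum_i w_{(i)}-\varepsilon$, forcing a strict drop at some sorted position $i^*$, where $\lambda_{i^*}>0$ finishes the job. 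The paper's approach is more quantitative — it identifies the exact weight $\lambda_{\pi(j)}$ multiplying $\varepsilon$ and yields the explicit lower bound $\varepsilon\lambda_{\pi(j)}$ on the decrease, a form that the companion Proposition~\ref{Props3} reuses — whereas yours only certifies that a strict decrease occurs somewhere; in exchange, your argument sidesteps all permutation and tie bookkeeping, and your observation that part~(i) is an immediate consequence of Corollary~\ref{17021046} is a genuine simplification over the paper, which reproves it from scratch via the rearrangement inequality.
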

\begin{proof}
  Let $\pi:\{1,\ldots,p\}\longrightarrow\{1,\ldots,p\}$ be permutation
  such as $\sum_{i=1}^p\lambda_i
  (w_\varepsilon)_{(i)}=\sum_{i=1}^p\lambda_{\pi(i)}
  (w_\varepsilon)_i$ for each $i$ in $\{1,\ldots,p\}$. Using the
  rearrangement inequality~\citep{HarL94}
\begin{align*}
& J_\vlam(\vw)-J_\vlam(\vw_\varepsilon) 
=\sum_{i=1}^p\lambda_iw_{(i)}-\sum_{i=1}^p\lambda_{\pi(i)} (w_\varepsilon)_i \\
&\geq \sum_{i=1}^p\lambda_{\pi(i)}w_i-\sum_{i=1}^p\lambda_{\pi(i)} (w_\varepsilon)_i=\varepsilon\lambda_{\pi(j)}\geq0.
\end{align*}
If $\vlam \succ  0$, then the last inequality is strict.
\end{proof}

\begin{proposition}[Pulling to the mean] \label{Props3} For fixed
  sequence $\lambda_1\geq\ldots\geq\lambda_p\geq0$, let $\vw\in
  \mathbb{R}^p$ be such that $\vw \succeq 0$ and $w_j > w_l$ for some
  ${j,l\in \{1,\ldots,p\}}$. For $0 < \varepsilon \leq
  \frac{w_j-w_l}2$ define:
$$ (w_\varepsilon)_i=\left\{\begin{array}{ll} w_l+\varepsilon, & i=l\\
w_j-\varepsilon, & i=j\\
w_i,& \textrm{otherwise} \end{array} \right.. $$
Then:
\newline
(i) $J_\vlam(\vw_\varepsilon) \leq J_\vlam(\vw)$,
\newline
(ii) if $\lambda_1 > \ldots > \lambda_p$, then $J_\vlam(\vw_\varepsilon) < J_\vlam(\vw).$
\end{proposition}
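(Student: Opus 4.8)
The plan is to follow the template of the previous proposition (``Pulling to zero''), replacing the single-coordinate perturbation by the two-coordinate ``Robin Hood'' transfer, and to combine the rearrangement inequality~\citep{HarL94} with a careful choice of the sorting permutation. First I would record that the perturbed vector is again nonnegative: since $\vw\succeq 0$ and $0<\varepsilon\le\frac{w_j-w_l}{2}$, we have $w_l+\varepsilon\ge 0$ and $w_j-\varepsilon\ge\frac{w_j+w_l}{2}\ge 0$, so $J_\vlam(\vw)=\sum_{i=1}^p\lambda_i w_{(i)}$ and $J_\vlam(\vw_\varepsilon)=\sum_{i=1}^p\lambda_i (w_\varepsilon)_{(i)}$; moreover $(w_\varepsilon)_j=w_j-\varepsilon\ge w_l+\varepsilon=(w_\varepsilon)_l$, and this is where the hypothesis $\varepsilon\le\frac{w_j-w_l}{2}$ enters.

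Next I would choose a permutation $\pi$ of $\{1,\dots,p\}$ that simultaneously (a) realizes the sorted sum, $\sum_{i=1}^p\lambda_i (w_\varepsilon)_{(i)}=\sum_{i=1}^p\lambda_{\pi(i)}(w_\varepsilon)_i$, and (b) satisfies $\pi(j)<\pi(l)$. Such a $\pi$ exists: if $(w_\varepsilon)_j>(w_\varepsilon)_l$ then every decreasing rearrangement already puts $j$ ahead of $l$, while if $(w_\varepsilon)_j=(w_\varepsilon)_l$ (the case $\varepsilon=\frac{w_j-w_l}{2}$) the two coordinates are interchangeable and we may simply impose $\pi(j)<\pi(l)$. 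By the rearrangement inequality applied to the non-increasing sequence $\vlam$, any permutation satisfies $\sum_{i=1}^p\lambda_i w_{(i)}\ge\sum_{i=1}^p\lambda_{\pi(i)}w_i$. Subtracting the two evaluations of $J_\vlam$ and using that $w_i=(w_\varepsilon)_i$ for $i\notin\{j,l\}$, $w_j-(w_\varepsilon)_j=\varepsilon$, $w_l-(w_\varepsilon)_l=-\varepsilon$, I obtain
\begin{align*}
J_\vlam(\vw)-J_\vlam(\vw_\varepsilon)
&=\sum_{i=1}^p\lambda_i w_{(i)}-\sum_{i=1}^p\lambda_{\pi(i)}(w_\varepsilon)_i \\
&\ge\sum_{i=1}^p\lambda_{\pi(i)}\big(w_i-(w_\varepsilon)_i\big)
=\varepsilon\big(\lambda_{\pi(j)}-\lambda_{\pi(l)}\big).
\end{align*}
Since $\pi(j)<\pi(l)$ and $\vlam$ is non-increasing, $\lambda_{\pi(j)}-\lambda_{\pi(l)}\ge 0$, which gives (i). If in addition $\lambda_1>\cdots>\lambda_p$, then $\pi(j)<\pi(l)$ forces $\lambda_{\pi(j)}>\lambda_{\pi(l)}$, and as $\varepsilon>0$ the displayed bound is strictly positive, giving (ii).

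The only genuinely delicate point — and the step I expect to need the most care — is the existence of a decreasing rearrangement $\pi$ with $\pi(j)<\pi(l)$ in the boundary case $(w_\varepsilon)_j=(w_\varepsilon)_l$; this is precisely where one uses that $\varepsilon$ does not overshoot the midpoint, so that the desired relative order of coordinates $j$ and $l$ is compatible with some sorting. I would also emphasize that this argument never compares $\vw$ and $\vw_\varepsilon$ as multisets: strictness in (ii) is obtained directly from $\varepsilon(\lambda_{\pi(j)}-\lambda_{\pi(l)})>0$, which avoids having to rule out degenerate permutations of $\vw$.
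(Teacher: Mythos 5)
Your proposal is correct and follows essentially the same route as the paper's proof: pick a sorting permutation $\pi$ for $\vw_\varepsilon$ with $j$ ranked ahead of $l$ (the paper phrases this as $\lambda_{\pi(j)}\ge\lambda_{\pi(l)}$), apply the rearrangement inequality to bound $\sum_i\lambda_i w_{(i)}$ from below by $\sum_i\lambda_{\pi(i)}w_i$, and read off the difference as $\varepsilon(\lambda_{\pi(j)}-\lambda_{\pi(l)})\ge 0$, strict under strictly decreasing $\vlam$. Your added justification that such a $\pi$ exists precisely because $\varepsilon\le\frac{w_j-w_l}{2}$ keeps $(w_\varepsilon)_j\ge(w_\varepsilon)_l$ is a detail the paper leaves implicit, but it is the same argument.
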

\begin{proof}
  Let $\pi:\{1,\ldots,p\}\longrightarrow\{1,\ldots,p\}$ be permutation
  such as $\sum_{i=1}^p\lambda_i
  (w_\varepsilon)_{(i)}=\sum_{i=1}^p\lambda_{\pi(i)}
  (w_\varepsilon)_i$ for each $i$ in $\{1,\ldots,p\}$ and
  $\lambda_{\pi(j)}\geq\lambda_{\pi(l)}$. From the rearrangement
  inequality,
\begin{align*}
&  J_\vlam(\vw)-J_\vlam(\vw_\varepsilon)\\
&=\sum_{i=1}^p\lambda_iw_{(i)}-\sum_{i=1}^p\lambda_i(w_\varepsilon)_{(i)}\\
&=\sum_{i=1}^p\lambda_iw_{(i)}-\sum_{i=1}^p\lambda_{\pi(i)} (w_\varepsilon)_i\\
&\geq \sum_{i=1}^p\lambda_{\pi(i)}w_i-\sum_{i=1}^p\lambda_{\pi(i)} (w_\varepsilon)_i\\
&=\varepsilon\big(\lambda_{\pi(j)}-\lambda_{\pi(l)}\big)\geq0.
\end{align*}
If $\lambda_1 > \ldots > \lambda_p$, then the last inequality is strict.
\end{proof}

\begin{proposition}
\label{07141215}
Suppose that $\lambda_1\geq\ldots\geq\lambda_p\geq0$. For arbitrary
$\vx\in \mathbb{R}^p$, $\varepsilon>0$, and $l, j \in \{1,\dots,p\}$
define:
\begin{align*}
(x_\varepsilon)_i&=\left\{\begin{array}{ll} x_l+\varepsilon, & i=l\\ x_j-\varepsilon, & i=j\\x_i,& \textrm{otherwise} \end{array} \right., \\
 (\widetilde{x}_\varepsilon)_i&=\left\{\begin{array}{ll} x_j-\varepsilon, & i=j\\x_i,& \textrm{otherwise} \end{array} \right..
\end{align*}
Then:
\newline i) if $x_j>x_l\geq0$, then for $\varepsilon\in\big(0,(x_j-x_l)/2\big]$ it holds that $J_\vlam^D(\vx_{\varepsilon})\leq J_\vlam^D(\vx)$, 
\newline ii) if $x_j>0$, then for $\varepsilon\in\big(0,x_j\big]$ it holds $J_\vlam^D(\widetilde{\vx}_{\varepsilon})\leq J_\vlam^D(\vx)$.
\end{proposition}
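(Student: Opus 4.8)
\emph{Plan.} I would reduce both parts of the proposition to a single monotonicity fact about partial sums of the decreasingly sorted magnitudes. For $\vw\in\R^p$ and $1\le k\le p$ set $S_k(\vw):=\sum_{i=1}^k|\vw|_{(i)}$ and $\Lambda_k:=\sum_{i=1}^k\lambda_i>0$; the displayed formula for the dual norm then says $J_\vlam^D(\vw)=\max_{1\le k\le p}S_k(\vw)/\Lambda_k$. So it is enough to prove
\[
 S_k(\vx_\varepsilon)\le S_k(\vx)\qquad\text{and}\qquad S_k(\widetilde{\vx}_\varepsilon)\le S_k(\vx)
\]
for every $k$: dividing by $\Lambda_k$ and maximizing over $k$ then yields both claimed inequalities at once. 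The only elementary fact I would use is $S_k(\vw)=\max_{|I|=k}\sum_{i\in I}|w_i|$, i.e.\ the sum of the $k$ largest magnitudes is attained by the best index set of size $k$.

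\emph{Part (ii), the easy half.} Since $x_j>0$ and $\varepsilon\in(0,x_j]$ we have $0\le x_j-\varepsilon<x_j$, so $\widetilde{\vx}_\varepsilon$ agrees with $\vx$ outside coordinate $j$, where the magnitude does not increase; hence $|\widetilde{\vx}_\varepsilon|\preceq|\vx|$ componentwise. Proposition~\ref{Props1} then gives $|\widetilde{\vx}_\varepsilon|_{(\cdot)}\preceq|\vx|_{(\cdot)}$, i.e.\ $S_k(\widetilde{\vx}_\varepsilon)\le S_k(\vx)$ for all $k$, which is what is needed. (Directly: for any $I$ with $|I|=k$, $\sum_{i\in I}|(\widetilde{x}_\varepsilon)_i|\le\sum_{i\in I}|x_i|\le S_k(\vx)$, then maximize over $I$.)

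\emph{Part (i), the transfer argument.} From $x_j>x_l\ge0$ and $\varepsilon\le(x_j-x_l)/2$ I get $0\le x_l+\varepsilon\le x_j-\varepsilon$ and, crucially, $x_j\ge x_l+2\varepsilon$. Thus $|\vx_\varepsilon|$ is obtained from $|\vx|$ by a ``Robin Hood'' transfer: the $l$-th magnitude goes up by $\varepsilon$, the (strictly larger) $j$-th magnitude goes down by $\varepsilon$, and all others are unchanged. I would then fix $k$ and a set $I$ with $|I|=k$ and split into cases. If $l,j$ both lie in $I$ or both lie outside $I$, then $\sum_{i\in I}|(x_\varepsilon)_i|=\sum_{i\in I}|x_i|\le S_k(\vx)$; if $l\notin I$, $j\in I$, then $\sum_{i\in I}|(x_\varepsilon)_i|=\sum_{i\in I}|x_i|-\varepsilon\le S_k(\vx)$. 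The single case needing work is $l\in I$, $j\notin I$: taking $I':=(I\setminus\{l\})\cup\{j\}$ (which has size $k$) and using $|x_j|\ge|x_l|+2\varepsilon$,
\[
 \sum_{i\in I'}|x_i|=\sum_{i\in I}|x_i|-|x_l|+|x_j|\ \ge\ \sum_{i\in I}|x_i|+2\varepsilon,
\]
so that $\sum_{i\in I}|(x_\varepsilon)_i|=\sum_{i\in I}|x_i|+\varepsilon\le\sum_{i\in I'}|x_i|-\varepsilon\le S_k(\vx)$. Maximizing over $I$ gives $S_k(\vx_\varepsilon)\le S_k(\vx)$ for every $k$, closing Part (i).

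\emph{Main obstacle.} There is essentially no obstacle: everything is routine bookkeeping, and the sole place the hypotheses genuinely enter is the case $l\in I$, $j\notin I$ above — it uses $x_l\ge0$ so that $|(x_\varepsilon)_l|=x_l+\varepsilon$, and $\varepsilon\le(x_j-x_l)/2$ so that $|x_j|\ge|x_l|+2\varepsilon$, making the swap $l\mapsto j$ worth at least $2\varepsilon$ and hence more than compensating the $+\varepsilon$. One could alternatively observe that Part (i) is exactly the majorization $|\vx_\varepsilon|\prec|\vx|$ and invoke a standard transfer lemma, but I would keep the self-contained subset-maximum computation to stay consistent with Propositions~\ref{Props4} and~\ref{Props3}.
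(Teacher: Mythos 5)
Your proof is correct, and it hinges on exactly the same key observation as the paper's: the representation $J_\vlam^D(\vx)=\max_{k\le p} S_k(\vx)/\Lambda_k$ of the dual norm as a maximum of normalized top-$k$ partial sums (the paper phrases this as $J_\vlam^D=\max_{k\le p}J_{\lambda^k}$ with $\lambda^k_i=\Lambda_k^{-1}$ for $i\le k$ and $0$ otherwise), after which it suffices to show each term does not increase. Where you diverge is in how that termwise monotonicity is established: the paper simply declares the claim a consequence of Propositions~\ref{Props4} and~\ref{Props3}, which were proved earlier via the rearrangement inequality (applied to $|\vx|$, which is nonnegative and undergoes exactly the ``pulling to zero'' and ``pulling to the mean'' operations under the stated hypotheses), whereas you re-derive the needed inequality from scratch using $S_k(\vw)=\max_{|I|=k}\sum_{i\in I}|w_i|$ and a case analysis on whether $j$ and $l$ lie in $I$. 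Both are sound; the paper's route is shorter because it reuses machinery already in place, while your self-contained swap computation makes more transparent exactly where the hypotheses $x_l\ge 0$ and $\varepsilon\le(x_j-x_l)/2$ enter (namely the case $l\in I$, $j\notin I$, where the exchange $I\mapsto(I\setminus\{l\})\cup\{j\}$ gains at least $2\varepsilon$). There is no gap in either part.
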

\begin{proof}
  It is easy to observe that for any $\vx\in\mathbb{R}^p$, the dual to
  the sorted $\ell_1$-norm could be represented as
  ${J_\vlam^D(\vx)=\max\Big\{J_{\lambda^k}(x),\ k\leq p\Big\}}$ for
  $$
\lambda^k_i:=\left\{\begin{array}{cl}
      \big(\sum_{j=1}^k\lambda_j\big)^{-1},& i\leq k\\ 0,&
      \textrm{otherwise} \end{array} \right..
$$ 
The claim is therefore the straightforward consequence of Propositions
\ref{Props4} and \ref{Props3}.
\end{proof}

\begin{proposition}
\label{07141353}
If $|\vw| \preceq  |\widetilde{\vw}|$, then $J_\vlam^D(\vw)\leq J_\vlam^D(\widetilde{\vw})$.
\end{proposition}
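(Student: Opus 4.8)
The plan is to reduce the claim to the monotonicity of the ordered $\ell_1$-norm already recorded in Corollary~\ref{17021046} (and, underneath it, Proposition~\ref{Props1}), by using the representation of the dual norm as a pointwise maximum of ordered $\ell_1$-norms. Recall from the proof of Proposition~\ref{07141215} that
$$
 J_\vlam^D(\vx) = \max\big\{ J_{\lambda^k}(\vx) : 1 \le k \le p \big\}, \qquad
 \lambda^k_i = \begin{cases} \big(\textstyle\sum_{j=1}^k \lambda_j\big)^{-1}, & i \le k, \\ 0, & \textrm{otherwise}, \end{cases}
$$
and note that each weight vector $\lambda^k$ is itself nonnegative and non-increasing, so $J_{\lambda^k}$ is a legitimate ordered $\ell_1$-norm. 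The observation driving the argument is that $|\vw| \preceq |\widetilde\vw|$ feeds directly into Proposition~\ref{Props1}, which yields $|\vw|_{(\cdot)} \preceq |\widetilde\vw|_{(\cdot)}$, i.e. $|\vw|_{(i)} \le |\widetilde\vw|_{(i)}$ for every $i$.

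From here I would proceed as follows. Since the reasoning behind Corollary~\ref{17021046} uses nothing about the weight vector beyond its being nonnegative and non-increasing, it applies verbatim with $\lambda^k$ in place of $\vlam$, giving $J_{\lambda^k}(\vw) \le J_{\lambda^k}(\widetilde\vw)$ for each fixed $k$. Taking the maximum over $k \in \{1,\dots,p\}$ on both sides preserves the inequality, so $J_\vlam^D(\vw) = \max_k J_{\lambda^k}(\vw) \le \max_k J_{\lambda^k}(\widetilde\vw) = J_\vlam^D(\widetilde\vw)$, which is exactly the claim.

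There is essentially no hard step; the only point worth spelling out is that the monotonicity supplied by Corollary~\ref{17021046}/Proposition~\ref{Props1} is weight-agnostic, so it may be invoked for each $\lambda^k$ separately. A slightly more pedestrian alternative avoids the max-of-norms representation and works straight from the closed form $J_\vlam^D(\vw) = \max_{1 \le k \le p} \big(\sum_{i=1}^k |\vw|_{(i)}\big)\big/\big(\sum_{i=1}^k \lambda_i\big)$ stated earlier: Proposition~\ref{Props1} gives $\sum_{i=1}^k |\vw|_{(i)} \le \sum_{i=1}^k |\widetilde\vw|_{(i)}$ for every $k$, the denominators are identical and positive, hence each term of the maximum for $\vw$ is dominated by the corresponding term for $\widetilde\vw$, and taking the maximum concludes. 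Either route is a two-line argument once Proposition~\ref{Props1} is available.
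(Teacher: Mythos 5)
Your proof is correct and follows essentially the same route as the paper: the paper's own proof simply says the claim follows from Corollary~\ref{17021046} ``analogously as in proof of previous proposition,'' i.e.\ via the representation $J_\vlam^D(\vx)=\max_k J_{\lambda^k}(\vx)$ and the monotonicity of each $J_{\lambda^k}$ under $|\vw|\preceq|\widetilde\vw|$. You have merely spelled out the details the paper leaves implicit (including the equivalent closed-form variant), so there is nothing to correct.
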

\begin{proof}
Claim follows simply from Corollary~\ref{17021046}, analogously as in proof of previous proposition.
\end{proof}


\subsection*{Properties of the ODS with an Orthogonal Design}

Before proving Theorem~\ref{thm:ods}, we will recall results
concerning the SLOPE problem,
\begin{equation}\label{SLOPE_ortt}
 \min_{\vw} \;\; \frac{1}{2} \| \vy - \vX \vw \|^2 + J_\vlam(\vw) .
\end{equation} 
and derive some properties useful in analysis of the ODS
problem~\eqref{eq:ods}. 
Since $\frac 12\big\|\vy-\vX\vw\big\|_2^2=\frac
12\big\|\widetilde{\vy}-\vw\big\|_2^2+const$ and
$\vX^T(\vy-\vX\vw)=\widetilde{\vy}-\vw$ under orthogonal design, for
$\widetilde{\vy}:=\vX^T\vy$, without loss of generality we
will consider the case $\vX=\mathbf{I}_p$ (hereafter we also denote
$\tilde \vy$ by $\vy$ to simplify notations). Let $I_1$ be some subset
of $\{1,\ldots,p\}$ and $\overline{\vw_{I_1}}$ denotes the arithmetic
mean of subvector $\vw_{I_1}$, for any
$\vw\in\mathbb{R}^p$. \citet{BogB15} showed that the unique solution
to (\ref{SLOPE_ortt}) (uniqueness follows from strict convexity of
objective function) is output of the following
Algorithm~\ref{07171128}, which terminates in at most $n$ steps.


\begin{algorithm*}[!t]
\caption{Solution to SLOPE in the Orthogonal Case~\citep[Algorithm 3]{BogB15}}
\label{07171128}
 \textbf{Input}: Nonnegative and nonincreasing sequences $y$ and $\lambda$\;
 \While {$\vy-\vlam$ is not nonincreasing}{
   Identify strictly increasing subsequences, i.e. segments $I_i:=\{j,\ldots,l\}$ such that $$y_j-\lambda_j<\ldots<y_l-\lambda_l.$$
   For each $k\in I_i$ replace the values of $y$ and $\lambda$ by their average value over such segments $$y_k\leftarrow \overline{y_{I_i}},\qquad\lambda_k\leftarrow \overline{\lambda_{I_i}}$$
  }
\textbf{output:} $\vw_S = (\vy-\vlam)_+$\;
\end{algorithm*}

\noindent The same authors showed that (\ref{SLOPE_ortt}) can be casted as a quadratic program,
$$
\begin{array}{cl}
\min_{\vw\in\R^p} &\ \frac12\|\vy-\vw\|_2^2+\vlam^T \vw\\[.1cm]
\textrm{subject to}&\begin{array}{l}w_1\geq\ldots\geq w_p\geq0\end{array}.
\end{array}
$$

We aim to show that the ODS~(\ref{eq:ods}) can be reformulated as a
linear program, and then to show that its solution is unique and can
be given by the same Algorithm~\ref{07171128} for SLOPE.

Taking into account that $\vX=\mathbf{I}_p$, the ODS for the
orthogonal design can be rewritten as
\begin{equation}
\label{07141000}
 \min_\vw \ J_\vlam(\vw)\quad \textrm{subject to}\quad \vy-\vw\in C_{\vlam} ,
\end{equation}
where $C_\vlam$ is the unit ball in terms of the dual norm,
$J_\vlam^D$.  It is easy to see that if $\vy-\vw\in C_\vlam$, then
also $\vP_{\pi}(\vy-\vw)\in C_\vlam$ and $\vS(\vy-\vw)\in C_\vlam$ for
any permutation $\pi$ and diagonal matrix $\vS$ with $|S_{ii}|=1$,
$i=1,\ldots,p$.

\begin{proposition}
\label{07141019}
Suppose that $\vw^*$ is solution to (\ref{07141000}), $\pi$ is
arbitrary permutation of $\{1,\ldots,p\}$ and $\vS$ is diagonal matrix
such as $|S_{ii}|=1$ for all $i$. Then 
\newline i) $\vw^*_{\pi}:= \vP_{\pi} \vw^*$ is a solution to $$\min_\vw J_\vlam(\vw) \textrm{ s.t. } \vP_{\pi}\vy-\vw\in C_\vlam ,$$  
\newline ii) $\vw^*_\vS:=\vS\vw^*$ is a solution to $$\min_\vw J_\vlam(\vw)\quad \textrm{ s.t. } \vS\vy-\vw\in C_\vlam .$$
\end{proposition}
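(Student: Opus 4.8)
The plan is to exploit the fact that the objective $J_\vlam$ and the constraint set $C_\vlam$ are both invariant under coordinate permutations and sign flips, so that each of the two perturbed problems is, after relabelling the optimization variable, literally the same problem as \eqref{07141000}.

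First I would record the two invariances. Since $J_\vlam(\vw) = \sum_{i=1}^p \lambda_i \wi{i}$ depends on $\vw$ only through its sorted vector of absolute values, we have $J_\vlam(\vP_\pi \vw) = J_\vlam(\vw)$ for every permutation $\pi$ and $J_\vlam(\vS \vw) = J_\vlam(\vw)$ for every diagonal $\vS$ with $|S_{ii}| = 1$; the same identities then pass to the dual norm $J_\vlam^D$, so that $\vr \in C_\vlam \Leftrightarrow \vP_\pi \vr \in C_\vlam \Leftrightarrow \vS \vr \in C_\vlam$ (this invariance of $C_\vlam$ was in fact already observed just above the statement). I would also use that $\vP_\pi$ and $\vS$ are invertible on $\R^p$, with $\vP_\pi^{-1} = \vP_{\pi^{-1}}$ and $\vS^{-1} = \vS$.

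For part (i): the linear bijection $\vw \mapsto \vP_\pi \vw$ of $\R^p$ maps the feasible set of \eqref{07141000} onto that of the $\pi$-perturbed problem, because $\vy - \vw \in C_\vlam$ iff $\vP_\pi(\vy - \vw) = \vP_\pi \vy - \vP_\pi \vw \in C_\vlam$ by the invariance of $C_\vlam$; moreover it preserves the objective, $J_\vlam(\vP_\pi \vw) = J_\vlam(\vw)$. Consequently, if $\vw^*$ is optimal for \eqref{07141000}, then $\vP_\pi \vw^*$ is feasible for the perturbed problem with value $J_\vlam(\vw^*)$, while every feasible point $\vw'$ of the perturbed problem has $\vP_{\pi^{-1}} \vw'$ feasible for \eqref{07141000} with $J_\vlam(\vw') = J_\vlam(\vP_{\pi^{-1}}\vw') \ge J_\vlam(\vw^*)$; hence $\vw^*_\pi = \vP_\pi \vw^*$ is a solution. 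Part (ii) is identical with $\vP_\pi$ replaced by $\vS$, using $\vS \vy - \vS \vw = \vS(\vy - \vw)$ and $\vS^{-1} = \vS$.

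There is essentially no hard step here; the only point requiring a little care is to justify the permutation- and sign-invariance of $C_\vlam$ directly from the definition of $J_\vlam$ (and hence of $J_\vlam^D$), rather than taking it for granted, and to phrase the argument as a bijection between feasible sets so that it remains valid even before uniqueness of the solution to \eqref{07141000} has been established.
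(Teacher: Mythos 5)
Your proposal is correct and takes essentially the same approach as the paper: both arguments rest on the permutation- and sign-invariance of $J_\vlam$ and $C_\vlam$ and on transporting feasible points between the original and perturbed problems via the invertible maps $\vP_\pi$ and $\vS$. The paper merely phrases this as a proof by contradiction (a strictly better feasible point for the perturbed problem would yield one for the original), which is the contrapositive of your bijection argument.
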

\begin{proof}
  Suppose that there exists $\vw_0\in\mathbb{R}^p$ such as
  $J_\vlam(\vw_0)<J_\vlam(\vw^*_{\pi})$ and $\vP_{\pi}\vy-\vw_0\in
  C_\vlam$. Then we have $\vy-\vP^{-1}_{\pi}\vw_0\in C_\vlam$ and
  $J_\vlam(\vP^{-1}_{\pi}\vw_0)<J_\vlam(\vw^*)$, which contradicts the
  optimality of $\vw^*$. The second part can be shown similarly.
\end{proof}

Thanks to Proposition \ref{07141019}, without loss of generality we
assume that $y_1\geq\ldots\geq y_p\geq 0$. Indeed, if for an arbitrary
$\vy\in \mathbb{R}^p$, the solution $\vw^{|\vy|_{(\cdot)}}$ to
(\ref{07141000}) for ordered magnitudes of observations is known, the
original solution could be immediately recovered by $\vw^*=
\vS\vP_{\pi}\vw^{|\vy|_{(\cdot)}}$ with $\vP_{\pi}$ and $\vS$
satisfying $\vy=\vS\vP_{\pi}|\vy|_{(\cdot)}$. This coincides with the
analogous property of the SLOPE:


\begin{proposition}
Assume that $\lambda_1>\ldots>\lambda_p>0$, $y_1\geq\ldots\geq
y_p\geq0$ and let $\vw^*$ be solution to (\ref{07141000}). Then, for any
$j\in\{1,\ldots,p\}$ we have $0\leq w^*_j\leq y_j$.
\end{proposition}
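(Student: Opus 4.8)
The plan is a coordinatewise perturbation (``clamping'') argument: I would show that if $\vw^*$ violated $0\le w^*_j\le y_j$ in some coordinate, then truncating that coordinate into the interval $[0,y_j]$ (nonempty since $y_j\ge y_p\ge 0$) yields a point that is still feasible for \eqref{07141000} but has strictly smaller objective value, contradicting optimality. Concretely, define $\vw'$ by $w'_j:=\min\{\max\{w^*_j,0\},\,y_j\}$ for every $j$, so that $\vw'=\vw^*$ precisely when the claim already holds; assume for contradiction $\vw'\neq\vw^*$.

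First I would verify feasibility, i.e.\ $\vy-\vw'\in C_\vlam$, equivalently $J_\vlam^D(\vy-\vw')\le 1$. The key is to check coordinate by coordinate that $|(\vy-\vw')_j|\le|(\vy-\vw^*)_j|$: if $w^*_j\in[0,y_j]$ the two residuals coincide; if $w^*_j<0$ then $(\vy-\vw^*)_j=y_j-w^*_j\ge y_j\ge 0$ while $(\vy-\vw')_j=y_j$, so the magnitude does not increase (this is exactly where $y_j\ge 0$ is needed); if $w^*_j>y_j$ then $(\vy-\vw')_j=0$, which again cannot have larger magnitude. Hence $|\vy-\vw'|\preceq|\vy-\vw^*|$, and Proposition~\ref{07141353} gives $J_\vlam^D(\vy-\vw')\le J_\vlam^D(\vy-\vw^*)\le 1$, so $\vw'$ is feasible.

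Next I would compare objective values. By construction $|w'_j|\le|w^*_j|$ for every $j$, with strict inequality in every coordinate where $w^*_j\notin[0,y_j]$ (again using $y_j\ge 0$: if $w^*_j<0$ then $|w'_j|=0<|w^*_j|$; if $w^*_j>y_j\ge0$ then $0\le|w'_j|=y_j<w^*_j=|w^*_j|$). Since $J_\vlam$ depends only on the sorted absolute values, $J_\vlam(\vw')=J_\vlam(|\vw'|)$ and $J_\vlam(\vw^*)=J_\vlam(|\vw^*|)$, so I would pass from $|\vw^*|$ to $|\vw'|$ by applying Proposition~\ref{Props4}(ii) (``pulling to zero'', valid since $\vlam\succ 0$ because $\lambda_p>0$) once for each out-of-range coordinate, strictly decreasing $J_\vlam$ at each step. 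This yields $J_\vlam(\vw')<J_\vlam(\vw^*)$, contradicting optimality of $\vw^*$; hence $\vw'=\vw^*$, i.e.\ $0\le w^*_j\le y_j$ for all $j$.

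The only delicate points are (i) ensuring a \emph{strict} decrease of the objective — so that the conclusion holds for \emph{every} solution, not merely for some solution — which is why I use the strict ``pulling to zero'' lemma rather than just Corollary~\ref{17021046}; and (ii) the residual-domination step, where the hypothesis $y_j\ge 0$ is genuinely used, since otherwise truncating a negative coordinate up to $0$ could enlarge $|y_j-w_j|$ and destroy feasibility. Both become routine once the clamped vector $\vw'$ is set up as above.
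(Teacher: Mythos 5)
Your proof is correct and takes essentially the same route as the paper's: both derive a contradiction by perturbing any out-of-range coordinate into $[0,y_j]$, using Proposition~\ref{07141353} (coordinatewise domination of the residuals, which is where $y_j\ge 0$ enters) to preserve feasibility and the strict pulling-to-zero Proposition~\ref{Props4} to get a strict decrease of $J_\vlam$. The only difference is cosmetic: the paper treats the two violations $w^*_j<0$ and $w^*_j>y_j$ separately with a single-coordinate perturbation of $|\vw^*|$, whereas you clamp all coordinates simultaneously.
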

\begin{proof}
  Suppose first that for some $j$ it occurs $w_j^*<0$ and define
$$
(w_{\varepsilon})_i:=\left\{
    \begin{array}{cl} |w^*_j|-\varepsilon, & i=j\\ 
          |w^*_i|, &\textrm{otherwise} \end{array} \right..
$$ 
Fix $\varepsilon=|w_j^*|$. Then $|\vy-\vw_{\varepsilon}| \preceq |\vy-\vw^*|$,
hence Proposition \ref{07141353} yields
$J_\vlam^D(\vy-\vw_{\varepsilon})\leq J_\vlam^D(\vy-\vw^*)\leq 1$ and
$\vw_{\varepsilon}$ is feasible. Moreover, Proposition \ref{Props4}
gives $J_\vlam(\vw_{\varepsilon})<J_\vlam(|\vw^*|)=J_\vlam(\vw^*)$, which
leads to contradiction.

Suppose now that $w_j^*>y_j$. This gives that $w_j^*>0$. Define
$$
(w_{\varepsilon})_i:=\left\{\begin{array}{cl} w^*_j-\varepsilon, &
    i=j\\ w^*_i, & \textrm{otherwise} \end{array} \right.,
$$
and fix $\varepsilon:=w_j^*-y_j$. As before $\vw_{\varepsilon}$ is
feasible. Using again Proposition \ref{Props4}, we get
$J_\vlam(\vw_{\varepsilon})<J_\vlam(\vw^*)$ which contradicts the
optimality of $\vw^*$.
\end{proof}

\begin{proposition}
\label{07151336}
Assume that $\lambda_1>\ldots>\lambda_p>0$, $y_1\geq\ldots\geq
y_p\geq0$ and let $\vw^*$ be solution to (\ref{07141000}). Then it
occurs that $w_1^*\geq\ldots\geq w_p^*\geq0$ and $y_1-w_1^*\geq\ldots\geq
y_p-w_p^*\geq0$.
\end{proposition}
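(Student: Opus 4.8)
The plan is to derive all four inequalities from a single perturbation idea applied to two adjacent coordinates: ``pulling them toward their mean,'' either in $\vw^*$ or in the residual $\vr^*:=\vy-\vw^*$. The componentwise bounds $0\le w_j^*\le y_j$ are already supplied by the preceding proposition, and they give both $w_j^*\ge 0$ and $y_j-w_j^*\ge 0$ for every $j$; so the only remaining task is the two monotonicity chains. I would prove each by contradiction: if a chain fails anywhere on the finite index set it fails at an adjacent pair, and I would then perturb $\vw^*$ at that pair to produce a vector that is still feasible for \eqref{07141000} — by Proposition~\ref{07141215}(i), which says $J_\vlam^D$ of the residual does not increase under a pull-to-the-mean move — but has strictly smaller $J_\vlam$, by Proposition~\ref{Props3}(ii), whose strictness is available precisely because $\lambda_1>\cdots>\lambda_p$. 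This contradicts the optimality of $\vw^*$.

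Concretely, for $w_1^*\ge\cdots\ge w_p^*$, suppose $w_j^*<w_{j+1}^*$ for some $j$. Since $y_j\ge y_{j+1}$, the residual gap at that pair satisfies $(y_j-w_j^*)-(y_{j+1}-w_{j+1}^*)\ge w_{j+1}^*-w_j^*>0$. I would raise $w_j^*$ by $\varepsilon$ and lower $w_{j+1}^*$ by $\varepsilon$ with $\varepsilon=\tfrac12(w_{j+1}^*-w_j^*)$; by the displayed inequality this $\varepsilon$ is at most half the residual gap, so Proposition~\ref{07141215}(i) keeps the residual in $C_\vlam$, while pulling these two coordinates of $\vw^*$ toward their mean strictly lowers $J_\vlam$ by Proposition~\ref{Props3}(ii) (note $\vw^*\succeq 0$ by the preceding proposition). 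The resulting $\vw_\varepsilon$ beats $\vw^*$, a contradiction; together with $w_p^*\ge 0$ this gives the first chain.

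For $y_1-w_1^*\ge\cdots\ge y_p-w_p^*\ge 0$, suppose $r_j^*<r_{j+1}^*$ for some $j$. Because $y_j\ge y_{j+1}$, this forces $w_j^*-w_{j+1}^*\ge r_{j+1}^*-r_j^*>0$; in particular $w_j^*>w_{j+1}^*$, which is exactly what makes Proposition~\ref{Props3}(ii) applicable. I would lower $w_j^*$ by $\varepsilon$ and raise $w_{j+1}^*$ by $\varepsilon$ with $\varepsilon=\tfrac12(r_{j+1}^*-r_j^*)$ — at most $\tfrac12(w_j^*-w_{j+1}^*)$ by the inequality just noted — so that this single move simultaneously pulls the residual toward its mean (feasibility preserved by Proposition~\ref{07141215}(i)) and pulls $\vw^*$ toward its mean ($J_\vlam$ strictly decreased by Proposition~\ref{Props3}(ii)), again contradicting optimality; with $r_p^*\ge 0$ the second chain follows. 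The hard part will be the bookkeeping that ties an inversion in one of $\vw^*,\vr^*$, via the ordering of $\vy$, to a \emph{strict} inequality in the other, since that is what both validates Proposition~\ref{Props3}(ii) and lets one fixed small $\varepsilon$ meet the feasibility and the strict-descent requirements at the same time.
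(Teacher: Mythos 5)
Your proposal is correct and follows essentially the same route as the paper's proof: assume an inversion, observe via $y_j\ge y_{j+1}$ that it forces a strict inversion of the opposite sign in the other vector ($\vw^*$ or the residual), and apply the pull-to-the-mean move so that Proposition~\ref{07141215}(i) preserves feasibility while Proposition~\ref{Props3}(ii) strictly decreases $J_\vlam$, contradicting optimality. Your version is if anything slightly more explicit than the paper's, which only asserts the existence of thresholds $t_1,t_2>0$ and takes $\varepsilon<\min\{t_1,t_2\}$, whereas you pin down a concrete admissible $\varepsilon$ satisfying both range conditions simultaneously.
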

\begin{proof}
  From the previous propositions we have that $\vw^* \succeq \mathbf
  0$ and $\vy-\vw^* \succeq \mathbf 0$. We will show that for $1\leq
  j<l\leq p$ it holds $w_j^*\geq w_l^*$ and $y_j^*-w_j^*\geq
  y_l-w_l^*$. Suppose first that $w_j^*<w_l^*$ for some $j<l$ and
  denote $\vx:=\vy-\vw^*$. Since $y_j\geq y_l$, we have
  $x_j>x_l$. Define
\begin{align*}
(w_\varepsilon)_i &=\left\{\begin{array}{ll} w^*_j+\varepsilon, & i=j\\ w^*_l-\varepsilon, & i=l\\w^*_i,& \textrm{otherwise} \end{array} \right., \\
(x_\varepsilon)_i &=\left\{\begin{array}{ll} x_j-\varepsilon, & i=j\\x_l+\varepsilon, & i=l\\x_i,& \textrm{otherwise} \end{array} \right..
\end{align*}
Then $\vx_{\varepsilon}=\vy-\vw_{\varepsilon}$. From Propositions
\ref{Props3} and \ref{07141215}, there exist $t_1,t_2>0$ such that
$J_\vlam(\vw_{\varepsilon})<J_\vlam(\vw^*)$ for $\varepsilon\in(0,t_1)$
and $J_\vlam^D(\vx_{\varepsilon})\leq J_\vlam^D(\vx)$ for
$\varepsilon\in(0,t_2)$. Define $t:=\min\{t_1,t_2\}$ and fix
$\varepsilon \in (0,t)$. Then, we have
$J_\vlam^D(\vy-\vw_{\varepsilon})=J_\vlam^D(\vx_{\varepsilon})\leq
J_\vlam^D(\vx)\leq1$, hence $\vw_{\varepsilon}$ is feasible with smaller
value of objective.

Suppose that $y_j-w_j^*<y_l-w_l^*$ for $j<l$. This gives $x_j<x_l$ and
$w_j^*>w_l^*$. The feasible vector, $\vw_{\varepsilon}$, with smaller
value of objective, could be now constructed in an analogous manner,
again yielding the contradiction with optimality of $\vw^*$.
\end{proof}

Proposition \ref{07151336} states that including inequality
constraints $w_1\geq\ldots\geq w_p\geq0$ and $y_1-w_1\geq\ldots\geq
y_p-w_p\geq0$ to the problem~(\ref{07141000}) does not change the
set of solutions. These additional restraints simplify objective
function and as a result the task takes form of minimizing linear
function $\vlam^T\vw$. Moreover the condition $\vy-\vw\in C_\vlam$ can
now be represented by $p$ affine constraints of the form
$\sum_{i=1}^k(y_i-\lambda_i)\leq\sum_{i=1}^k w_i$. That is, the ODS
can be casted as a linear program. We will now show that after the
transformation, one can omit the conditions $y_1-w_1\geq\ldots\geq
y_p-w_p\geq0$, yielding an equivalent formulation
\begin{equation}
\label{07151335}
\begin{aligned}
\min_{\vw}& \;\;  \vlam^T\vw\\[.1cm]
\textrm{s.t.}&\left\{\begin{array}{l}\sum_{i=1}^k(y_i-\lambda_i)\leq\sum_{i=1}^k w_i,\ \ k=1,\ldots,p\\
w_1\geq\ldots\geq w_p\geq0\end{array}\right..
\end{aligned}
\end{equation}

\begin{proposition}
\label{07152152}
Let $\vw^*$ be a solution to (\ref{07151335}), for
$\lambda_1>\ldots>\lambda_p>0$ and $y_1\geq\ldots\geq y_p\geq0$. Then
$$
\begin{array}{lll}
i)\ \ \sum_{i=1}^j(y_i-\lambda_i)=\sum_{i=1}^j w^*_i\ \ \textrm{or}\ \ w^*_j=w_{j+1}^*,&&\\
ii)\ \  y_j-w^*_j\geq y_{j+1}-w^*_{j+1},&&
\end{array}
$$
for all $j\in\{1,\ldots,p\}$, with the convention that $w^*_{p+1}:=0$ and $y_{p+1}:=0$.
\end{proposition}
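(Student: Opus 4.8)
The plan is to prove both assertions by contradiction, in each case exhibiting a feasible perturbation of the claimed optimum $\vw^*$ that strictly lowers the linear objective $\vlam^T\vw$. Write $S_k := \sum_{i=1}^k(y_i-\lambda_i)$ and $W_k := \sum_{i=1}^k w_i^*$, call an index $k$ \emph{tight} when $W_k = S_k$, and adopt $W_0 = S_0 = 0$ (so $0$ is tight) together with the stated conventions $w_{p+1}^* = y_{p+1} = 0$. The constraints of (\ref{07151335}) are $W_k \ge S_k$ for $k=1,\dots,p$ and $w_1 \ge \cdots \ge w_p \ge 0$, with $\lambda_1 > \cdots > \lambda_p > 0$.

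For part (i), suppose some index $j$ is not tight ($W_j > S_j$) while $w_j^* > w_{j+1}^*$. If $j < p$, set $\tilde w_j = w_j^* - \varepsilon$, $\tilde w_{j+1} = w_{j+1}^* + \varepsilon$, and $\tilde w_i = w_i^*$ otherwise; if $j = p$, set $\tilde w_p = w_p^* - \varepsilon$. For $\varepsilon>0$ small enough (at most $\tfrac12(w_j^*-w_{j+1}^*)$, resp.\ $w_p^*$) the monotonicity constraints still hold, and among the partial-sum constraints only the $j$-th changes, decreasing by $\varepsilon$ and hence staying valid because it was slack. So $\tilde\vw$ is feasible, yet the objective changes by $\varepsilon(\lambda_{j+1}-\lambda_j)<0$ (resp.\ $-\varepsilon\lambda_p<0$) since the $\lambda_i$ are strictly decreasing and positive, contradicting optimality of $\vw^*$. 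Hence every $j$ is tight or satisfies $w_j^* = w_{j+1}^*$, which is (i).

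For part (ii), suppose $y_j - w_j^* < y_{j+1} - w_{j+1}^*$ for some $j$. Rearranging gives $w_j^* - w_{j+1}^* > y_j - y_{j+1} \ge 0$, so $w_j^* > w_{j+1}^*$, and then (i) forces $j$ to be tight. Let $a-1$ be the largest tight index strictly below $j$; then $a,\dots,j-1$ are non-tight, so (i) gives $w_a^* = \cdots = w_j^* =: v_L$, and subtracting $W_{a-1}=S_{a-1}$ from $W_j=S_j$ yields $v_L = \tfrac{1}{j-a+1}\sum_{i=a}^j(y_i-\lambda_i)$. Since $W_i > S_i$ strictly for $a \le i \le j-1$, a short computation gives $\tfrac{1}{i-a+1}\sum_{\ell=a}^i(y_\ell-\lambda_\ell) < v_L$ for all such $i$; taking $i=j-1$ and inserting the formula for $v_L$ shows $y_j - v_L \ge \lambda_j$, i.e.\ $y_j - w_j^* \ge \lambda_j$ (equality when the block is a singleton). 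If $w_{j+1}^*>0$, let $c$ be the smallest tight index $\ge j+1$ (it exists, since otherwise (i) would force $w_{j+1}^* = \cdots = w_p^* = 0$); the same reasoning gives $w_{j+1}^* = \cdots = w_c^* = \tfrac{1}{c-j}\sum_{i=j+1}^c(y_i-\lambda_i) =: v_R$ and, from strictness of $W_i \ge S_i$ for $j+1 \le i \le c-1$, $y_{j+1} - v_R \le \lambda_{j+1}$, i.e.\ $y_{j+1} - w_{j+1}^* \le \lambda_{j+1}$. Hence $y_j - w_j^* \ge \lambda_j > \lambda_{j+1} \ge y_{j+1} - w_{j+1}^*$, contradicting the assumption. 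If instead $w_{j+1}^* = 0$: when $j=p$ we have $y_{j+1}-w_{j+1}^* = 0 < \lambda_p \le y_j - w_j^*$, and when $j<p$, inspecting whether $j+1$ is tight gives $y_{j+1} \le \lambda_{j+1}$, so $y_{j+1}-w_{j+1}^* = y_{j+1} \le \lambda_{j+1} < \lambda_j \le y_j - w_j^*$; in either subcase the assumed strict inequality fails. This proves (ii).

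The feasibility bookkeeping in (i) is routine. The substantive point is that (i) forces $\vw^*$ to be piecewise constant on the index intervals cut out by the tight partial-sum constraints, each piece equal to the corresponding block average of $y-\lambda$; the strict inequalities $W_i > S_i$ inside a block then pin down $y_j - w_j^* \ge \lambda_j$ at its right endpoint and $y_{j+1} - w_{j+1}^* \le \lambda_{j+1}$ at the left endpoint of the next block, and strict monotonicity of $\vlam$ closes the gap. I expect the main obstacle to be treating the degenerate configurations uniformly — singleton blocks, a zero tail $w_{j+1}^* = \cdots = w_p^* = 0$, and the boundary index $j=p$ — under the conventions $w_{p+1}^* = y_{p+1} = 0$.
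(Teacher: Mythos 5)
Your proof is correct. Part (i) is essentially the paper's argument: the same $\varepsilon$-transfer from coordinate $j$ to $j+1$ (or the plain decrease at $j=p$), with the objective dropping by $\varepsilon(\lambda_j-\lambda_{j+1})>0$. Part (ii), however, takes a genuinely different route. The paper's proof of (ii) is a two-line reduction to (i): assuming $y_j-w_j^*<y_{j+1}-w_{j+1}^*$, it adds the feasibility constraints $\sum_{i=1}^{j-1}(y_i-w_i)\le\sum_{i=1}^{j-1}\lambda_i$ and $\sum_{i=1}^{j+1}(y_i-w_i)\le\sum_{i=1}^{j+1}\lambda_i$, halves, and uses $\lambda_{j+1}<\lambda_j$ together with the assumed inequality to conclude that the $j$-th partial-sum constraint is strictly slack; since the assumption also forces $w_j^*>w_{j+1}^*$, part (i) immediately contradicts optimality. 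You instead use (i) constructively to expose the block structure of the optimum (constant on maximal runs between consecutive tight indices, each block equal to the average of $y-\lambda$ over that run), and then read off $y_j-w_j^*\ge\lambda_j$ at the right end of a block and $y_{j+1}-w_{j+1}^*\le\lambda_{j+1}$ at the left end of the next, closing with $\lambda_j>\lambda_{j+1}$. Your version is longer and needs the case analysis you flag (singleton blocks, the zero tail, $j=p$), all of which you handle correctly; what it buys is an explicit description of the solution as blockwise averages of $y-\lambda$, which is essentially the pool-adjacent-violators structure that the paper only establishes later via Lemma~\ref{07171426} and Algorithm~\ref{07171128}. The paper's averaging trick is the more economical path to the stated claim.
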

\begin{proof}
  Fix $j\in\{1,\ldots,p\}$ and suppose that $\vw\in\mathbb{R}^p$ is
  feasible vector of problem (\ref{07151335}) such that
  ${\sum_{i=1}^j(y_i-\lambda_i)<\sum_{i=1}^j w_i}$ and $w_j>w_{j+1}$,
  with the convention that $w_{p+1}=0$. There exists $\varepsilon>0$,
  such as
\begin{equation}
\label{07151501}
\sum_{i=1}^j(y_i-\lambda_i)<\bigg(\sum_{i=1}^j w_i\bigg)-\varepsilon\ \  \textrm{and}\ \  w_j-\varepsilon>w_{j+1}+\varepsilon.
\end{equation}
Define $\vw_{\varepsilon}\in\mathbb{R}^p$ by putting
$(w_{\varepsilon})_j:=w_j-\varepsilon$,
$(w_{\varepsilon})_{j+1}:=w_{j+1}+\varepsilon$ and
$(w_{\varepsilon})_i:=w_i$ for $i\notin\{j,j+1\}$. Thanks to
(\ref{07151501}), $\vw_{\varepsilon}$ is feasible (note that
$\sum_{i=1}^k w_i=\sum_{i=1}^k(w_{\varepsilon})_i$ for $k\neq
j$). Now, with convention $\lambda_{p+1}:=0$, it holds
$\vlam^T w-\vlam^Tw_{\varepsilon}=\varepsilon(\lambda_j-\lambda_{j+1})>0$,
which shows that $\vw$ is not optimal.

To prove ii), let $\vw$ be a feasible point such that
$y_j-w_j<y_{j+1}-w_{j+1}$ for some $j\in\{1,\ldots,p\}$. Considering
the case $j=1$, from the feasibility of $\vw$ we get
$y_1-w_1<\frac{(y_1-w_1)+(y_2-w_2)}{2}\leq\frac{\lambda_1+\lambda_2}{2}<
\lambda_1$.  For $j\in\{2,\ldots,p\}$ we have
$\sum_{i=1}^{j-1}(y_i-w_i)\leq\sum_{i=1}^{j-1}\lambda_i$,
$\sum_{i=1}^{j+1}(y_i-w_i)\leq\sum_{i=1}^{j+1}\lambda_i$ (with
$\lambda_{p+1}:=0$). Adding both sides of these inequalities and
dividing by $2$ yields
\begin{align*}
&  \sum_{i=1}^{j-1}(y_i-w_i)+\frac{(y_j-w_j)+(y_{j+1}-w_{j+1})}{2} \\
& \leq \sum_{i=1}^{j-1}\lambda_i+\frac{\lambda_j+\lambda_{j+1}}{2} 
<\sum_{i=1}^j\lambda_i.
\end{align*}
Due to the assumption $y_j-w_j<y_{j+1}-w_{j+1}$, it follows that 
\begin{align*}
&\sum_{i=1}^j(y_i-w_i) \\
& <\sum_{i=1}^{j-1}(y_i-w_i)+\frac{(y_j-w_j)+(y_{j+1}-w_{j+1})}{2} 
<\sum_{i=1}^j\lambda_i.
\end{align*}
To sum up, we always have $\sum_{i=1}^j(y_i-\lambda_i)<\sum_{i=1}^j
w_i$. Moreover, $y_j\geq y_{j+1}$ and $y_j-w_j<y_{j+1}-w_{j+1}$ give
that $w_j>w_{j+1}$. Therefore, from (i), the vector $\vw$ can not be
optimal.
\end{proof}

We now show that the LP~(\ref{07151335}) has a unique solution.  For
$k\in\mathbb{N}$, define $k \times k$ upper and lower triangular
matrices $\vS_k$ and $\vV_k$ as
\begin{equation}
\label{07221432}
\begin{aligned}
\big(S_k\big)_{j,l}&=
\left\{ \begin{array}{rl}
1,&j\leq l\\
0, & otherwise\\
\end{array} \right.,
\\
\big(V_k\big)_{j,l}&=
\left\{ \begin{array}{rl}
1,&l=j\\
-1, & j=l+1\\
0, & otherwise
\end{array} \right..
\end{aligned}
\end{equation}
It could be easily verified, that $\vS_k^{-1}=\vV_k^T$. We are now ready to prove the following lemma.
\begin{lemma}
  The LP~(\ref{07151335}) has a unique solution for $\lambda_1 >
  \cdots > \lambda_p > 0$.
\end{lemma}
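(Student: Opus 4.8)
The plan is to get existence from a routine linear‑programming argument and uniqueness from a midpoint reduction combined with Proposition~\ref{07152152}. Existence is immediate: on the feasible set of~(\ref{07151335}) one has $\vw\succeq\mathbf0$ and $\vlam\succ\mathbf0$, so the objective $\vlam^T\vw$ is bounded below by $0$, and the feasible set is nonempty (any sufficiently steeply decreasing positive sequence satisfies all the constraints), hence the minimum is attained. For uniqueness, working under the standing assumptions $\lambda_1>\dots>\lambda_p>0$ and $y_1\ge\dots\ge y_p\ge0$, I would suppose $\vw^{(0)}$ and $\vw^{(1)}$ are both optimal and set $\vw:=\tfrac12(\vw^{(0)}+\vw^{(1)})$, which is again feasible and optimal.

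First I would pin down the active constraints of the midpoint. Write $b_j:=\sum_{i=1}^j(y_i-\lambda_i)$ and call an index $j$ active for a feasible $\vv$ when $\sum_{i=1}^j v_i=b_j$; let $A(\vv)$ denote its active set. Feasibility gives $\sum_{i=1}^j w^{(0)}_i\ge b_j$ and $\sum_{i=1}^j w^{(1)}_i\ge b_j$, so their average equals $b_j$ iff both do, whence $A(\vw)=A(\vw^{(0)})\cap A(\vw^{(1)})$. Applying Proposition~\ref{07152152}(i) to the optimal point $\vw$ (with the conventions $w_{p+1}=0$, $y_{p+1}=0$), for each $j$ either $j\in A(\vw)$ or $w_j=w_{j+1}$. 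Writing $A(\vw)=\{k_1<\dots<k_r\}$ and $k_0:=0$, the midpoint is therefore constant on each block $B_t:=\{k_{t-1}+1,\dots,k_t\}$, and subtracting the active identities at $k_{t-1}$ and $k_t$ (the one at $k_0=0$ being vacuous) shows that this constant equals $v_t=\tfrac1{k_t-k_{t-1}}\sum_{i\in B_t}(y_i-\lambda_i)$; on a trailing segment $\{k_r+1,\dots,p\}$ (if $k_r<p$), or on all of $\{1,\dots,p\}$ if $A(\vw)=\varnothing$, the midpoint vanishes.

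Next I would transfer this rigidity to the two optimizers. Fix a block $B=\{k'+1,\dots,k''\}$ with midpoint value $v$. Since $k',k''\in A(\vw)\subseteq A(\vw^{(0)})$, the same subtraction gives $\sum_{i\in B}w^{(0)}_i=|B|\,v$, and likewise for $\vw^{(1)}$; moreover $w^{(1)}_i=2v-w^{(0)}_i$ on $B$ because $\vw\equiv v$ there. Feasibility makes $w^{(0)}$ non‑increasing on $B$, while $w^{(1)}$ non‑increasing on $B$ together with $w^{(1)}_i=2v-w^{(0)}_i$ forces $w^{(0)}$ non‑decreasing on $B$; hence $w^{(0)}$ is constant on $B$, equal to $v$ by the sum identity, so $w^{(0)}_i=w^{(1)}_i=v$ on $B$. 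On a trailing zero segment, $\vw^{(0)},\vw^{(1)}\succeq\mathbf0$ with average $\mathbf0$, so both vanish there. Collecting blocks gives $\vw^{(0)}=\vw^{(1)}$, which is the asserted uniqueness.

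The crux is the midpoint step: recognizing that $A(\vw)=A(\vw^{(0)})\cap A(\vw^{(1)})$ is exactly what lets Proposition~\ref{07152152}(i) impose a block structure on $\vw$ that then propagates back to the two optimizers. The only thing needing care is the bookkeeping at the block endpoints and the artificial index $p{+}1$, which is handled by the stated conventions; no extra estimates are required beyond the ordering/averaging facts already recorded. An alternative would be to substitute $\vw=\vS_p\vu$ with $\vS_p$ as in~(\ref{07221432}), recasting $\{w_1\ge\dots\ge w_p\ge0\}$ as $\{\vu\succeq\mathbf0\}$ and exposing the same block structure, but the midpoint route is shorter.
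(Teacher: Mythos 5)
Your proof is correct, but it takes a genuinely different route from the paper's. The paper also starts from Proposition~\ref{07152152}(i), but uses it to conclude that every optimal $\vw^*$ satisfies, for each $j$, either $\vs_j^T\vw^*=c_j$ or $\vv_j^T\vw^*=0$ (with $\vs_j,\vv_j$ the columns of $\vS_p,\vV_p$ from~(\ref{07221432})); since $\big[(\vS_p)_{I_1}\big|(\vV_p)_{I_2}\big]$ is nonsingular for every partition $\{I_1,I_2\}$, the solution set is contained in a finite set, and a nonempty LP solution set is convex, hence a finite one is a singleton. You instead average two putative solutions, observe that the midpoint's active set is the intersection of the two active sets, let Proposition~\ref{07152152}(i) impose a block-constant structure on the midpoint, and then use the block-sum identities together with the monotonicity constraints (applied to both $\vw^{(0)}$ and its reflection $2v-\vw^{(0)}$) to force both optimizers to equal the block-constant midpoint. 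Your argument is more elementary --- it avoids the linear-algebraic nonsingularity claim and the ``finite convex set'' step --- and it yields as a by-product an explicit description of the solution as a block average of $y_i-\lambda_i$ over the segments between active cumulative constraints (mirroring Algorithm~\ref{07171128}); it also addresses existence, which the paper leaves implicit. The paper's argument is shorter given that $\vS_p$ and $\vV_p$ are introduced anyway for the KKT analysis that follows. Both proofs hinge on the same structural disjunction, so neither requires machinery the other lacks.
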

\begin{proof}
  Denote the columns of matrices $\vS_p$ and $\vV_p$ by
  $\vs_1,\ldots,\vs_p$ and $\vv_1,\ldots,\vv_p$, respectively. From
  the condition $\vS_p\vV_p^T=\mathbf{I}_p$, we get that $\vs_i$ is
  orthogonal to $\vv_j$ whenever $i\neq j$. Hence, since $\vS_p$ and
  $\vV_p$ are nonsingular, the matrix $\big[(\vS_p)_{I_1}\big|
  (\vV_p)_{I_2}\big]$ is nonsingular as well, for any partition
  $\{I_1,I_2\}$ of the set $\{1,\ldots,p\}$. This means that the set
\begin{align*}
\text{SOL} &:=\Big\{\vw\in\mathbb{R}^p:\ \big[(\vS_p)_{I_1}\big| (\vV_p)_{I_2}\big]^T \vw=c^{I_1,I_2}, \\
 &\quad  I_1\cup I_2=\{1,\ldots,p\},\ I_1\cap I_2=\emptyset\Big\}
\end{align*}
is finite, where $c^{I_1,I_2}_j:=\sum_{i=1}^j(y_i-\lambda_i)$, for $j\in I_1$, and $c^{I_1,I_2}_j:=0$, for $j\in I_2$. Let $\vw^*$ be any solution to the
ODS~(\ref{07151335}). From Proposition \ref{07152152} i), for all
$j\in\{1,\ldots,p\}$ we have $\vs_j^T\vw^*=0$ or $\vv_j^T\vw^*=0$,
which gives that $\vw^*\in \text{SOL}$. Since a feasible LP can have
either one of infinitely many solutions, this immediately gives the
claim.
\end{proof}


\begin{lemma}
\label{07202218}
Consider perturbed version of the LP~(\ref{07151335}) with the same
feasible set but with a new objective function
$f_{\mu}(\vw):=\vlam^T\vw+\frac12\mu\|\vw\|^2_2$ with $\mu>0$. Let
$\vw^*$ be solution to perturbed problem (which is unique thanks to
strong convexity). Then for any $y_1\geq\ldots\geq y_p\geq 0$ and
$\lambda_1\geq\ldots\geq \lambda_p\geq 0$ (i.e. coefficients of
$\vlam$ do not have to be strictly decreasing and positive), it occurs
${w^*_j=w_{j+1}^*}\ \ \textrm{or}\ \
{\sum_{i=1}^j(y_i-\lambda_i)=\sum_{i=1}^j w^*_i}$ for all
$j\in\{1,\ldots,p\}$ (with $w^*_{p+1}:=0$ and $y_{p+1}:=0$).
\end{lemma}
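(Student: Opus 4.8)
The plan is to mimic the proof of Proposition~\ref{07152152}(i) almost verbatim, arguing by contradiction, with the strictly convex term $\tfrac12\mu\|\cdot\|_2^2$ taking over the role that strict monotonicity of $\vlam$ played before. Suppose, for contradiction, that there is an index $j\in\{1,\dots,p\}$ with both $\sum_{i=1}^j(y_i-\lambda_i)<\sum_{i=1}^j w^*_i$ and $w^*_j>w^*_{j+1}$ (using the conventions $w^*_{p+1}=0$, $\lambda_{p+1}=0$, $y_{p+1}=0$). Set $\eta:=\sum_{i=1}^j w^*_i-\sum_{i=1}^j(y_i-\lambda_i)>0$ and $\rho:=w^*_j-w^*_{j+1}>0$, and for $\varepsilon\in\bigl(0,\min\{\eta,\rho/2\}\bigr]$ define $\vw_\varepsilon$ by $(w_\varepsilon)_j:=w^*_j-\varepsilon$, $(w_\varepsilon)_{j+1}:=w^*_{j+1}+\varepsilon$ (the second assignment being vacuous when $j=p$), and $(w_\varepsilon)_i:=w^*_i$ otherwise.

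First I would verify that $\vw_\varepsilon$ lies in the feasible set of \eqref{07151335}. The partial sums $\sum_{i=1}^k(w_\varepsilon)_i$ agree with $\sum_{i=1}^k w^*_i$ for every $k\neq j$ (for $k>j$ the $-\varepsilon$ at coordinate $j$ cancels the $+\varepsilon$ at coordinate $j+1$), and equal $\sum_{i=1}^k w^*_i-\varepsilon$ for $k=j$; hence every affine constraint $\sum_{i=1}^k(y_i-\lambda_i)\le\sum_{i=1}^k w_i$ continues to hold, the case $k=j$ using $\varepsilon\le\eta$. Monotonicity and nonnegativity are preserved since $w^*_{j-1}\ge w^*_j\ge w^*_j-\varepsilon\ge w^*_{j+1}+\varepsilon\ge w^*_{j+2}\ge\cdots\ge 0$, where $w^*_j-\varepsilon\ge w^*_{j+1}+\varepsilon$ follows from $\varepsilon\le\rho/2$. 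Next I would compute the change in objective: $f_\mu(\vw^*)-f_\mu(\vw_\varepsilon)=\vlam^T(\vw^*-\vw_\varepsilon)+\tfrac12\mu\bigl(\|\vw^*\|_2^2-\|\vw_\varepsilon\|_2^2\bigr)=\varepsilon(\lambda_j-\lambda_{j+1})+\mu\varepsilon(w^*_j-w^*_{j+1})-\mu\varepsilon^2$ (and for $j=p$ it reads $\varepsilon\lambda_p+\mu\varepsilon w^*_p-\tfrac12\mu\varepsilon^2$, using $\lambda_{p+1}=0$). Since $\vlam$ is nonincreasing, $\lambda_j-\lambda_{j+1}\ge 0$; since $\mu>0$ and $w^*_j-w^*_{j+1}=\rho>0$, the remaining contribution $\mu\varepsilon(\rho-\varepsilon)$ is strictly positive for $\varepsilon<\rho$. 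Thus $f_\mu(\vw_\varepsilon)<f_\mu(\vw^*)$, contradicting the optimality of $\vw^*$; hence for every $j$ either $w^*_j=w^*_{j+1}$ or $\sum_{i=1}^j(y_i-\lambda_i)=\sum_{i=1}^j w^*_i$, as claimed.

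I do not expect a serious obstacle here; the only points needing care are (i) checking that the local two-coordinate perturbation stays feasible, which is immediate once one observes that the constrained partial sums change only at level $k=j$, and (ii) the boundary case $j=p$, where there is no coordinate $j+1$ and the conventions $w^*_{p+1}=\lambda_{p+1}=y_{p+1}=0$ make the identical computation go through. The one genuinely new ingredient relative to Proposition~\ref{07152152}(i) is that, even when $\lambda_j=\lambda_{j+1}$ so that the linear part contributes nothing, the strictly convex quadratic term by itself produces a strict decrease of $f_\mu$; this is precisely why the conclusion persists after dropping the hypothesis $\lambda_1>\cdots>\lambda_p>0$ and only assuming $\lambda_1\ge\cdots\ge\lambda_p\ge 0$.
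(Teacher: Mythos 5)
Your proof is correct and follows essentially the same route as the paper's: both perturb the optimal point by transferring $\varepsilon$ from coordinate $j$ to coordinate $j+1$ (the construction of Proposition~\ref{07152152}(i)) and show the objective strictly decreases because the quadratic term contributes $\mu\varepsilon\bigl((w^*_j-w^*_{j+1})-\varepsilon\bigr)>0$ even when $\lambda_j=\lambda_{j+1}$. Your write-up is somewhat more careful than the paper's (explicit feasibility check and the boundary case $j=p$), but the argument is the same.
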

\begin{proof}
  Take any feasible $\vw$ and assume that for some
  $j\in\{1,\ldots,p\}$ we have $w_j>w_{j+1}$ and
  ${\sum_{i=1}^j(y_i-\lambda_i)<\sum_{i=1}^j w_i}$. Let
  $\vw_{\varepsilon}$ be feasible vector constructed as in proof of
  Proposition \ref{07152152} i). Then
\begin{align*}
&f_{\mu}(\vw)-f_{\mu}(\vw_{\varepsilon})\\
&= \varepsilon(\lambda_j-\lambda_{j+1})+\frac12\mu\big(w_j^2+w_{j+1}^2\big)\\
& \quad -\frac12\mu\big((w_j-\varepsilon)^2+(w_{j+1}+\varepsilon)^2\big)\\
&=\varepsilon(\lambda_j-\lambda_{j+1})+\mu\varepsilon\big((w_j-w_{j+1})-\varepsilon\big)>0,
\end{align*}
for sufficiently small $\varepsilon>0$. Hence, $\vw$ can not be optimal.
\end{proof}

\begin{lemma}
\label{07171426}
Consider perturbed version of the LP (\ref{07151335}) as in the
previous lemma, with objective function $f_{\mu}(\vw)$
and a solution $\vw^*$. Moreover, assume that for some
$j,l\in\{1,\ldots,p\}$, $j<l$ we have $y_j-\lambda_j\leq
y_{j+1}-\lambda_{j+1}\leq \ldots \leq y_l-\lambda_l$. Let $I_1$ denote
the set $\{j,\ldots,l\}$ and $\overline{\vw_{I_1}}$ denote the
arithmetic mean of subvector $\vw_{I_1}$, for any
$\vw\in\mathbb{R}^p$. Then for any $y_1\geq\ldots\geq y_p\geq 0$ and
$\lambda_1\geq\ldots\geq \lambda_p\geq 0$: \vspace{5pt}
\newline
\indent i) solution is constant on the segment $I_1$,
i.e. $w_j^*=w_{j+1}^*=\ldots=w_l^*$, 
\newline 
\indent ii) $\vw^*$ is solution to perturbed problem with $\vy$ and $\vlam$ replaced
respectively by $\widetilde{\vy}$ and $\widetilde{\vlam}$,
\vspace{5pt}\newline where
\begin{equation}
\label{07211047}
\begin{array}{lllr}
\widetilde{\lambda}_i:=\left\{\begin{array}{ll} \overline{\lambda_{I_1}}, & i\in I_1\\ \lambda_i,&\textrm{otherwise} \end{array} \right., 
& \widetilde{y}_i:=\left\{\begin{array}{ll} \overline{y_{I_1}}, & i\in I_1\\ y_i,&\textrm{otherwise} \end{array} \right..
\end{array}
\end{equation}
\end{lemma}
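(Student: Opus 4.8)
The plan is to establish (i) by a single \emph{block-averaging} move, and then deduce (ii) from it together with a short nesting argument for the feasible sets.

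For (i), suppose (WLOG $j<l$, else there is nothing to prove) that $\vw^*$ is not constant on $I_1=\{j,\dots,l\}$, and let $\widehat{\vw}$ agree with $\vw^*$ off $I_1$ and equal the common value $\bar w:=\frac{1}{l-j+1}\sum_{i=j}^{l}w^*_i$ on $I_1$. The first task is to show $\widehat{\vw}$ is feasible for~(\ref{07151335}). Monotonicity is immediate, $w^*_{j-1}\ge\bar w\ge w^*_{l+1}\ge0$, since $\bar w$ lies between the largest and smallest entries of the non-increasing block $\vw^*_{I_1}$. The prefix constraints $\sum_{i\le k}w_i\ge\sum_{i\le k}(y_i-\lambda_i)$ are unchanged for $k<j$ and $k\ge l$; writing $d_i:=y_i-\lambda_i$ and $\phi(k):=\sum_{i\le k}\widehat w_i-\sum_{i\le k}d_i$, for $j\le k\le l$ the increments $\phi(k)-\phi(k-1)=\bar w-d_k$ are non-increasing in $k$ because $d_j\le\cdots\le d_l$ by hypothesis, so $\phi$ is discretely concave on $\{j-1,\dots,l\}$ and hence lies above the chord joining its endpoint values $\phi(j-1)\ge0$ and $\phi(l)=\sum_{i\le l}w^*_i-\sum_{i\le l}d_i\ge0$ (averaging preserves the block sum, and both endpoint inequalities come from feasibility of $\vw^*$); thus $\phi(k)\ge0$ throughout and $\widehat{\vw}$ is feasible. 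Comparing objectives, the quadratic part strictly decreases, $\|\widehat{\vw}\|^2<\|\vw^*\|^2$, by strict convexity of $\|\cdot\|^2$ and non-constancy of $\vw^*_{I_1}$, while the linear part does not increase: $\sum_{i\in I_1}\lambda_i\bar w=\overline{\lambda_{I_1}}\sum_{i\in I_1}w^*_i\le\sum_{i\in I_1}\lambda_i w^*_i$ by the Chebyshev sum inequality for the similarly ordered non-increasing sequences $(\lambda_i)_{i\in I_1}$ and $(w^*_i)_{i\in I_1}$. Hence $f_\mu(\widehat{\vw})<f_\mu(\vw^*)$, a contradiction, which proves (i).

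For (ii), note that on $I_1$ the modified data satisfy $\widetilde y_i-\widetilde\lambda_i=\overline{y_{I_1}}-\overline{\lambda_{I_1}}$, a constant (hence non-decreasing) sequence, and $\widetilde{\vy},\widetilde{\vlam}$ stay non-increasing and nonnegative, so part (i) applies to the perturbed problem with data $(\widetilde{\vy},\widetilde{\vlam})$: its unique solution $\widetilde{\vw}$ is also constant on $I_1$. Then I would invoke two facts. (a) The two objectives coincide on vectors constant on $I_1$, since $f_\mu(\vw)-\widetilde{f}_\mu(\vw)=\sum_{i\in I_1}(\lambda_i-\overline{\lambda_{I_1}})w_i$ vanishes there; in particular $f_\mu(\vw^*)=\widetilde{f}_\mu(\vw^*)$ and $f_\mu(\widetilde{\vw})=\widetilde{f}_\mu(\widetilde{\vw})$ by the constancy just established. (b) The feasible sets are nested: the prefix-average fact gives $\sum_{i\le k}(\widetilde y_i-\widetilde\lambda_i)\ge\sum_{i\le k}(y_i-\lambda_i)$ for every $k$, so the feasible set of the modified problem lies inside the original one; conversely $\vw^*$ is feasible for the modified problem, since the gap $\sum_{i\le k}w^*_i-\sum_{i\le k}(\widetilde y_i-\widetilde\lambda_i)$ is affine in $k$ on $\{j-1,\dots,l\}$ (because $\vw^*$ is constant on $I_1$) and nonnegative at $k=j-1$ and $k=l$, while for $k<j$ or $k\ge l$ it equals the original gap. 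Chaining $\widetilde{f}_\mu(\widetilde{\vw})=f_\mu(\widetilde{\vw})\ge f_\mu(\vw^*)=\widetilde{f}_\mu(\vw^*)\ge\widetilde{f}_\mu(\widetilde{\vw})$ forces equality throughout, so $\vw^*$ solves the perturbed problem with $(\widetilde{\vy},\widetilde{\vlam})$, which is (ii).

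The step I expect to be the real obstacle is the feasibility check inside (i): only one-sided prefix constraints are available, and averaging over $I_1$ lowers the partial sums in the interior of the block, so feasibility of $\widehat{\vw}$ is not automatic; the hypothesis $d_j\le\cdots\le d_l$ is precisely what makes $\phi$ concave and thereby keeps the averaged point feasible. The remaining pieces — the Chebyshev/strict-convexity objective comparison and the nesting bookkeeping in (ii) — are routine. One could alternatively derive (i) from Lemma~\ref{07202218}, but the direct block-averaging argument above is self-contained.
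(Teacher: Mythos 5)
Your proof is correct, but it takes a genuinely different route from the paper's, most visibly in part (i). The paper proves (i) by a local two-coordinate perturbation: it assumes $w_k^*>w_{k+1}^*$ for some $k$ in the block, averages the two adjacent prefix constraints at $k-1$ and $k+1$ to deduce that the $k$-th prefix constraint is strictly slack, and then invokes Lemma~\ref{07202218} (tightness-or-equality) to get a contradiction. You instead average the \emph{entire} block in one move and verify feasibility of the averaged point directly via the discrete concavity of the prefix-gap $\phi$ on $\{j-1,\dots,l\}$ (which is exactly where the hypothesis $d_j\le\cdots\le d_l$ enters), then beat $\vw^*$ using strict convexity of the quadratic term plus Chebyshev's sum inequality for the linear term; this makes (i) self-contained and independent of Lemma~\ref{07202218}, at the price of the slightly more delicate chord argument you correctly flag as the crux. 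For part (ii) the two arguments are closer in spirit but not identical in strength: the paper proves the full set equality $D=\widetilde D$ (both inclusions, for arbitrary feasible points), whereas you prove only $\widetilde D\subset D$ together with the single membership $\vw^*\in\widetilde D$, exploiting that $\vw^*$ is constant on $I_1$ so its prefix gap is affine there; this is weaker but entirely sufficient for the chaining $\widetilde f_\mu(\widetilde\vw)=f_\mu(\widetilde\vw)\ge f_\mu(\vw^*)=\widetilde f_\mu(\vw^*)\ge\widetilde f_\mu(\widetilde\vw)$, and both proofs rely on the same observation that $f_\mu$ and $\widetilde f_\mu$ agree on vectors constant over $I_1$. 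Net effect: your version is more elementary and economical; the paper's version yields the stronger by-product that the two feasible polytopes coincide.
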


\begin{proof}
  To prove i), suppose that $w_k^*>w_{k+1}^*$ for
  $k\in\{j,\ldots,l-1\}$. Using the convention that
  $y_0:=w_0^*:=\lambda_0:=0$, from feasibility of $\vw^*$ we have
$$
\sum_{i=0}^{k-1}(y_i-\lambda_i)\leq\sum_{i=0}^{k-1}w^*_i\ \ \textrm{and}\ \ \sum_{i=0}^{k+1}(y_i-\lambda_i)\leq\sum_{i=0}^{k+1}w^*_i.
$$
Adding both sides of these inequalities and dividing by $2$ yields
\begin{align*}
&\sum_{i=0}^{k-1}(y_i-\lambda_i)+\frac{(y_k-\lambda_k)+(y_{k+1}-\lambda_{k+1})}{2}\\
&\leq\sum_{i=0}^{k-1}w^*_i +\frac{w^*_k+w^*_{k+1}}{2}<\sum_{i=1}^k w^*_i.
\end{align*}
Therefore
\begin{align*}
\sum_{i=1}^k(y_i-\lambda_i) &\leq\sum_{i=0}^{k-1}(y_i-\lambda_i)+\frac{(y_k-\lambda_k)+(y_{k+1}-\lambda_{k+1})}{2}\\
&<\sum_{i=1}^k w^*_i,
\end{align*}
which yields contradiction with Lemma \ref{07202218}.

To show that the aforementioned modification of $\vlam$ and $\vy$ does
not affect the solution, we will first show that feasible sets of both
problems are identical. Let $D$ and $\widetilde{D}$ denote the
feasible sets for, respectively, initial parameters ($\vy,\vlam)$ and
($\widetilde{\vy},\widetilde{\vlam})$ given by (\ref{07211047}). We
start with proving that $\widetilde{D}\subset D$. Let $\vw$ be any
vector from $\widetilde{D}$. Since
$\sum_{i=1}^k(y_i-\lambda_i)=\sum_{i=1}^k(\widetilde{y}_i-\widetilde{\lambda}_i)$,
for $k<j$ and $k\geq l$, the task reduces to showing that
$\sum_{i=1}^k w_i\geq\sum_{i=1}^k(y_i-\lambda_i)$ for any
$k\in\{j,\ldots,l-1\}$. Since $\{y_i-\lambda_i\}_{i=j}^l$ increases,
we simply have
$\overline{y_{I_1}}-\overline{\lambda_{I_1}}\geq\frac1{k-j+1}\sum_{i=j}^k(y_i-\lambda_i)$. Using
the convention $y_0:=\lambda_0:=0$, we get
\begin{align*}
&  \sum_{i=1}^k w_i\geq \sum_{i=1}^k(\widetilde{y}_i-\widetilde{\lambda}_i) \\
&= \sum_{i=0}^{j-1}(y_i-\lambda_i) + (k-j+1)\cdot\Big(\overline{y_{I_1}}-\overline{\lambda_{I_1}}\Big)\geq\sum_{i=1}^k(y_i-\lambda_i).
\end{align*}
Now, take any $\vw\in D$. After defining $w_0:=0$ we have
${(l-k)\cdot\sum_{i=0}^{j-1} w_i\geq(l-k)\cdot\sum_{i=0}^{j-1}(y_i-\lambda_i)}$
and
${(k-j+1)\cdot\sum_{i=0}^l w_i\geq(k-j+1)\cdot\sum_{i=0}^l(y_i-\lambda_i)}$.
Adding both sides of these inequalities and dividing by $(l-j+1)$ yields
$$
\sum_{i=0}^{j-1}w_i+(k-j+1)\cdot\overline{\vw_{I_1}}
\geq \sum_{i=0}^{j-1}(y_i-\lambda_i)+(k-j+1)\cdot\big(\overline{\vy_{I_1}}-\overline{\vlam_{I_1}}\big).
$$
From the monotonicity of $\vw$, it occurs $\sum_{i=j}^kw_i\geq(k-j+1)\cdot\overline{\vw_{I_1}}$. Consequently,
\begin{align*}
&\sum_{i=1}^kw_i\geq \sum_{i=0}^{j-1}w_i +(k-j+1)\cdot\overline{\vw_{I_1}}\\
&\geq \sum_{i=0}^{j-1}(y_i-\lambda_i)+(k-j+1)\cdot\big(\overline{\vy_{I_1}}-\overline{\vlam}_{I_1}\big)\\
&=\sum_{i=1}^k(\widetilde{y}_i-\widetilde{\lambda}_i)
\end{align*}
and $D\subset\widetilde{D}$ as a result.

Suppose now that $\vw^*$ is solution for initial parameters
($\vy,\vlam$), $\widetilde{\vb}^*$ is solution for
($\widetilde{\vy},\widetilde{\vlam})$ and that
\begin{equation}
\label{07211530}
\frac12\mu\|\widetilde{\vw}^*\|^2_2+\widetilde{\vlam}^T\widetilde{\vw}^*<\frac12\mu\|\vw^*\|^2_2+\widetilde{\vlam}^T\vw^*
\end{equation}
From i) we have $w_j^*=\ldots=w_l^*$ and
$\widetilde{w}_j^*=\ldots=\widetilde{w}_l^*$, which yields
$\widetilde{\vlam}^T\widetilde{\vw}^*=\vlam^T\widetilde{\vw}^*$ and
$\widetilde{\vlam}^Tb^*=\vlam^T\vw^*$. Therefore from
(\ref{07211530}) we have $f_{\mu}(\widetilde{\vw}^*)< f_{\mu}(\vw^*)$,
which contradicts the optimality of $\vw^*$.
\end{proof}

\subsubsection*{Proof of Theorem~\ref{thm:ods}}

Without loss of generality we can assume that we are starting with
ordered and nonzero observations. Basing on Propositions
\ref{07151336} and \ref{07152152}, each solution to (\ref{eq:ods}) is
also a solution to (\ref{07151335}). Since such solution is unique,
this immediately gives the uniqueness of (\ref{eq:ods}). Consider
perturbed version of (\ref{07151335}), with objective $f_{\mu}$ for
sufficiently small $\mu$, such as solutions to (\ref{07151335}) and
its perturbed version coincide (the existence of such $\mu$ is
guaranteed by \citep[Theorem 1]{BecC11}.
Modifying $\vy$ and $\vlam$ as in
  Algorithm~\ref{07171128}, after finite number of iterations we
  finish with converted $\vy$ and $\vlam$ such as
\begin{equation}
\label{07171430}
y_1-\lambda_1\geq\ldots\geq y_p-\lambda_p.
\end{equation}%
From Lemma \ref{07171426}, we know that such modifications do not have
an impact on the solution. Therefore, it is enough to show that, when
assumption (\ref{07171430}) is in use, the solution to SLOPE,
i.e. $\vw_S = (\vy-\vlam)_+$, is also the unique solution, to
(\ref{07151335}).

With $\vS_p$ and $\vV_p$ defined in (\ref{07221432}), the perturbed problem
has following convex optimization form with affine inequality
constraints
\begin{equation}
\label{07201716}
\begin{aligned}
\min_\vw & \;\;  \frac12\mu\|\vw\|_2^2+\vlam^T\vw\\
\textrm{s.t.} &\;\; \left\{\begin{array}{l}\vS_p^T(\vy-\vlam-\vw) \preceq 0,\\ -\vV_p^T\vw \preceq  0\end{array}\right..
\end{aligned}
\end{equation}
If $\vy-\vlam \prec 0$, put $I_1:=\emptyset$,
$I_2:=\{1,\ldots,p\}$. Otherwise, let $s$ be the maximal index such
that $y_s-\lambda_s\geq0$ and define $I_1:=\{1,\ldots,s\}$,
$I_2:=\{1,\ldots,p\}\backslash I_1$. The KKT conditions for
(\ref{07201716}) are given by
$$
\begin{cases}
  \mu \vw+\vlam=\vS_p\vnu+\vV_p\vtau, \;\; \text{(Stationary)}\\
  \nu_i\big(\vS_p^T(\vy-\vlam-\vw)\big)_i=0,\ \tau_i\big(\vV_p^T\vw\big)_i=0 \\
   \qquad \textrm{for each}\ i\in\{1,\ldots,p\}, \;\; \text{(Complementary slackness)}\\ 
  \vS_p^T(\vy-\vlam-\vw)\preceq 0,\ -\vV_p^T\vw\preceq 0, \;\; \text{(Primal feasibility)}\\
  \vnu\succeq 0,\ \vtau\succeq 0. \;\; \text{(Dual feasibility)}
\end{cases}
$$
We now show that $(\vw^*,\vnu^*,\vtau^*)$ satisfy the KKT conditions,
where $\vw^*=(\vy-\vlam)_+$ and $\vnu^*$, $\vtau^*$ are given
by 
\begin{align*}
\vnu^*_{I_1}&:=\vV_s^T(\mu \vw^*+\vlam)_{I_1}, \; 
\vnu^*_{I_2}:=\mathbf{0}, \; 
\vtau^*_{I_1}:=\mathbf{0}, \;\\
 \vtau^*_{I_2}&:=\vS_{p-s}^T(\mu \vw^*+\vlam)_{I_2}.
\end{align*}
It is easy to see that $\vw^*$ is primal feasible. Since coefficients
of $\mu \vw^*+\vlam$ create a nonnegative and nonincreasing
sequence, we have $\vnu^*\succeq 0,\ \vtau^*\succeq 0$. Moreover,
$(\vS_p)_{I_1}^T(\vy-\vlam-\vw^*)=0$ and $(\vV_p)^T_{I_2}b^*=0$, which shows
that complementary slackness conditions are satisfied. Furthermore, we
have
\begin{align*}
&\vS_p \vnu^*+ \vV_p\vtau^*=\vS_{I_1}\vnu^*_{I_1}+\vV_{I_2}\vtau^*_{I_2} \\
&=\left[\begin{BMAT}(b,2pt,11pt){c0c}{c0c}\vS_s&0\\ 0& \vV_{p-s}\end{BMAT}\right] 
\left[\begin{BMAT}(b,2pt,10pt){c}{c0c}\vnu^*_{I_1}\\ \vtau^*_{I_2}\end{BMAT}\right] = \mu \vw^*+\vlam,
\end{align*}
which shows that stationary condition is met and finishes the proof. \qed

\subsection*{HPE Algorithm}

For a reference, we provide the HPE algorithm. For the GDS problem, we use $f=0$, $L_f=0$, $g_1 = \cF$, and $g_2 = \cG$.

\begin{algorithm*}[!t]
\SetKwComment{tpa}{(}{)}
\SetKwInOut{Input}{input}
\SetKwInOut{Output}{output}
\caption{HPE Algorithm}\label{alg:hpe}
\Input{$(x_0, y_0) \in \mathcal X, \mathcal Y$, $\eta>0$, and $\sigma \in (0,1)$.}

\For {$k=1,2,\dots$} {
/* Solve the HPE error condition */
$$
  (\tilde u, \tilde v, \tilde r^u, \tilde r^v, \tilde\epsilon) = \text{HPE-Error-Cond}(f=\eta f, A=\eta A, g_1 = \eta g_1, g_2 = \eta g_2, (u_0,v_0) = (x_{k-1}, y_{k-1}), L_f = \eta L_f)
$$

/* Update */
\begin{align*}
& (\tilde x_k, \tilde y_k ) := (\tilde u, \tilde v), \;\; 
 \tilde r_k = (\tilde r^x_k, \tilde r^y_k) := \frac{1}{\eta} (\tilde r^u, \tilde r^v), \;\;
 \epsilon_k = \frac{1}{\eta} \tilde\epsilon \\
& x_k = x_{k-1} - \eta \tilde r^x_k, \;\; y_k = y_{k-1} - \eta \tilde r^y_k
\end{align*}

/* Check Convergence */
 }
\end{algorithm*}

\begin{algorithm*}[!t]
\SetKwComment{tpa}{(}{)}
\SetKwInOut{Input}{input}
\SetKwInOut{Output}{output}
\caption{HPE-Error-Cond$(f, A, g_1, g_2, (u_0,v_0), L_f)$ Subroutine}\label{alg:apel}
\Input{$f$, $A$, $g_1$, $g_2$, $(u_0, v_0)$, and $L_f$.}

Set $L = L_f + \|A\|^2$, $t_0 = 0$, $\tilde u_0 = w_0 = P_\Omega(u_0)$, and $\tilde v_0=0$.

\For {$k=1,2,\dots$} {
/* Update */
$$ 
  t_k = t_{k-1} + \frac{1+\sqrt{1+4L t_{k-1}}}{2L}
$$
$$
  u_k = \frac{t_{k-1}}{t_k} \tilde u_{k-1} + \frac{t_k - t_{k-1}}{t_k} w_{k-1}
$$
$$
  v(u_k) = \argmax_v \;\; \langle Au, v \rangle - g_2(v) - \frac{1}{2} \|v-v_0\|^2 = \prox_{g_2} (v_0 + Au) 
$$
$$
 \tilde v_k = \frac{t_{k-1}}{t_k} \tilde v_{k-1} + \frac{t_k - t_{k-1}}{t_k} v(u_k)
$$
$$
 w_k = \argmin_u \;\; \langle A^*\tilde v_k, u \rangle + g_1(u) + \frac{c_k}{2} \|u-u_0\|^2 = \prox_{c_k^{-1}g_1} (u_0 - c_k^{-1} A^*\tilde v_k ), \;\; c_k = 1 + \frac{1}{t_k}
$$
$$
 \tilde u_k = \frac{t_{k-1}}{t_k} \tilde u_{k-1} + \frac{t_k - t_{k-1}}{t_k} w_k
$$

/* Set */
$$
\tilde \epsilon_k = \frac{1}{2t_k} \|\tilde u_k - u_0\|^2, \;\; \tilde r^u_k = c_k(u_0-w_k), \;\; \tilde r^v_k = v_0 - v(\tilde u_k) = v_0 - \prox_{g_2} (v_0 + A \tilde u_k)
$$

/* Check Convergence */
\If{$\|\tilde r^u_k + \tilde u_k - u_0 \|^2 + \| \tilde r^v_k + \tilde v_k - v_0\|^2 + 2\tilde\epsilon_k \le \sigma^2 (\|\tilde u_k - u_0\|^2 + \|\tilde v_k -v_0\|^2)$}{ stop.}
}
\end{algorithm*}

\fi 

\end{document}